\def\eqref#1{equation~\ref{#1}}
\def\1{\bm{1}}
\DeclareMathAlphabet{\mathsfit}{\encodingdefault}{\sfdefault}{m}{sl}
\SetMathAlphabet{\mathsfit}{bold}{\encodingdefault}{\sfdefault}{bx}{n}
\DeclareMathOperator*{\argmax}{arg\,max}
\DeclareMathOperator*{\argmin}{arg\,min}
\theoremstyle{plain}
\newtheorem{theorem}{Theorem}[section]
\newtheorem{lemma}[theorem]{Lemma}
\newtheorem{problem}[theorem]{Problem}
\theoremstyle{definition}
\newtheorem{definition}[theorem]{Definition}
\theoremstyle{remark}
\newtheorem{remark}[theorem]{Remark}
\theoremstyle{aim}
\newtheorem{aim}[theorem]{Aim}
\newcommand{\mymatrix}{
    \left[\begin{gathered}
    \tikzpicture[every node/.style={anchor=south west}]
        \node[minimum width=1cm,minimum height=1cm] at (2.0,0) {$\mathbf{O}_{(\text{len}(\mathbf{y}_{\left\langle t \right\rangle }) - \text{len}(\mathbf{y}))  \times \text{len}(\mathfrak{m}(\mathbf{y}_{\left\langle t-1 \right\rangle }, r_{\left\langle t \right\rangle}))}$};
        \node[minimum width=1.0cm,minimum height=1cm] at (0,1.2) {$\mathbf{0}_{\text{len}(\mathbf{y})}$};
        \node[minimum width=2cm,minimum height=1cm] at (0.72,1.2) {$\mathbf{I}_{\text{len}(\mathbf{y})}$};
        \node[minimum width=2cm,minimum height=1cm] at (3.1,1.2) {$\mathbf{O}_{\text{len}(\mathbf{y}) \times (\text{len}(\mathfrak{m}(\mathbf{y}_{\left\langle t-1 \right\rangle }, r_{\left\langle t \right\rangle}))-\text{len}(\mathbf{y}) - 1)}$};
        \draw[dashed] (0,1.2) -- (8.5,1.2);
        \draw[dashed] (1.1,1.2) -- (1.1,2.4);
        \draw[dashed] (2.3,1.2) -- (2.3,2.4);
    \endtikzpicture
    \end{gathered}\right]
}
\title{Physics-Regulated Deep Reinforcement Learning: Invariant Embeddings}
\author{Hongpeng Cao$^{1\thanks{Equal contribution.}}$, {Yanbing Mao$^{{2*}\thanks{Correspondence to Yanbing Mao: \texttt{hm9062@wayne.edu}.}}$}, ~Lui Sha$^{3}$ \& Marco Caccamo$^{1}$  \\
\hspace{-0.00cm}${}^1$TUM, Germany, ${}^2$WSU, United States, ${}^3$UIUC, United States}
\begin{document}

\maketitle

\begin{abstract}
This paper proposes the Phy-DRL: a physics-regulated deep reinforcement learning (DRL) framework for safety-critical autonomous systems. The Phy-DRL has three distinguished invariant-embedding designs: i) residual action policy (i.e., integrating data-driven-DRL action policy and physics-model-based action policy), ii) automatically constructed safety-embedded reward, and iii) physics-model-guided neural network (NN) editing, including link editing and activation editing. Theoretically, the Phy-DRL exhibits 1) a mathematically provable safety guarantee and 2) strict compliance of critic and actor networks with physics knowledge about the action-value function and action policy. Finally, we evaluate the Phy-DRL on a cart-pole system and a quadruped robot. The experiments validate our theoretical results and demonstrate that Phy-DRL features guaranteed safety compared to purely data-driven DRL and solely model-based design while offering remarkably fewer learning parameters and fast training towards safety guarantee.
\end{abstract}

\section{Introduction}
\subsection{Motivations}
Machine learning (ML) technologies have been integrated into autonomous systems, defining learning-enabled autonomous systems. These have succeeded tremendously in many complex tasks with high-dimensional state and action spaces. However, the recent incidents due to the deployment of ML models overshadow the revolutionizing potential of ML, especially for the safety-critical autonomous systems \cite{brief2021ai}. Developing safe ML is thus more vital today. In the ML community, deep reinforcement learning (DRL) has demonstrated breakthroughs in sequential decision-making in broad areas, ranging from autonomous driving \cite{kendall2019learning} to games \cite{silver2018general}. This motivates us to develop a DRL-based safe learning framework for achieving safe and complex tasks of safety-critical autonomous systems

\begin{wrapfigure}{r}{0.700\textwidth}
\vspace{-0.64cm}
  \begin{center}
    \includegraphics[width=0.700\textwidth]{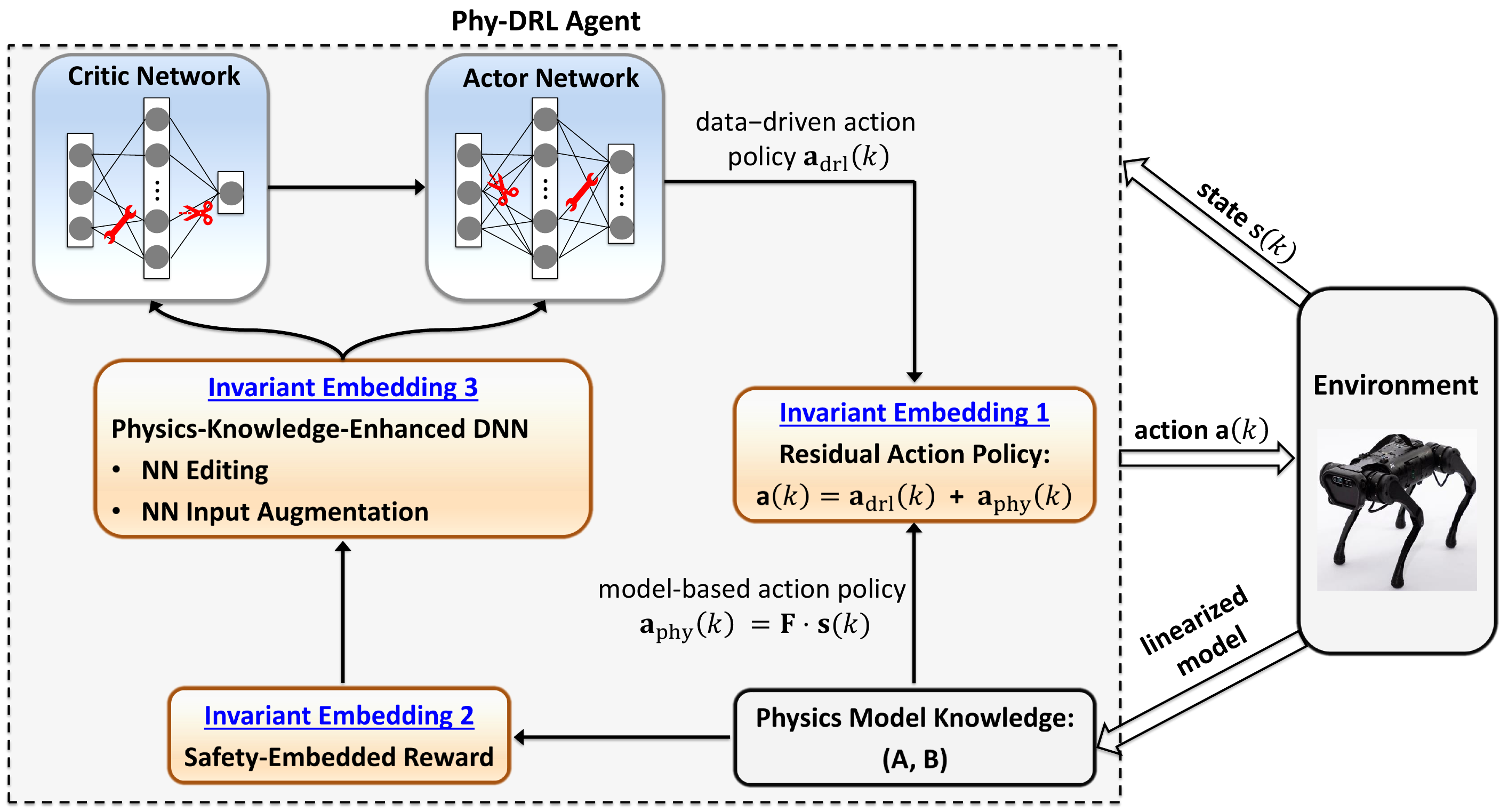}
  \end{center}
  \vspace{-0.42cm}
    \caption{Phy-DRL Framework, applied to a quadruped robot.}
  \vspace{-0.40cm}
  \label{phydrl}
  \vspace{-0cm}
\end{wrapfigure}
Despite the tremendous success of DRL in many autonomous systems for complex decision-making, applying DRL to safety-critical autonomous systems remains a challenging problem. It has a deep root to the action policy of DRL being parameterized by deep neural networks (DNN), whose behaviors are hard to predict~\cite{huang2017adversarial} and verify~\cite{katz2017reluplex}, raising the first safety concern. The second safety concern stems from the purely data-driven DNN that DRL adopts for powerful function approximation and representation learning of action-value function, action policy, and environment states \cite{mnih2015human, silver2016mastering}. Specifically, recent studies revealed that purely data-driven DNN applied to physical systems can infer relations violating physics laws, which sometimes leads to catastrophic consequences (e.g., data-driven blackout owning to violation of physical limits \cite{brief2021ai}).

\subsection{Contributions}
To address the aforementioned safety concerns, we propose the \textbf{Phy-DRL}: a physics-regulated deep reinforcement learning framework with enhanced safety assurance. Depicting in \cref{phydrl}, Phy-DRL has three novel (invariant-embedding) architectural designs: 
\vspace{-0.20cm}
\begin{itemize}
    \item \textbf{Residual Action Policy}, which integrates data-driven-DRL action policy and physics-model-based action policy. 
    \item \textbf{Safety-Embedded Reward}, in conjunction with the Residual Action Policy, empowers the Phy-DRL with a mathematically provable safety guarantee and fast training.
    \item \textbf{Physics-Knowledge-Enhanced Critic and Actor Networks}, whose neural architectures have two key components: i) NN input augmentation for directly capturing hard-to-learn features, and ii) NN editing, including link editing and activation editing, for guaranteeing strict compliance with available knowledge about the action-value function and action policy.  
\end{itemize}

\subsection{Related Work and Open Problems}
\textbf{Residual Action Policy}. The recent research on DRL for controlling autonomous systems has shifted to integrating data-driven DRL and model-based decision-making, leading to a residual action policy diagram. In this diagram, the model-based action policy can guide the exploration of DRL agents during training. Meanwhile, the DRL policy learns to effectively deal with uncertainties and compensate for the model mismatch of the model-based action policy. The aims of existing residual frameworks \cite{rana2021bayesian,li2022equipping,cheng2019control,johannink2019residual} mainly focus on stability guarantee, with the exception being \cite{cheng2019end} on safety guarantee. Moreover, physics models are nonlinear, which poses difficulty in empowering analyzable and verifiable behavior. Furthermore, the model knowledge has not yet been explored to regulate the construction of DRL towards safety guarantee. In summary, the open problems in this domain are 
\begin{problem}
How to design a residual action policy to be a best-trade-off between the model-based action policy and the data-driven-DRL action policy? 
\label{defa1a}
\end{problem}
\begin{problem}
How can the knowledge of the physics model be used to construct a DRL's reward towards a safety guarantee? 
\label{defa1}
\end{problem}

\textbf{Safety-Embedded Reward}. A safety-embedded reward is crucial for DRL to search for policies that are safe. To achieve a safety guarantee, control Lyapunov function (CLF) is the potential safety-embedded reward \cite{perkins2002lyapunov, berkenkamp2017safe,chang2021stabilizing,zhao2023stable,zhao2024nlbac}. Meanwhile, the seminal work \cite{westenbroek2022lyapunov} discovered that if DRL's reward is CLF-like, the systems controlled by a well-trained DRL policy can retain a mathematically-provable stability guarantee. Moving forward, the question of how to construct such a CLF-like reward for achieving a safety guarantee remains open, i.e., 
\begin{problem}
What is the systematic guidance for constructing the safety-embedded reward (e.g., CLF-like reward) for DRL?
\label{defa2}
\end{problem}

\textbf{Knowledge-Enhanced Neural Networks}. The critical flaw of purely data-driven DNN, i.e., violation of physics laws, motivates the emerging research on physics-enhanced DNN. Current frameworks include physics-informed NN~\cite{wang2021physics,willard2021integrating,jia2021physics,lu2021physics,chen2021theory,xu2022physics,karniadakis2021physics,wang2020towards,cranmer2020lagrangian} and physics-guided NN architectures~\cite{muralidhar2020phynet,masci2015geodesic,monti2017geometric,horie2020isometric,wang2021incorporating,li2019learning}. Both use compact partial differential equations (PDEs) for formulating training loss functions and/or architectural components. These frameworks improve consistency degree with prior physics knowledge but remain problematic in applying to DRL. For example, we define DRL's reward in advance. The critic network of DRL is to learn or estimate the expected future reward, also known as the action-value function. Because the action-value function involves unknown future rewards, its compact governing equation is unavailable for physics-informed networks and physics-guided architectures. In summary, only partial knowledge about the action-value function and action policy is available, which thus motivates the open problem: 
\begin{problem}
How do we develop end-to-end critic and actor networks that strictly comply with partially available physics knowledge about the action-value function and action policy? 
\label{defa3}
\end{problem}

\subsection{Summary: Answers to \cref{defa1a}--\cref{defa3}}
The proposed Phy-DRL answers \cref{defa1a}--\cref{defa3} simultaneously. As shown in \cref{phydrl}, the residual diagram of Phy-DRL simplifies the model-based action policy to be an analyzable and verifiable linear one, while offering fast training towards safety guarantee. Meanwhile, the linear model knowledge (leveraged for computing model-based policy) works as a model-based guidance for constructing the safety-embedded reward for DRL towards mathematically provable safety guarantee. Lastly,  the proposed NN editing guarantees the strict compliance of critic and actor networks with partially available physics knowledge about the action-value function and action policy. 

\section{Preliminaries}
% \underline{\textbf{Important Note:} Table~\ref{notation} in Appendix \ref{appnotation} summarizes notations that are used throughout the paper}.
Table~\ref{notation} in Appendix \ref{appnotation} summarizes notations that are used throughout the paper.

\subsection{Dynamics Model of Real Plant}
The generic dynamics model of a real plant can be described by 
\begin{align}
\mathbf{s}(k+1) = {\mathbf{A}} \cdot \mathbf{s}(k) + {\mathbf{B}} \cdot \mathbf{a}(k) + \mathbf{f}(\mathbf{s}(k), \mathbf{a}(k)), ~~~~k \in \mathbb{N}  \label{realsys}
\end{align}
where $\mathbf{f}(\mathbf{s}(k), \mathbf{a}(k)) \in \mathbb{R}^{n}$ is the \underline{unknown} model mismatch, ${\mathbf{A}} \in \mathbb{R}^{n \times n}$ and ${\mathbf{B}} \in \mathbb{R}^{n \times m}$ denote \underline{known}
system matrix and control structure matrix, respectively,  $\mathbf{s}(k) \in \mathbb{R}^{n}$ is the system state, $\mathbf{a}(k) \in \mathbb{R}^{m}$ is the applied control action. The available model knowledge pertaining to real plant (\ref{realsys}) is represented by $\left({\mathbf{A},~ \mathbf{B}} \right)$.

\subsection{Safety Definition}
The considered safety problem stems from safety regulations or constraints on system states, which motives the following definition of safety set ${\mathbb{X}}$.
\begin{align}
\textbf{Safety Set:}~~~{\mathbb{X}} \triangleq \left\{ {\left. {\mathbf{s} \in {\mathbb{R}^n}} \right|\underline{\mathbf{v}} \le {\mathbf{D}} \cdot \mathbf{s} - \mathbf{v} \le \overline{\mathbf{v}}, ~~~\mathbf{D} \in \mathbb{R}^{h \times n}, ~~\mathbf{v}, \overline{\mathbf{v}}, \underline{\mathbf{v}} \in \mathbb{R}^{h} }\right\}. \label{aset2}
\end{align}
where $\mathbf{D}$, $\mathbf{v}$, $\overline{\mathbf{v}}$ and $\underline{\mathbf{v}}$ are given in advance for formulating $h$ safety conditions. Considering the safety set, we present the definition of a safety guarantee. 
\begin{definition}
Consider the safety set $\mathbb{X}$ in \cref{aset2} and its subset $\Omega$. The real plant (\ref{realsys}) is said to be safety guaranteed, if for any $\mathbf{s}(1) \in \Omega \subseteq   {\mathbb{X}}$, then $\mathbf{s}(k) \in \Omega \subseteq {\mathbb{X}}$, $\forall k > 1 \in \mathbb{N}$.
\label{defsafety}
\end{definition} 

\begin{remark} [\textbf{Role of $\Omega$}]
The subset $\Omega$ is called the safety envelope, whose details will be explained in \cref{sssed}. The $\Omega$ will bridge many (i.e., high-dimensional) safety conditions in safety set \cref{aset2} and one-dimensional safety-embedded reward. Meanwhile, \cref{defsafety} indicates that safety guarantee means the Phy-DRL successfully searches for a policy that renders $\Omega$ invariant (i.e., operating from any initial sample inside $\Omega$, system state never leaves $\Omega$ at any time). 
\end{remark}

\section{Design Overview: Invariant Embeddings}
In this paper, an `invariant' refers to a prior policy, prior knowledge, or a designed property independent of DRL agent training. As shown in \cref{phydrl}, the proposed Phy-DRL can address \cref{defa1a}--\cref{defa3} because of three invariant-embedding designs. Specifically, 1) \underline{residual action policy}, which integrates data-driven action policy with an invariant model-based action policy that completely depends on prior model knowledge $\left({\mathbf{A},~ \mathbf{B}} \right)$.   ii) \underline{Safety-embedded reward}, whose off-line-designed inequality (shown in \cref{reward}) for assistance in delivering the mathematically provable safety guarantee is also completely independent of agent training. iii) \underline{Physics-knowledge-enhanced DNN}, whose NN editing embeds the prior invariant knowledge about the action-value function and action policy to the critic and actor networks, respectively. The following \cref{inve1}, \cref{sssed} and \cref{inve3} detail the three designs, respectively.

\section{Invariant Embedding 1: Residual Action Policy} \label{inve1}
Showing in Figure \ref{phydrl},  the applied control action $\mathbf{a}(k)$ from Phy-DRL is given in the residual form: 
\begin{align}
\mathbf{a}(k) = \underbrace{\mathbf{a}_{\text{drl}}(k)}_{\text{data-driven}} + \underbrace{\mathbf{a}_{\text{phy}}(k)}_{\text{invariant: model-based}},\label{residual}
\end{align}
where the $\mathbf{a}_{\text{drl}}(k)$ denotes a date-driven action from DRL, while the $\mathbf{a}_{\text{phy}}(k)$ is a model-based action, computed according to the invariant policy: 
\begin{align}
\mathbf{a}_{\text{phy}}(k) = \mathbf{F} \cdot \mathbf{s}(k), \label{modelbased}
\end{align}
where the computation of $\mathbf{F}$ is based on the model knowledge $\left(\mathbf{A}, \mathbf{B} \right)$, carried out in \cref{sssed}.  

The developed Phy-DRL is based on actor-critic architecture in DRL\cite{lillicrap2015continuous, schulman2017proximal, sac} for searching an action policy $ \mathbf{a_{drl}}(k) = \pi(\mathbf{s}(k))$ that maximizes the expected return from the initial state $\mathbf{s}(k)$: 
\begin{align}
{\mathcal{Q}^\pi}( {\mathbf{s}(k), \mathbf{a}_{\text{drl}}(k)}) = {\mathbf{E}_{\mathbf{s}(k) \sim \mathbb{\rho},~ \mathbf{a_{drl}}(k) \sim \mathbb{\pi}}}\left[ {\sum\limits_{t = k}^\infty {{\gamma^{t-k}} \cdot \mathcal{R}\left(\mathbf{s}(t), \mathbf{a}_{\text{drl}}(t) \right)}} \right],  \label{bellman}
\end{align}
where $\mathbb{\rho}$ represents the initial state distribution, $\mathcal{R}(\cdot)$ maps a state-action to a real-value reward, $\gamma \in [0,1]$ is the discount factor. The expected return (\ref{bellman}) and action policy $\pi(\cdot)$ are parameterized by the critic and actor networks, respectively.

\section{Invariant Embedding 2: Safety-Embedded Reward} \label{sssed}
The current safety formula (\ref{aset2}) is not ready for constructing the safety-embedded reward yet, since it has multiple safety conditions while the reward $\mathcal{R}\left( \cdot \right)$ in \cref{bellman} is one-dimensional. To bridge the gap, we introduce the following safety envelope, which converts multi-dimensional safety conditions into a scalar value.   
\begin{align}
\text{Safety Envelope:}~~{\Omega} \triangleq \left\{ {\left. {\mathbf{s} \in {\mathbb{R}^n}} \right|{\mathbf{s}^\top}\cdot{\mathbf{P}}\cdot\mathbf{s} \le 1,~{\mathbf{P}} \succ 0} \right\}. \label{set3}
\end{align}

The following lemma builds a connection: safety envelope is a subset of the safety set (also required in \cref{defsafety}). Its formal proof appears in \cref{appenvelope}.  
\begin{lemma} 
Consider the sets defined in \cref{aset2} and \cref{set3}. We have ${\Omega} \subseteq \mathbb{X}$, if
\begin{align}
\left[ \overline{\mathbf{D}} \right]_{i,:} \!\cdot\! {\mathbf{P}^{-1}} \!\cdot\! \left[ \overline{\mathbf{D}}^\top \right]_{:,i} \!\le\! 1 ~~\text{and}~~\left[ \underline{\mathbf{D}} \right]_{i,:} \!\cdot\! {\mathbf{P}^{-1}} \!\cdot\! \left[ \underline{\mathbf{D}}^\top \right]_{:,i} \!=\! \begin{cases}
		\ge 1, \!\!&[{\mathbf{d}}]_i \!=\! 1\\
		\le 1, \!\!&[{\mathbf{d}}]_i \!=\! -1
	\end{cases}\!, ~~i \!\in\! \{1, 2, \ldots, h\} \label{set5}
\end{align} 
where $\overline{\mathbf{D}} = \frac{{\mathbf{D}}}{\overline{\Lambda}}$, $\underline{\mathbf{D}} = \frac{{\mathbf{D}}}{\underline{\Lambda}}$, and $\mathbf{d}$, $\overline{\Lambda}$ and $\underline{\Lambda}$ are defined below for $i,j \in \{1, 2, \ldots, h\}$:
\begin{align}
 [{\mathbf{d}}]_{i} \!\triangleq\! \begin{cases}
		1, \!\!\!&\left[\underline{\mathbf{v}} \!+\!  \mathbf{v}\right]_{i} \!>\! 0 \\
        1, \!\!\!&\left[\overline{\mathbf{v}} \!+\!  \mathbf{v}\right]_{i} \!<\! 0 \\
		-1, \!\!\!& \text{otherwise}
	\end{cases}, ~~ [\overline{\Lambda}]_{i,j} \!\triangleq\! \begin{cases}
		0, \!\!\!&i \!\ne\! j\\
		\left[\overline{\mathbf{v}} \!+\!  \mathbf{v}\right]_{i}, \!\!\!&\left[\underline{\mathbf{v}} \!+\!  \mathbf{v}\right]_{i} \!>\! 0 \\
        \left[\underline{\mathbf{v}} \!+\!  \mathbf{v}\right]_{i}, \!\!\!&\left[\overline{\mathbf{v}} \!+\!  \mathbf{v}\right]_{i} \!<\! 0 \\
		\left[\overline{\mathbf{v}} \!+\!  \mathbf{v}\right]_{i}, \!\!\!&\text{otherwise}
	\end{cases}, ~~  
 [\underline{\Lambda}]_{i,j} \!\triangleq\! \begin{cases}
		0, \!\!\!&i \!\ne\! j\\
		\left[\underline{\mathbf{v}} \!+\!  \mathbf{v}\right]_{i}, \!\!\!&\left[\underline{\mathbf{v}} \!+\!  \mathbf{v}\right]_{i} \!>\! 0 \\
        \left[\overline{\mathbf{v}} \!+\!  \mathbf{v}\right]_{i}, \!\!\!&\left[\overline{\mathbf{v}} \!+\!  \mathbf{v}\right]_{i} \!<\! 0 \\
		\left[-\underline{\mathbf{v}} -  \mathbf{v}\right]_{i}, \!\!\!& \text{otherwise}
	\end{cases}. \nonumber
\end{align} \label{safelemma}
\end{lemma} 

Referring to model knowledge $\left(\mathbf{A}, \mathbf{B}\right)$,  \cref{set3} and \cref{modelbased}, the proposed reward is 
\begin{align}
\!\!\mathcal{R}( {\mathbf{s}(k),\mathbf{a}_{\text{drl}}}(k)) = \underbrace{\mathbf{s}^\top(k) \cdot {{\overline{\mathbf{A}}^\top} \cdot \mathbf{P} \cdot \overline{\mathbf{A}} } \cdot \mathbf{s}(k)  - {\mathbf{s}^\top(k+1)} \cdot \mathbf{P} \cdot \mathbf{s}(k+1)}_{\triangleq ~ r(\mathbf{s}(k), ~\mathbf{s}(k+1)):~\text{invariant property} ~ \mathbf{P} ~-~  \overline{\mathbf{A}}^\top \cdot \mathbf{P} \cdot \overline{\mathbf{A}} ~\succ~ 0}  ~+~ w( \mathbf{s}(k),\mathbf{a}_{\text{drl}}(k)), \label{reward}
\end{align}
where the sub-reward $r(\mathbf{s}(k), \mathbf{s}(k+1))$ is safety-embedded, and we define: 
\begin{align}
\overline{\mathbf{A}} \buildrel \Delta \over = \mathbf{A} + {\mathbf{B} } \cdot {\mathbf{F}}. \label{asysma}
\end{align}

\begin{remark} [Sub-rewards]
In \cref{reward}, the safety-embedded sub-reward $r(\mathbf{s}(k), \mathbf{s}(k+1))$ is critical for keeping a system safe, such as avoiding car crashes, and preventing car sliding and slipping in an icy road. The sub-reward $w( \mathbf{s}(k),\mathbf{a}_{\text{drl}}(k))$ aims at high-performance operations, such as minimizing energy consumption of resource-limited robots \cite{yang2021learning}, which can be optional in some time- and safety-critical environments. 
\end{remark}

Moving forward, we present the following theorem, which states the conditions on matrices ${\mathbf{F}}$, ${\mathbf{P}}$ and reward for safety guarantee, whose proof is given in \cref{ppthm1}.
\begin{theorem}[\textbf{Mathematically Provable Safety Guarantee}]
Consider the safety set ${\mathbb{X}}$ (\ref{aset2}), the safety envelope ${\Omega}$ (\ref{set3}), and the system (\ref{realsys}) under control of Phy-DRL. The matrices $\mathbf{F}$ and $\mathbf{P}$ involved in the model-based action policy (\ref{modelbased}) and the safety-embedded reward (\ref{reward}) are computed according to 
\begin{align}
\mathbf{F} = \mathbf{R} \cdot \mathbf{Q}^{-1}, ~~~~ \mathbf{P} = \mathbf{Q}^{-1}, \label{th10000}
\end{align}
where $\mathbf{R}$ and $\mathbf{Q}^{-1}$ satisfy the inequalities (\ref{set5})  and 
\begin{align}
\left[ {\begin{array}{*{20}{c}}
\alpha \cdot \mathbf{Q} & \mathbf{Q} \cdot \mathbf{A}^\top + \mathbf{R}^\top \cdot \mathbf{B}^\top\\
\mathbf{A} \cdot \mathbf{Q} + \mathbf{B} \cdot \mathbf{R} & \mathbf{Q}
\end{array}} \right] \succ 0, ~~~~\text{with a given}~\alpha \in (0,1).\label{th1}
\end{align}
Given any $\mathbf{s}(1) \in {\Omega}$, the system state $\mathbf{s}(k) \in {\Omega} \subseteq \mathbb{X}$ holds $\forall k \in \mathbb{N}$ (i.e., the safety of system (1) is guaranteed), if the sub-reward $r(\mathbf{s}(k), \mathbf{s}(k+1))$ in (8) satisfies $r(\mathbf{s}(k), \mathbf{s}(k+1)) \ge \alpha - 1,~\forall k \in \mathbb{N}$.
\label{thm1}
\end{theorem}

\begin{remark}[\textbf{Solving Optimal $\mathbf{R}$ and $\mathbf{Q}$}] \cref{modelbased}, \cref{reward} and \cref{asysma} imply that the designs of model-based action policy and safety-embedded reward equate the computations of $\mathbf{F}$ and $\mathbf{P}$. While \cref{th10000} shows the computations depend on $\mathbf{R}$ and $\mathbf{Q}$ only.  So, the remaining work is obtaining $\mathbf{R}$ and $\mathbf{Q}$. There are multiple toolboxes for solving $\mathbf{R}$ and $\mathbf{Q}$ from linear matrix inequalities (LMIs) (\ref{set5}) and (\ref{th1}), such as MATLAB's LMI Solver \cite{boyd1994linear}. What we are more interested in is finding optimal $\mathbf{R}$ and $\mathbf{Q}$ that can maximize the safety envelope. We note the volume of a safety envelope (\ref{set3}) is proportional to $\sqrt {\det \left( {\mathbf{P}^{ - 1}} \right)}$, the interested optimal problem is thus a typical analytic centering problem, formulated as given a $\alpha \in (0,1)$,
\begin{align} 
\mathop {\argmin }\limits_{{\mathbf{Q}},~{\mathbf{R}}} \left\{ {\log \det \left( {\mathbf{Q}^{ - 1}} \right)} \right\} = \mathop {\argmax }\limits_{{\mathbf{Q}},~{\mathbf{R}}} \left\{ {\log \det \left( {\mathbf{P}^{ - 1}} \right)} \right\}, ~\text{subject to LMIs}~(\ref{set5})~\text{and}~ (\ref{th1}) \label{qrsolver}
\end{align}
from which, optimal $\mathbf{R}$ and $\mathbf{Q}$ can be solved via CVX toolbox \cite{grant2009cvx}. 
\end{remark}

\begin{remark}[\textbf{$\mathbf{F}$ is given}] 
\cref{qrsolver} also works in the scenario of a given model-based action policy (i.e., $\mathbf{F}$), which is carried out in \cref{gghhjjjk} as an example. 
\end{remark}

%\begin{remark}[\textbf{Safety Evaluation}] The condition $r(\mathbf{s}(k), \mathbf{s}(k+1)) \ge \alpha - 1$ in \cref{thm1} can be used in multiple ways to perform safety evaluation on a trained action policy. For example, if the system can operate from any point inside safety set $\mathbb{X}$, the evaluation condition of safety guarantee is $r(\mathbf{s}(k), \mathbf{s}(k+1)) = \mathbf{s}^\top(k) \cdot {{\overline{\mathbf{A}}^\top} \cdot \mathbf{P} \cdot \overline{\mathbf{A}} } \cdot \mathbf{s}(k)  - {\mathbf{s}^\top(k+1)} \cdot \mathbf{P} \cdot \mathbf{s}(k+1) \ge \alpha - 1$ holds for any given $x(k) \in \mathbb{X}$. If operating from an arbitrary point of $\mathbb{X}$ is not achievable, an estimation of lower bound (denoted by $\mu$) on $r(\mathbf{s}(k), \mathbf{s}(k+1))$ (i.e., $\mu \le r(\mathbf{s}(k), \mathbf{s}(k+1))$) is needed. The lower bound can be obtained through learning the model mismatch $\mathbf{f}(\mathbf{s}(k), \mathbf{a}(k))$ in \cref{realsys}. The evaluation condition of the safety guarantee in this scenario is thus $\mu \ge \alpha - 1$. 
%\end{remark}

\begin{remark}[\textbf{Provable Stability Guarantee}] Following the same proof path of \cref{thm1}, Phy-DRL also exhibits the mathematically provable stability guarantee, which is presented in \cref{extenoop}.
\end{remark}

\begin{remark}[\textbf{Fast Training}] The proof path of \cref{thm1} is leveraged to reveal the driving factor of Phy-DRL's fast training towards safety guarantee, which is presented in \cref{ppthm100}.
\end{remark}

\begin{remark}[\textbf{Obtaining ($\mathbf{A}$, $\mathbf{B}$)}] For a system with an available nonlinear dynamics model, the model knowledge ($\mathbf{A}$, $\mathbf{B}$) can be directly obtained by simplifying the nonlinear model to a linear one. While for a system whose dynamics model is not available, ($\mathbf{A}$, $\mathbf{B}$) can be obtained via system identification \cite{oymak2019non}, as used in social systems \cite{9997245}. 
\end{remark}

\section{Invariant Embedding 3: Physics-Knowledge-Enhanced DNN} \label{inve3}
\begin{wrapfigure}{r}{0.63\textwidth}
\vspace{-0.6cm}
  \begin{center}
\includegraphics[width=0.63\textwidth]{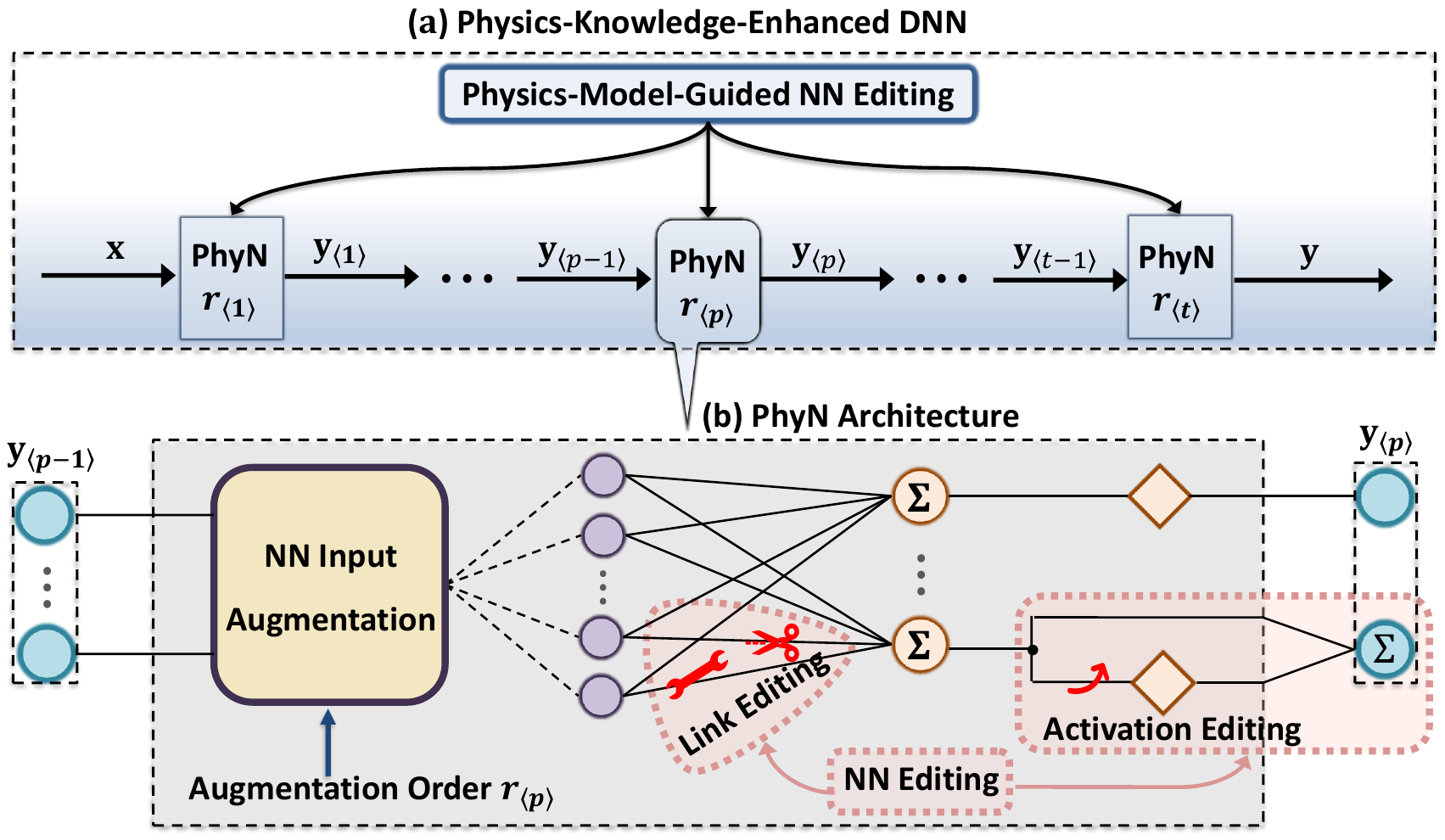}
  \end{center}
  \vspace{-0.40cm}
  \caption{Physics-Knowledge-Enhanced DNN architecture.}
  \vspace{-0.30cm}
  \label{bnn}
  \vspace{-0cm}
\end{wrapfigure}
The Phy-DRL is built on the \textit{actor-critic} architecture, where a critic and an actor network are used to approximate the action-value function (i.e., $\mathcal{Q}\left(\mathbf{s}(k), \mathbf{a}_{\text{drl}}(k) \right)$ in \cref{bellman}) and learn an action policy (i.e., $\mathbf{a}_{\text{drl}}(k) = \pi\left(\mathbf{s}(k)\right)$), respectively. We note from \cref{bellman} that the action-value function is a direct function of our defined reward but involves unknown future rewards. So, some invariant knowledge exists that the critic and/or actor networks shall strictly comply with, which motivates \cref{defa3}. To address this problem, as shown in \cref{phydrl}, we develop physics-enhanced critic and actor networks for Phy-DRL. The proposed networks are built on physics-knowledge-enhanced DNN, whose architecture is depicted in \cref{bnn}. The DNN has two innovations in neural architecture:  i) Neural Network (NN) Input Augmentation described by \cref{aug},  and ii) Physics-Model-Guided NN Editing described by \cref{ned}. To understand how \cref{aug} and \cref{ned} address \cref{defa3}, we describe the ground-truth models of action-value function and action policy as   
\begin{align}
\mathcal{Q}\left(\mathbf{s}, \mathbf{a}_{\text{drl}} \right) &= \underbrace{\mathbf{A}_{\mathcal{Q}}}_{\text{weight matrix}} \cdot \underbrace{\mathfrak{m}(\mathbf{s}, \mathbf{a}_{\text{drl}},{r}_{\mathcal{Q}})}_{\text{node-representation vector}} + \underbrace{\mathbf{p}(\mathbf{s},\mathbf{a}_{\text{drl}})}_{\text{unknown model mismatch}} \in \mathbb{R},\label{qnetwork}\\
\pi\left(\mathbf{s}\right) &= \underbrace{\mathbf{A}_{\pi}}_{\text{weight matrix}} \cdot \underbrace{{\mathfrak{m}}(\mathbf{s},{r}_{\pi})}_{\text{node-representation vector}} + \underbrace{{\mathbf{p}}(\mathbf{s})}_{\text{unknown model mismatch}} \in \mathbb{R}^{\text{len}(\pi\left(\mathbf{s}\right))},\label{actornetwork}
\end{align}
where the vectors $\mathfrak{m}(\mathbf{s}, \mathbf{a}_{\text{drl}},{r}_{\mathcal{Q}})$ and ${\mathfrak{m}}(\mathbf{s},{r}_{\pi})$ are, respectively, augmentations of input vectors $[\mathbf{s}; \mathbf{a}_{\text{drl}}]$ and $\mathbf{s}$, which embraces all the non-missing and non-redundant monomials of a Taylor series. One motivation behind the augmentations is that according to Taylor's Theorem in \cref{cmar}, the Taylor series can approximate arbitrary nonlinear functions with controllable accuracy via controlling series orders: ${r}_{\mathcal{Q}}$ and ${r}_{\pi}$. The second motivation is our proposed safety-embedded reward $r(\mathbf{s}(k), ~\mathbf{s}(k+1))$ in \cref{reward} is a typical Taylor series and is pre-defined. If using the Taylor series to approximate the action-value function and an action policy, we can also discover hidden invariant knowledge.

\cref{ned} needs inputs of knowledge sets $\mathbb{K}_{\mathcal{Q}}$ and $\mathbb{K}_{\pi}$. which include available knowledge about governing \cref{qnetwork} and \cref{actornetwork}, respectively. The two knowledge sets are defined below. 
\begin{align}
\mathbb{K}_{\mathcal{Q}} \!\triangleq\! \left\{ {{\left[ \mathbf{A}_{\mathcal{Q}} \right]}_i  \left| ~\text{no} ~ {\left[\mathfrak{m}( {\mathbf{s}},{\mathbf{a}_{\text{drl}},{r}_{\mathcal{Q}})} \right]}_i ~\text{in} ~ \mathbf{p}( {{\mathbf{s}},\mathbf{a}_{\text{drl}}}){, ~i \in \left\{ {1,\ldots,\text{len}( {\mathfrak{m}( {{\mathbf{s}},{\mathbf{a}_{\text{drl}}}}},{r}_{\mathcal{Q}}))} \right\}} \right.} \right\}\!, \label{kset1}\\
\!\mathbb{K}_{\pi} \!\triangleq\! \left\{\! {\left. {{{\left[ \mathbf{A}_{\pi} \right]}_{i,j}}} \right|
\text{no}~{\left[ {{{\mathfrak{m}}}({{\mathbf{s},{r}_{\mathcal{\pi}}}})} \right]}_j ~\text{in}~[{\mathbf{p}}({{\mathbf{s}}})]_{i}, ~i \!\in\! \left\{ {1,\ldots,\text{len}( \mathbf{p}(\mathbf{s}))} \right\}\!, j \!\in\! \left\{ {1, \ldots, \text{len}({{\mathfrak{m}}({{\mathbf{s},{r}_{\mathcal{\pi}}}})})} \right\}} \!\right\}\!.\label{kseta1}
\end{align}

\begin{remark} [\textbf{Toy Examples: Obtaining Knowledge Sets}] Due to the page limit, an example for obtaining $\mathbb{K}_{\mathcal{Q}}$ via Taylor's theorem is presented in \cref{expknsetq}. This example is about obtaining $\mathbb{K}_{\pi}$. According to the dynamics and control of vehicles, the throttle command of traction control for preventing sliding and slipping depends on longitudinal velocity (denoted by $v$) and angular velocity (denoted by $w$) only \cite{rajamani2011vehicle,mao2023sl1}.  For simplification, we let $\mathbf{s} = [v, w, \zeta ]^\top$, where $\zeta$ denotes yaw. While action policy $\pi\left(\mathbf{s}\right) \in \mathbb{R}^{2}$, with $[\pi\left(\mathbf{s}\right)]_{1}$ and $[\pi\left(\mathbf{s}\right)]_{2}$ denote the throttle command and steering command, respectively. By \cref{aug} with $r_{\left\langle t \right\rangle } = {r}_{\mathcal{\pi}} = 2$ and ${\mathbf{y}_{\left\langle t \right\rangle }} = \mathbf{s}$, we have ${\mathfrak{m}}(\mathbf{s},{r}_{\pi}) = [1, v, w, \zeta, v^2, vw, v\zeta,  w^2, w\zeta, \zeta^2]^\top$. Considering \cref{actornetwork}, the ${\mathfrak{m}}(\mathbf{s},{r}_{\pi})$, in conjunction with the knowledge  ``$[\pi\left(\mathbf{s}\right)]_{1}$ depends on $w$ and $v$ only", leads to the information: 1)  $[{\mathbf{p}}(\mathbf{s})]_{1}$ in \cref{actornetwork} in this example does not have monomials: 1, $\zeta$, $v\zeta$, $w\zeta$, and $\zeta^2$, and 2) 
\begin{align}
\mathbf{A}_{\pi} = \left[ {\begin{array}{*{20}{c}}
0&{{w_1}}&{{w_2}}&0&{{w_3}}&{{w_4}}&0&{{w_5}}&0&0\\
{{w_6}}&{{w_7}}&{{w_8}}&{{w_9}}&{{w_{10}}}&{{w_{11}}}&{{w_{12}}}&{{w_{13}}}&{{w_{14}}}&{{w_{15}}}
\end{array}} \right], \nonumber
\end{align}
where ${w_1}, \ldots, w_{15}$ are learning weights. Referring to \cref{kseta1}, we then have $\mathbb{K}_{\pi} = \left\{ {[\mathbf{A}_{\pi}]_{1,1} =0,[\mathbf{A}_{\pi}]_{1,2} = w_{1}, \ldots, [\mathbf{A}_{\pi}]_{1,10} = 0} \right\}$. \label{remearkexpact}
\end{remark}

With knowledge sets at hand, we can introduce two design aims for addressing \cref{defa3}. 
\begin{aim}
Given $\mathbb{K}_{\mathcal{Q}}$ (\ref{kset1}), consider the critic network built on physic-knowledge-enhanced DNN in Figure \ref{bnn}, where $\mathbf{x} = \left(\mathbf{s}, \mathbf{a}_{\text{drl}} \right)$ and $\mathbf{y} = \widehat{Q}\left(\mathbf{s}, \mathbf{a}_{\text{drl}} \right)$ (i.e., $\mathbf{y}$ approximates ${\mathcal{Q}}\left(\mathbf{s}, \mathbf{a}_{\text{drl}} \right)$). The end-to-end input/output of the critic network strictly complies with available knowledge about the governing \cref{qnetwork}, i.e., if $[\mathbf{A}_{\mathcal{Q}}]_{i} \in \mathbb{K}_{\mathcal{Q}}$, then $\mathbf{y}$ does not have monomials $[\mathfrak{m}(\mathbf{s}, \mathbf{a}_{\text{drl}},{r}_{\mathcal{Q}})]_{i}$. \label{aim1}
\end{aim}
\begin{aim}
Given $\mathbb{K}_{\pi}$ (\ref{kseta1}), consider the actor network built on physic-knowledge-enhanced DNN in Figure \ref{bnn}, where $\mathbf{x} = \mathbf{s}$ and $\mathbf{y} = \widehat{\pi}\left(\mathbf{s} \right)$ (i.e., $\mathbf{y}$ approximates ${\pi}\left(\mathbf{s} \right)$). The end-to-end input/output of the actor network strictly complies with available knowledge about the governing \cref{actornetwork}, i.e., if $[\mathbf{A}_{\pi}]_{i,j} \in \mathbb{K}_{\pi}$, then $[{\mathbf{y}}]_i$ does not have monomials $[{\mathfrak{m}}(\mathbf{s},{r}_{\pi})]_{j}$. \label{aim2}
\end{aim}

\begin{algorithm}{
  \caption{NN Input Augmentation \Comment{{\color{teal} \scriptsize{Aim at representation vectors to embrace all the non-missing and non-redundant monomials of the Taylor series. }}}} \label{aug}   
  \scriptsize
  \begin{algorithmic}[1]
  \State \textbf{Input:} augmentation order $r_{\left\langle t \right\rangle }$, input ${\mathbf{y}_{\left\langle t \right\rangle }}$;  \Comment{{\color{teal} $t$ indicates layer number, e.g, $t = 2$ denotes the second NN layer.}}
  \State Generate index vector of input: $\mathbf{i} \leftarrow [1;~2;~\ldots;~\mathrm{len}(\mathbf{y}_{\left\langle t \right\rangle })]$; \label{ALG1-1}
  \State Initialize augmentation vector: ${\mathfrak{m}}(\mathbf{y}_{\left\langle t \right\rangle },r_{\left\langle t+1 \right\rangle })  \leftarrow \mathbf{y}_{\left\langle t \right\rangle }$; \label{ALG1-2}
  \For{$\_\ = 2$ to $r_{\left\langle t \right\rangle }$ \label{alg1-3}}
       \For{$i=1$ to $\mathrm{len}(\mathbf{y}_{\left\langle t \right\rangle })$ \label{alg1-4}} 
       \State Compute temp: ${\mathbf{{t}}}_a$ $\leftarrow [\mathbf{y}_{\left\langle t \right\rangle }]_{i} \cdot [\mathbf{y}_{\left\langle t \right\rangle }]_{\left[ {[\mathbf{i}]_{i}~:~\mathrm{len}(\mathbf{y}_{\left\langle t \right\rangle })} \right]}$; \label{alg1-5} \Comment{{\color{teal} Capture hard-to-learn nonlinear 
                                             representations, in the form of $\begin{array}{*{20}{c}}
                                             {}&{}&{}&{}&{}&{}&{}
                                            \end{array}$monomials of the Taylor series, such as the 2nd-order monomials of sub-reward $r(\mathbf{s}(k), \mathbf{s}(k+1))$ in \cref{reward}.}}
        \If{$i==1$ \label{alg1-6}}
            \State Generate temp: $\widetilde{\mathbf{{t}}}_b \leftarrow \widetilde{\mathbf{{t}}}_a$; \label{alg1-7}     
        \ElsIf{$i>1$}
            \State Generate temp: $\widetilde{\mathbf{{t}}}_{b} \leftarrow \left[ \widetilde{\mathbf{{t}}}_{b};~\widetilde{\mathbf{{t}}}_{a}  \right]$; \label{alg1-9}  \Comment{{\color{teal} Avoid missing and redundant monomials.}}
         \EndIf
\State Update index entry: $[\mathbf{i}]_i \leftarrow \mathrm{len}(\mathbf{y}_{\left\langle t \right\rangle })$; \label{alg1-11}
\State Augment: ${\mathfrak{m}}(\mathbf{y}_{\left\langle t \right\rangle },r_{\left\langle t+1 \right\rangle })  \leftarrow \left[ {\mathfrak{m}}(\mathbf{y}_{\left\langle t \right\rangle },r_{\left\langle t \right\rangle });~{{\widetilde{\mathbf{t}}}_{b}}  \right]$; \label{alg1-12}
  \EndFor \label{alg1-13}
  \EndFor \label{alg1-14}         \Comment{{\color{teal} \cref{alg1-3}--\cref{alg1-14} generate node-representation vector that embraces all the non-missing and non-redundant monomials of the  $\begin{array}{*{20}{c}}
                                           {}&{}&{}&{}
                                           \end{array}$Taylor series. One illustration example is in \cref{PhyBB} in \cref{nniaug}.}}
  \State \textbf{Output}: ${\mathfrak{m}}(\mathbf{y}_{\left\langle t \right\rangle },r_{\left\langle t \right\rangle })  \leftarrow \left[\mathbf{1}; ~{\mathfrak{m}}(\mathbf{y}_{\left\langle t \right\rangle },r_{\left\langle t+1 \right\rangle }) \right]$. \label{alg1-16}  \Comment{{\color{teal} Controllable Model Accuracy:  The algorithm provides one option of 
approximating the ground-truth models (\ref{qnetwork}) and (\ref{actornetwork}) via Taylor series. According to Taylor's Theorem (Chapter 2.4 \cite{konigsberger2013analysis}), the networks have controllable model accuracy by controlling augmentation orders $r_{\left\langle t \right\rangle }$ (see \cref{cmar} for further explanations).}}
  \end{algorithmic}}
\end{algorithm}

\cref{ned}, in conjunction  with \cref{aug}, is able to deliver \cref{aim1} and \cref{aim2}. It is formally stated in \cref{thmmm2}, whose proof appears in \cref{thmmm2ss}. 

\begin{theorem} 
If the critic and actor networks are built on physics-knowledge-enhanced DNN (described in Figure \ref{bnn}), whose NN input augmentation and NN editing are described by \cref{aug} and \cref{ned}, respectively, then \cref{aim1} and \cref{aim2} are achieved. \label{thmmm2}
\end{theorem}

Finally, we refer to the toy example in \cref{remearkexpact} for an overview of NN editing. For the end-to-end mapping $[\mathbf{y}]_{1} = [\widehat{\pi}\left(\mathbf{s} \right)]_{1}$, given $\mathbb{K}_{\pi}$ and output of \cref{aug}, the link editing of \cref{ned} removes all connections with node representations $1$, $\zeta$, $v\zeta$, $w\zeta$, $\zeta^2$ and maintain link connections with $v$, $w$,$v^2$, $vw$ and  $w^2$. Meanwhile,  the action editing of \cref{ned} guarantees the usages of action functions in all PhyN layers do not introduce monomials of $\zeta$ to the mapping $[\mathbf{y}]_{1} = [\widehat{\pi}\left(\mathbf{s} \right)]_{1}$.

\begin{algorithm}
  \caption{Physics-Model-Guided Neural Network Editing \Comment{{\color{teal} \scriptsize{Perform on deep PhyN, and each layer needs Algorithm~\ref{aug} for generating node-representation vectors.   ~~\color{black}{Detailed explanations of \cref{ned} appear in \cref{nned}}}}}}\label{ned}
  
\scriptsize
%\footnotesize
\begin{algorithmic}[1]
  \State \textbf{Input:} Network type set $\mathbb{T} = \left\{ {`\mathcal{Q}', `\pi'} \right\}$, knowledge sets $\mathbb{K}_{\mathcal{Q}}$ (\ref{kset1}) and $\mathbb{K}_{\pi}$ (\ref{kseta1}), number of PhyNs ${p}_{\mathcal{Q}}$ and ${p}_{\mathcal{\pi}}$, origin input $\mathbf{x}$, augmentation orders ${r}_{\mathcal{Q}}$ and ${r}_{\mathcal{\pi}}$, model matrices $\mathbf{A}_{\mathcal{Q}}$ and $\mathbf{A}_{\pi}$, terminal output dimension $\text{len}(\mathbf{y})$.
  \State Choose network type $\varpi \in  \mathbb{T}$;
  \State Specify augmentation order of the first PhyN: $r_{\left\langle 1 \right\rangle} \leftarrow  {r}_{\varpi}$; 
  \For{$t = 1$ to $p_{\varpi}$}
        \If{$t==1$ \label{alg2-2}}
          \State Generate node-representation vector $\mathfrak{m}(\mathbf{x}, r_{\left\langle 1 \right\rangle})$ via Algorithm~\ref{aug}; \label{alg2-002a}  \Comment{{\color{teal}  Corresponding to $\mathfrak{m}(\mathbf{s}, \mathbf{a}_{\text{drl}},{r}_{\mathcal{Q}})$ and ${\mathfrak{m}}(\mathbf{s},{r}_{\pi})$ in the $\begin{array}{*{20}{c}}
                                             {}&{}&{}&{}&{}&{}&{}&{}&{}&{}&{}&{}&{}&{}&{}&{}&{}
                                            \end{array}$ground-truth models (\ref{qnetwork}) and (\ref{actornetwork}), because $r_{\left\langle 1 \right\rangle} \leftarrow  {r}_{\varpi}$. }}
        \State Generate raw weight matrix via gradient descent algorithm: ${\mathbf{W}_{\left\langle 1 \right\rangle }}$; \label{alg2-004aaa} \Comment{{\color{teal} Raw weight matrix usually responds to a fully-connected NN $\begin{array}{*{20}{c}}
                                             {}&{}&{}&{}&{}&{}&{}&{}&{}&{}&{}&{}&{}&{}&{}&{}&{}
                                            \end{array}$layer, which can violate physics knowledge. }}
        \State Generate knowledge matrix $\mathbf{K}_{\left\langle 1 \right\rangle }$: ~$[\mathbf{K}_{\left\langle 1 \right\rangle }]_{i,j} \leftarrow \begin{cases}
		  [\mathbf{A}_{\varpi}]_{i,j}, &[\mathbf{A}_{\varpi}]_{i,j} \in \mathbb{K}_{\varpi}\\
		  0, &\text{otherwise}
	          \end{cases}$; \label{alg2-3} \Comment{{\color{teal} Include all elements in knowledge set.}}
         \State Generate weight-masking matrix $\mathbf{M}_{\left\langle 1 \right\rangle }$: ~$[\mathbf{M}_{\left\langle 1 \right\rangle }]_{i,j} \leftarrow \begin{cases}
		  0, &[\mathbf{A}_{_{\varpi}}]_{i,j} \in \mathbb{K}_{\varpi}\\
		  1, &\text{otherwise}
	          \end{cases}$; \label{alg2-4}
          \State \!\!\!\! Generate activation-masking vector $\mathbf{a}_{\left\langle 1 \right\rangle }$: 
          ~$[\mathbf{a}_{\left\langle 1 \right\rangle }]_{i} \leftarrow \begin{cases}
		  0, &[\mathbf{M}_{\left\langle 1 \right\rangle }]_{i,j} = 0, \forall  j \in \{1,\ldots, \text{len}(\mathfrak{m}(\mathbf{x}, r_{\left\langle 1 \right\rangle})) \}\\
		  1, &\text{otherwise}
	          \end{cases}$;  \label{alg2-5}          
    \ElsIf{$t>1$}
        \State Generate raw weight matrix via gradient descent algorithm: ${\mathbf{W}_{\left\langle t \right\rangle }}$; \label{alg2-004a} \Comment{{\color{teal} Raw weight matrix usually responds to a fully-connected NN $\begin{array}{*{20}{c}}
                                             {}&{}&{}&{}&{}&{}&{}&{}&{}&{}&{}&{}&{}&{}&{}&{}&{}
                                            \end{array}$layer, which can violate physics knowledge. }}
        \State Generate node-representation vector $\mathfrak{m}(\mathbf{y}_{\left\langle t-1 \right\rangle}, r_{\left\langle t \right\rangle})$ via Algorithm~\ref{aug}; \label{alg2-003a}
        \State Generate knowledge matrix: $\mathbf{K}_{\left\langle t \right\rangle } \leftarrow \mymatrix$;
        \State  Generate weight-masking matrix $\mathbf{M}_{\left\langle t \right\rangle }$: 
                 \begin{align}
            \hspace{1.5cm}   [\mathbf{M}_{\left\langle t \right\rangle }]_{i,j} \leftarrow \begin{cases}
		  0, \!\!&\frac{{\partial [\mathfrak{m}(\mathbf{y}_{\left\langle t \right\rangle}, r_{\left\langle t \right\rangle})]_{j}}}{{\partial [\mathfrak{m}(\mathbf{x}, r_{\left\langle 1 \right\rangle})]_{v}}} \!\ne\! 0 ~\text{and}~[\mathbf{M}_{\left\langle 1 \right\rangle }]_{i,v} \!=\! 0, ~v \!\in\! \{1, \dots, \text{len}(\mathfrak{m}(\mathbf{x}, r_{\left\langle 1 \right\rangle}))\} \\
		  1, \!\!&\text{otherwise}
	          \end{cases}; \nonumber
               \end{align} \label{alg2-8}
        \State Generate activation-masking vector $\mathbf{a}_{\left\langle t \right\rangle } \leftarrow  \left[ \mathbf{a}_{\left\langle 1 \right\rangle };~ \mathbf{1}_{\text{len}(\mathbf{y}_{\left\langle t \right\rangle }) - \text{len}(\mathbf{y})} \right]$; \label{alg2-9}        
\EndIf
\State Generate uncertainty matrix $\mathbf{U}_{\left\langle t \right\rangle } \leftarrow  \mathbf{M}_{\left\langle t \right\rangle } \odot {\mathbf{W}_{\left\langle t \right\rangle }}$; \label{alg2-11}
\State Compute output: $\mathbf{y}_{\left\langle t \right\rangle } \leftarrow \mathbf{K}_{\left\langle t \right\rangle } \cdot \mathfrak{m}(\mathbf{y}_{\left\langle t-1 \right\rangle }, r_{\left\langle t \right\rangle}) +  \mathbf{a}_{\left\langle t \right\rangle } \odot \text{act}\left( {\mathbf{U}_{\left\langle t   
                  \right\rangle } \cdot \mathfrak{m}\left( {\mathbf{y}_{\left\langle t-1 \right\rangle },  r_{\left\langle t \right\rangle } } \right)} \right)$; \label{alg2-12}
  \EndFor
  \State \textbf{Output}: $\widehat{\mathbf{y}} \leftarrow  \mathbf{y}_{\left\langle p \right\rangle}$. \label{alg2-12fg} 
  \end{algorithmic}
\end{algorithm}

\section{Experiments}
\subsection{Cart-Pole System}
We take the cart-pole simulator provided in Open-AI Gym~\cite{OpenAI-gym}. Its mechanical analog is shown in \cref{id} in \cref{expsup1}, characterized by pendulum's angle $\theta$ and angular velocity $\omega  \buildrel \Delta \over = \dot \theta$, and cart's position $x$ and velocity $v \buildrel \Delta \over = \dot x$. Phy-DRL's action policy is to stabilize the pendulum at equilibrium $\mathbf{s}^* = [x^{*},v^{*},\theta^{*},\omega^{*}]^\top = [0,0,0,0]^\top$, while constraining system state to 
\vspace{-0.0cm}
\begin{align} 
\textbf{Safety Set:}~~~&{\mathbb{X}} =  \left\{ {\left. {\mathbf{s} \in {\mathbb{R}^4}} \right| -0.9 \le x \le 0.9, ~-0.8 <  \theta  < 0.8}\right\}. \label{safetycond}
\end{align}
To demonstrate the robustness of Phy-DRL, we intentionally create a large model mismatch for obtaining a model-based action policy of Phy-DRL. Specifically, as explained in  \cref{expsup1modelbng}, the physics-model knowledge represented by $\left({\mathbf{A},~ \mathbf{B}} \right)$ is obtained through ignoring friction force, and letting $\cos \theta  \approx 1$, $\sin \theta  \approx \theta$ and ${\omega ^2}\sin \theta  \approx 0$. The system trajectories in \cref{untra} in \cref{failure} show that the sole model-based action policy does not guarantee safety.  

The computations of $\mathbf{F}$, $\mathbf{P}$ and $\overline{\mathbf{A}}$ are presented in \cref{expsup1modelbng}--\cref{expsup1solve}. Meanwhile, for the high-performance sub-reward in \cref{reward}, we let $w( \mathbf{s}(k),\mathbf{a}_{\text{drl}}(k)) = -\mathbf{a}_{\text{drl}}^2(k)$. Finally, to validate our theoretical results and present experimental comparisons, we define two safe samples: 
\vspace{-0.00cm}
% \begin{align}
% \!\textbf{Safe Internal-Envelope (IE) Sample $\triangleq \widetilde{\mathbf{s}}$:} ~~&\text{if}~\mathbf{s}(1) =  \widetilde{\mathbf{s}} \in {\Omega}, ~\text{then}~\mathbf{s}(k) \in {\Omega} ~\text{for any}~k \in \mathbb{N}.\label{iess} \\
% \!\textbf{Safe External-Envelope (EE) Sample $\triangleq \widetilde{\mathbf{s}}$:} ~~&\text{if}~\mathbf{s}(1) =  \widetilde{\mathbf{s}} \notin \Omega, ~\text{then}~\mathbf{s}(k) \in \mathbb{X} ~\text{for any}~k \in \mathbb{N}.\label{eess}
% \end{align}
\begin{align}
\!\textbf{Safe Internal-Envelope (IE) Sample $\triangleq \widetilde{\mathbf{s}}$:} ~~&\text{if}~\mathbf{s}(1) =  \widetilde{\mathbf{s}} \in {\Omega}, ~\text{then}~\mathbf{s}(k) \in {\Omega}, \forall k \in \mathbb{N}.\label{iess} \\
\!\textbf{Safe External-Envelope (EE) Sample $\triangleq \widetilde{\mathbf{s}}$:} ~~&\text{if}~\mathbf{s}(1) =  \widetilde{\mathbf{s}} \in \mathbb{X}, ~\text{then}~\mathbf{s}(k) \in \mathbb{X} \setminus \Omega,  \exists k \in \mathbb{N}.\label{eess}
\end{align}
We consider a CLF (control Lyapunov function) reward, proposed in \cite{westenbroek2022lyapunov},  as $\mathcal{R}(\cdot) = \mathbf{s}^\top(k) \cdot \mathbf{P} \cdot \mathbf{s}(k)  - {\mathbf{s}^\top(k+1)} \cdot \mathbf{P} \cdot \mathbf{s}(k+1)  + w( \mathbf{s}(k),\mathbf{a}(k))$, where the $\mathbf{P}$ is the same as the one in the Phy-DRL's safety-embedded reward. We mainly compare our Phy-DRL with purely data-driven DRL having the CLF reward for testing. Both models are trained for $75000$ steps and have the same configurations of critic and actor networks, presented in \cref{CCFFGG}. The performance metrics are the areas of IE and EE samples. \cref{safesample} shows that i) Phy-DRL successfully renders the safety envelope invariant, demonstrating \cref{thm1} ($r(\mathbf{s}(k), \mathbf{s}(k+1)) \ge \alpha - 1$ holds in final training episode), and ii) the safety areas of the sole model-based policy and the purely data-driven DRL are much smaller. Additional comparisons with incorporating a model for model-based DRL (with the proposed CLF reward) and our Phy-DRL for state prediction are presented in \cref{commodedrprebu}.

\begin{figure}
    \centering
    \subfloat[Phy-DRL Policy]{\includegraphics[width=0.298\textwidth]{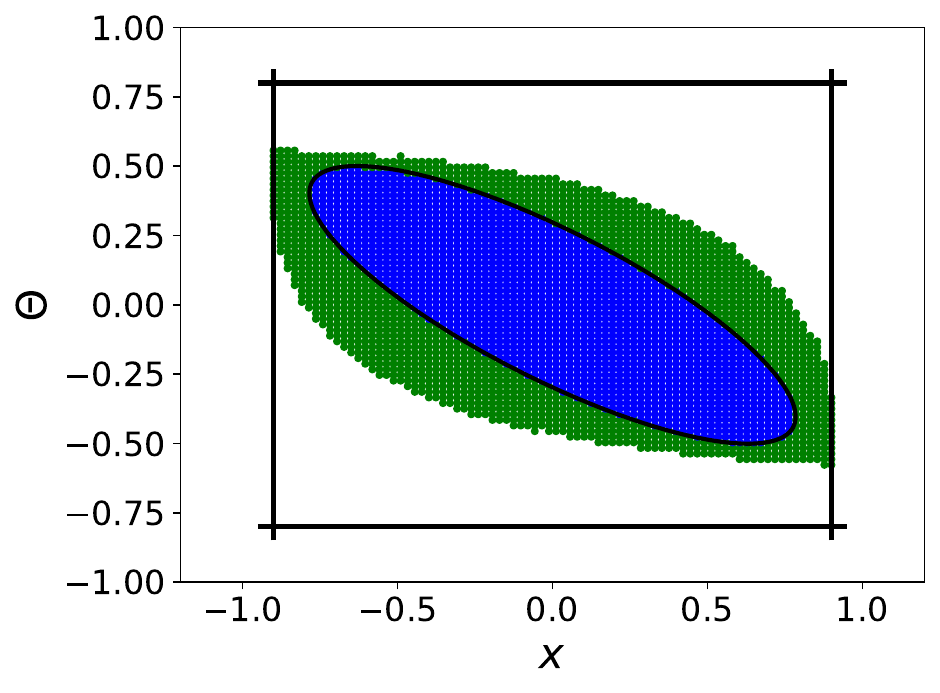}} 
    \centering
    \subfloat[Model-based Policy]{\includegraphics[width=0.298\textwidth]{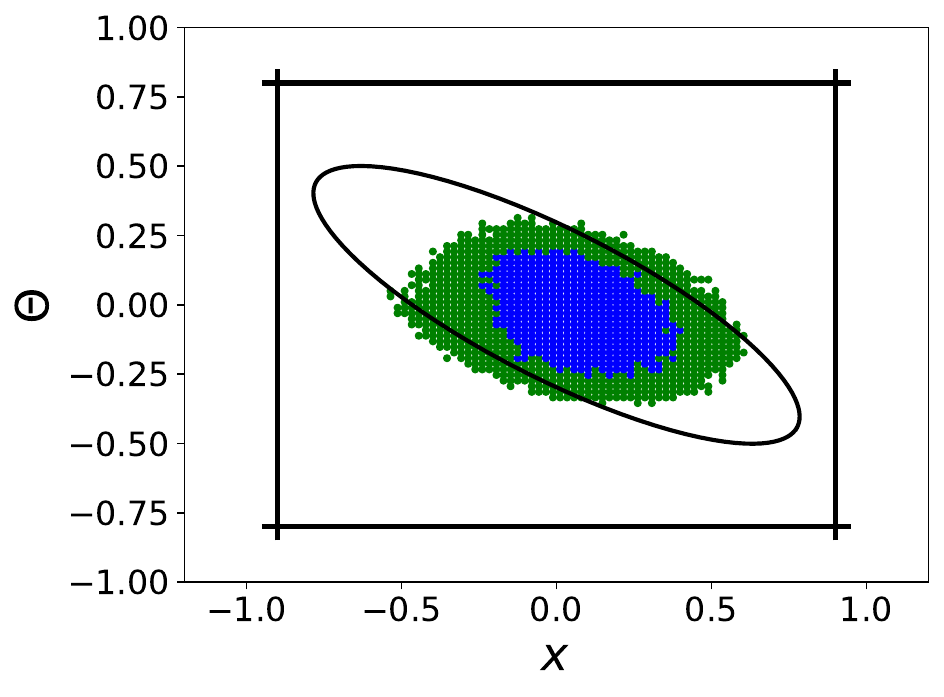}}
    \centering
    \subfloat[DRL Policy]{\includegraphics[width=0.298\textwidth]{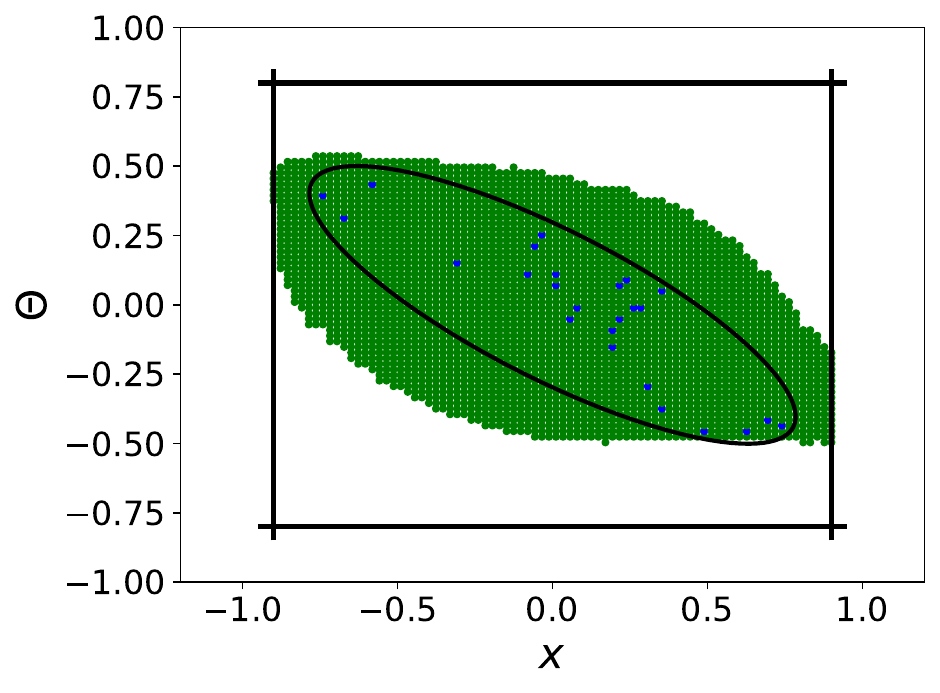}}
    \vspace{-0.248cm}
\caption{Blue: area of IE samples defined in \cref{iess}. Green: area of EE samples defined in \cref{eess}. Rectangular area: safety set. Ellipse area: safety envelope.}
\vspace{-0.0cm}
\label{safesample}
\end{figure}

\subsection{Quadruped Robot} \label{gghhjjjk}
In this experiment, action policies' missions are concurrent safe center-gravity management, safe lane tracking alone x-axis, and safe velocity regulation. We define safety constraints as:
\begin{align}
&{\mathbb{X}} \!=\! \left\{ \widehat{\mathbf{s}} \left| {\text{CoM z-height} \!-\! 0.24 \!~\text{m}} \right| \le 0.13 \!~\text{m}, \!\!~\left| {\text{yaw}} \right| \le 0.17 ~\text{rad}, \!\!~\left| {\text{CoM x-velocity} \!-\! r_{\text{x}}} \right| \le |r_{\text{x}}|  \right\}\!, \label{ssset}  \\
& \text{Targeted Equilibrium:} ~~\widehat{\mathbf{s}}^* = [0; ~0; ~0.24\!~\text{m}; ~0; ~0; ~0; ~r_{\text{x}}; ~0; ~0; ~0; ~0; ~0],\label{equli} 
\end{align}
where the $\widehat{\mathbf{s}}$ denotes the robot's state vector (given in \cref{systemdogs} in \cref{dogmodelmodelccc}), the $r_{\text{x}}$ denotes the desired $\text{CoM x-velocity}$. The system state of the model (\ref{realsys}) is expressed as $\mathbf{s} = \widehat{\mathbf{s}} - \widehat{\mathbf{s}}^*$.

\begin{figure}[htb]
\centering
  \begin{tabular}{@{}cccc@{}}
    \includegraphics[width=.4100\textwidth]{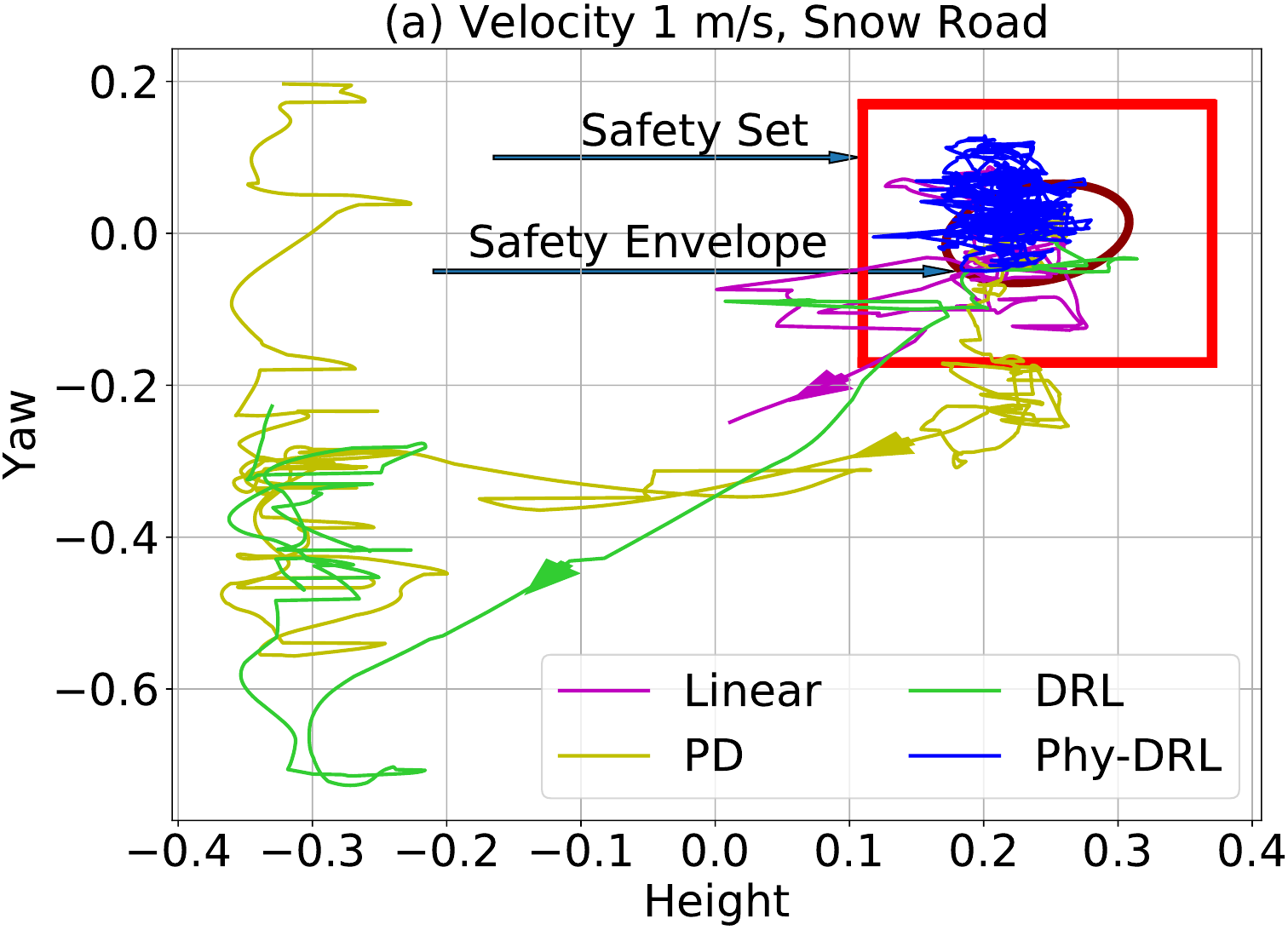} &
    \includegraphics[width=.4100\textwidth]{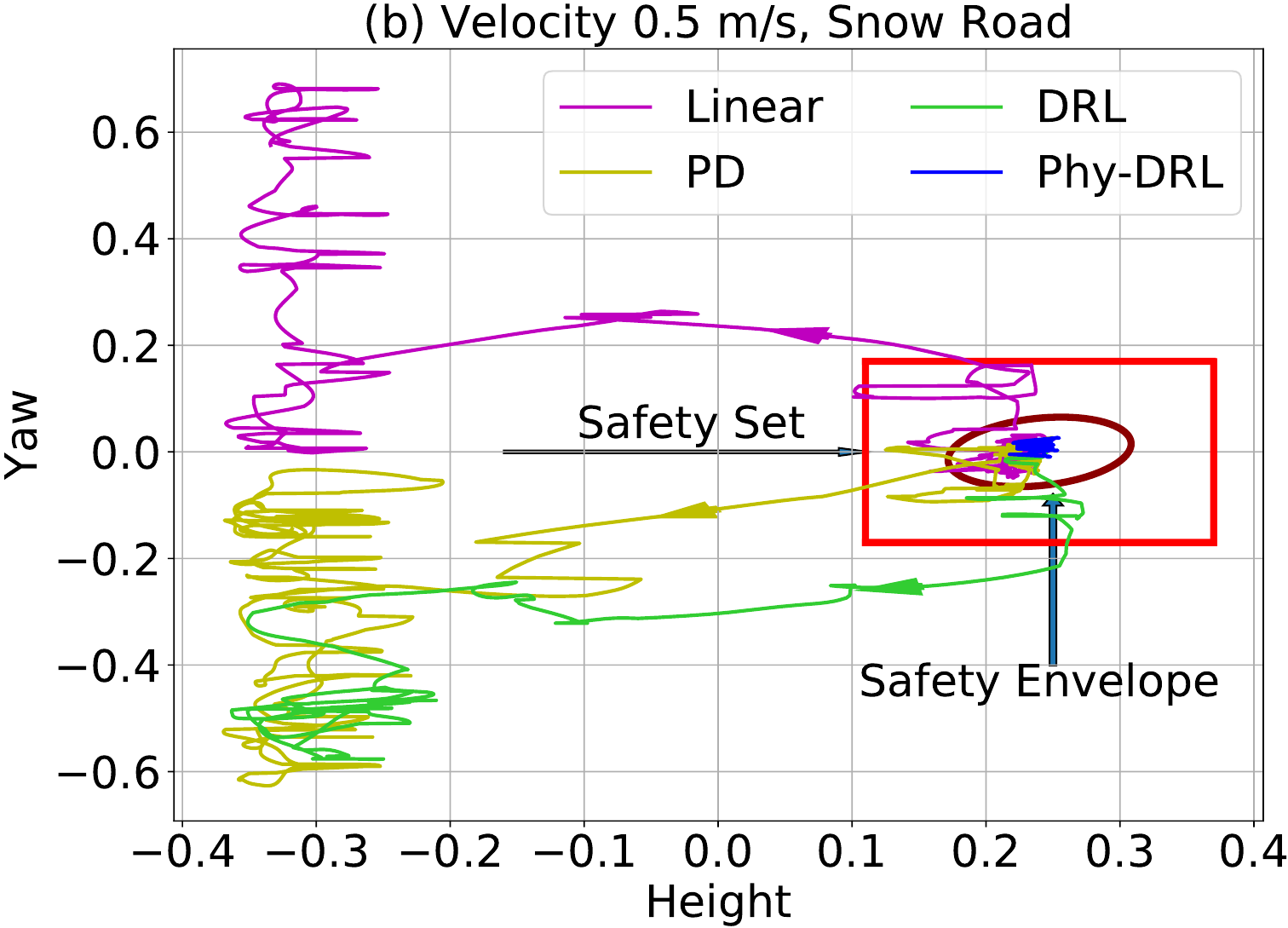} \\
    \includegraphics[width=.4100\textwidth]{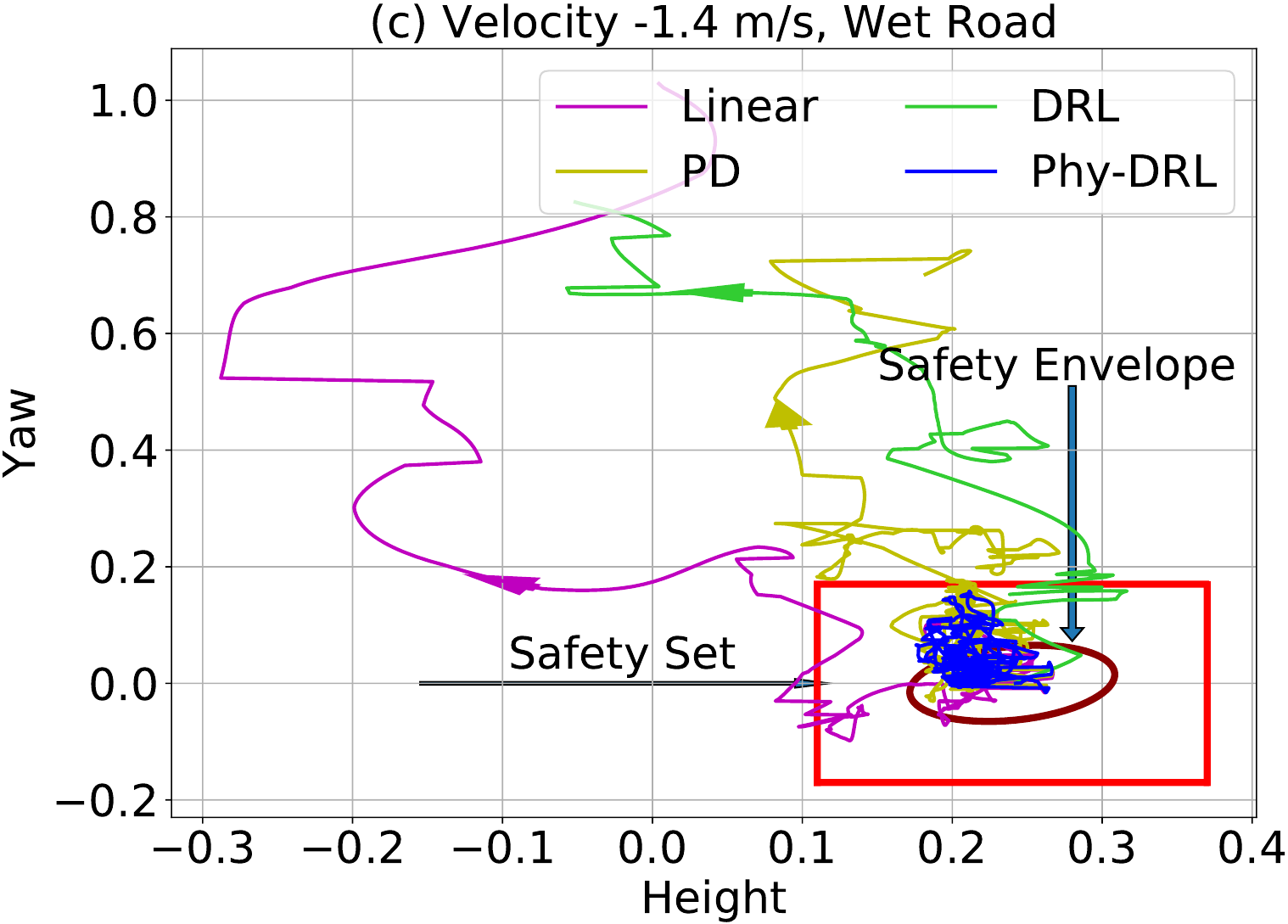} &
    \includegraphics[width=.4100\textwidth]{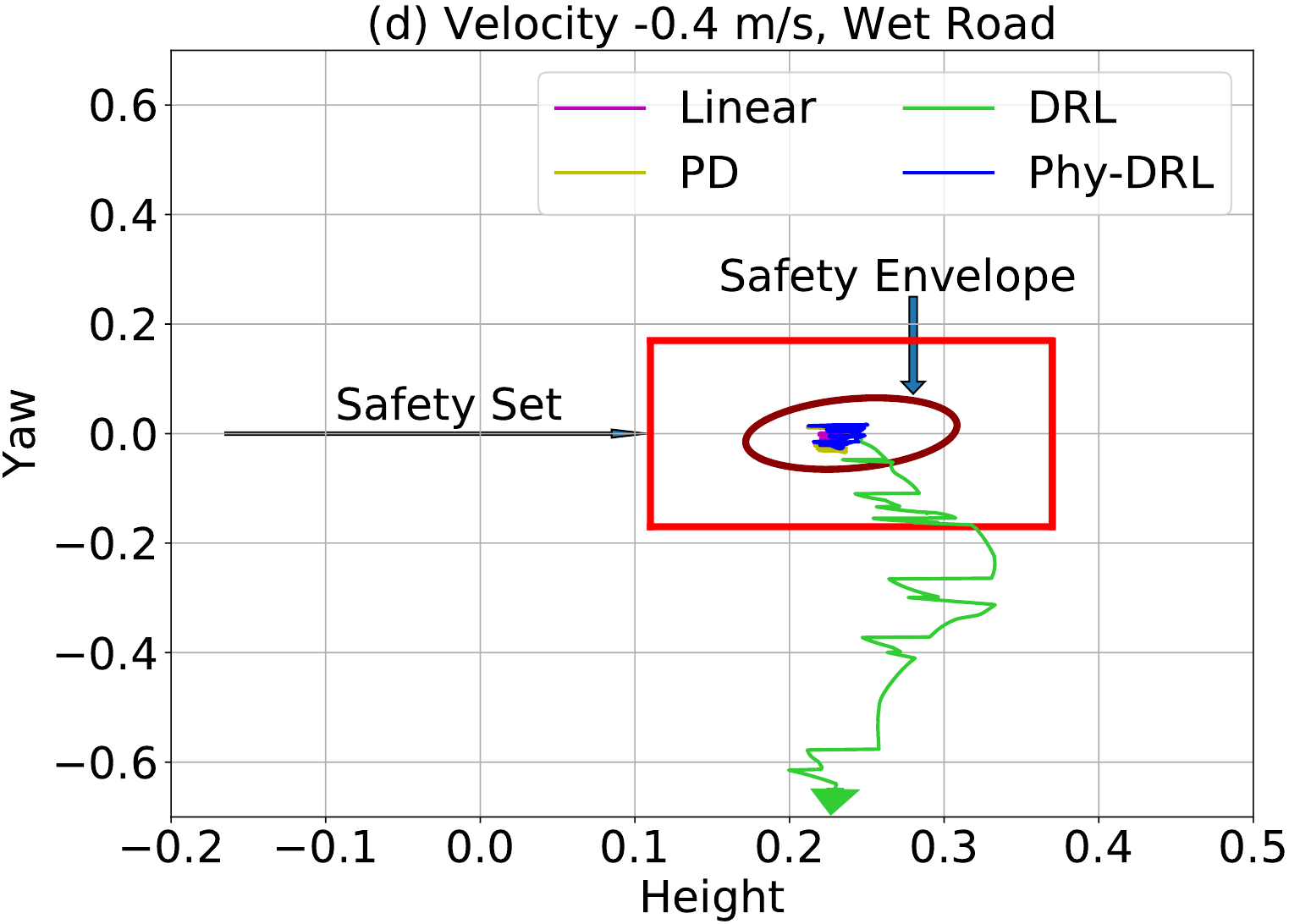}
  \end{tabular}
  \vspace{-0.30cm}
  \caption{Phase plots of models running different environments, given different velocity commands.}
  \vspace{-0.25cm}
  \label{ccubopdogerf}
\end{figure}

The designs of model-based policy and reward appear in \cref{expsup1modeldesign} and the training details are presented in \cref{knoocacoo} and \cref{ccooddff}. To demonstrate the performance of trained Phy-DRL, we consider the comparisons of four policies: \\
-- \textbf{Phy-DRL} policy, whose network configurations are summarized in the model \textbf{PKN-15} in \cref{taboo} in \cref{knoocacoo}. Its total number of training steps is \underline{only $10^{6}$}. \\
-- \textbf{DRL} policy, denoting a purely data-driven action policy trained in standard DRL. Its network configurations are summarized in the model \textbf{FC MLP} in \cref{taboo}. Its reward for training is the CLF proposed in \cite{westenbroek2022lyapunov} and given in \cref{ubcreward}, where the $\mathbf{P}$ is the same as the one in the Phy-DRL's safety-embedded reward. Its number of training steps is  \underline{large as $10^{7}$}. \\
-- \textbf{PD} policy, denoting a default proportional-derivative controller developed in \cite{da2021learning}. \\
-- \textbf{Linear} policy, which is the sole model-based action policy \eqref{modelbased} used in Phy-DRL. 

We compare the four policies in four testing environments: a) $r_{\text{x}} = 1$ m/s and snow road, b) $r_{\text{x}} = 0.5$ m/s and snow road, c) $r_{\text{x}} = -1.4$ m/s and wet road, and d) $r_{\text{x}} = -0.4$ m/s and wet road. The links to demonstration videos are available at \cref{demolinks}, with \cref{ccubopdogerf} showing that Phy-DRL successfully constraints the robot's states to a safety set. Given more reasonable velocity commands in environments b) and d), Phy-DRL can also successfully constrain system states to the safety envelope. The Linear and PD policies can only constrain system states to a safety envelope in environment d). The DRL policy violates the safety requirements in all environments, which implies that purely data-driven DRL needs more training steps to search for a safe and robust policy. Meanwhile, \cref{velocitytrack}, \cref{knoocacoo}, and \cref{ccooddff} show that Phy-DRL features remarkably better velocity-regulation performance, fewer learning parameters, and fast and stable training.

\section{Conclusion and Discussion}
This paper proposes Phy-DRL: a physics-regulated deep reinforcement learning framework for safety-critical autonomous systems. Phy-DRL exhibits a mathematically provable safety guarantee. Compared with purely data-driven DRL and solely model-based design, Phy-DRL features fewer learning parameters and fast and stable training while offering enhanced safety assurance. 

We recall the computation of matrix $\mathbf{P}$ (used in defining reward and safety envelope) depends on a linear model. However, if the linear model's mismatch is large, no safe policies may exist to render the safety envelope -- defined by $\mathbf{P}$ -- invariant. How to address safety concerns induced by faulty $\mathbf{P}$ constitutes our future research. We also note the derived condition of safety guarantee in \cref{thm1} is not yet ready for a practical testing procedure due to the necessary full coverage testing within the domain \textbf{$\Omega$}. Transforming the theoretical safety conditions into practical and efficient ones will be another future research.

% ready for safety testing and verification due to `$\forall k \in \mathbb{N}$' and `$\forall \mathbf{s}(1) \in {\Omega}$.' Transforming the theoretical safety conditions into practical and efficient ones will be another research direction.

\newpage
\section{REPRODUCIBILITY STATEMENT}
The code to reproduce our experimental results and supplementary materials are available at \url{https://github.com/HP-CAO/phy_rl}.

The experimental settings are described in \cref{CCFFGG}, \cref{traincartpole}, and \cref{traincartpoledog}.

\section{ACKNOWLEDGEMENTS}
We would like to first thank the anonymous reviewers for their helpful feedback, thoughtful reviews, and insightful comments. We appreciate Mirco Theile's helpful suggestions regarding the technical details, which inspire our future research directions. We also thank Yihao Cai for his help in deploying Phy-DRL on a physical quadruped robot.

This work was partly supported by the National Science Foundation under Grant CPS-2311084 and Grant CPS-2311085 and the Alexander von Humboldt Professorship Endowed by the German Federal Ministry of Education and Research.

\bibliography{iclr2024_conference}
\bibliographystyle{iclr2024_conference}

%\appendix
%\section{Appendix}
%You may include other additional sections here.

%%%%%%%%%%%%%%%%%%%%%%%%%%%%%%%%%%%%%%%%%%%%%%%%%%%%%%%%%%%%%%%%%%%%%%%%%%%
\newpage
\appendix
\appendixpage

\startcontents[sections]
\printcontents[sections]{l}{1}{\setcounter{tocdepth}{2}}

\newpage
\section{Notations throughout Paper} \label{appnotation}
\begin{table}[ht]
\centering
\caption{Notation}
\begin{tabular}{|l|l|}
\hline
$\mathbb{R}^{n}$  &   set of $\emph{n}$-dimensional real vectors        \\ \hline
$\mathbb{N}$   &  set of natural numbers       \\ \hline
$\text{len}(\mathbf{s})$ & length of vector $\mathbf{s}$  \\ \hline
$[\mathbf{x}]_{i}$ & $i$-th entry of vector $\mathbf{x}$       \\ \hline
$[\mathbf{x}]_{i:j}$ & a sub-vector formed by the $i$-th to $j$-th entries of vector $\mathbf{x}$       \\ \hline
$[\mathbf{W}]_{i,:}$ & $i$-th row of matrix $\mathbf{W}$ \\ \hline$[\mathbf{W}]_{i,j}$ & element at row $i$ and column $j$ of matrix $\mathbf{W}$\\ \hline
$\mathbf{P} \succ 0$ & matrix $\mathbf{P}$ is positive definite \\ \hline  
$\mathbf{P} \prec 0$ & matrix $\mathbf{P}$ is negative definite \\ \hline 
$\top$ & matrix or vector transposition  \\ \hline
$\mathbf{I}_{n}$ &  $n \times n$-dimensional identity matrix  \\ \hline 
$\mathbf{1}_{n}$ & $n$-dimensional vector of all ones     \\  \hline
$\mathbf{0}_{n}$ & $n$-dimensional vector of all zeros      \\  \hline
$\odot$ & Hadamard product  \\ \hline $\left[\mathbf{x}~;~\mathbf{y}\right]$ & stacked (tall column) vector of vectors $\mathbf{x}$ and  $\mathbf{y}$      \\  \hline
$\text{act}$ & activation function      \\  \hline
$\mathbf{O}_{m \times n}$ & $m \times n$-dimensional zero matrix  \\ \hline 
$\boxplus$ & an element in knowledge set \\ \hline 
$*$ & a learning element \\ \hline
$\mathbb{X} \setminus \Omega$ & complement set of $\Omega$  with respect to $\mathbb{X}$\\ \hline
\end{tabular} \label{notation}
\end{table}

\newpage

\section{Auxiliary Lemmas} \label{Aux}
\begin{lemma} [Schur Complement \cite{zhang2006schur}]
For any symmetric matrix $\mathbf{M} = \left[\!\! {\begin{array}{*{20}{c}}
\mathbf{A}&\mathbf{B}\\
{{\mathbf{B}^\top}}&\mathbf{C}
\end{array}} \!\!\right]$, then $\mathbf{M} \succ 0$ if and only if $\mathbf{C} \succ 0$ and $\mathbf{A} - \mathbf{B}\mathbf{C}^{-1}\mathbf{B}^\top \succ 0$. \label{auxlem2}
\end{lemma}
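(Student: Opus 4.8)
The plan is to prove both directions of the equivalence at once through a block congruence (symmetric Gaussian elimination) that reduces $\mathbf{M}$ to block-diagonal form, after which positive definiteness decouples into the two stated conditions. The algebraic engine is the identity
\begin{equation}
\mathbf{M} = \mathbf{T}^\top \begin{bmatrix} \mathbf{A} - \mathbf{B}\mathbf{C}^{-1}\mathbf{B}^\top & \mathbf{0} \\ \mathbf{0} & \mathbf{C} \end{bmatrix} \mathbf{T}, \qquad \mathbf{T} = \begin{bmatrix} \mathbf{I} & \mathbf{0} \\ \mathbf{C}^{-1}\mathbf{B}^\top & \mathbf{I} \end{bmatrix},
\end{equation}
which one verifies by direct block multiplication (using $\mathbf{C}^\top = \mathbf{C}$). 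Since $\mathbf{T}$ is unit lower-triangular it is invertible, so $\mathbf{M}$ and the middle block-diagonal matrix, call it $\mathbf{S}$, are congruent; by Sylvester's law of inertia, congruence by an invertible matrix preserves the signs of the eigenvalues, whence $\mathbf{M} \succ 0$ if and only if $\mathbf{S} \succ 0$. Because $\mathbf{S}$ is block diagonal, $\mathbf{S} \succ 0$ holds precisely when both diagonal blocks are positive definite, i.e. $\mathbf{C} \succ 0$ and $\mathbf{A} - \mathbf{B}\mathbf{C}^{-1}\mathbf{B}^\top \succ 0$, which is exactly the claimed equivalence.

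The one point requiring care is that the factorization already presupposes that $\mathbf{C}^{-1}$ exists, so I would organize the argument so that invertibility of $\mathbf{C}$ is always established before $\mathbf{C}^{-1}$ is written. For the ``if'' direction this is immediate: we are handed $\mathbf{C} \succ 0$, hence $\mathbf{C}$ is invertible, the identity is legitimate, and $\mathbf{S} \succ 0$ yields $\mathbf{M} = \mathbf{T}^\top \mathbf{S}\mathbf{T} \succ 0$. For the ``only if'' direction I would first extract $\mathbf{C} \succ 0$ directly from $\mathbf{M} \succ 0$ before invoking the factorization: testing the quadratic form on vectors of the form $\mathbf{x} = [\mathbf{0};\, \mathbf{z}]$ gives $\mathbf{x}^\top \mathbf{M}\mathbf{x} = \mathbf{z}^\top \mathbf{C}\mathbf{z} > 0$ for all $\mathbf{z}\neq \mathbf{0}$, so $\mathbf{C} \succ 0$ and in particular is invertible; the factorization is then valid, $\mathbf{S} = (\mathbf{T}^{-1})^\top \mathbf{M}\mathbf{T}^{-1}\succ 0$, and its $(1,1)$ block $\mathbf{A} - \mathbf{B}\mathbf{C}^{-1}\mathbf{B}^\top$ is therefore positive definite.

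As a self-contained alternative that avoids quoting Sylvester's law, I would instead complete the square in the quadratic form: writing $\mathbf{x} = [\mathbf{u};\, \mathbf{w}]$ and assuming $\mathbf{C}$ invertible, a direct computation (equivalently, substituting $\mathbf{M}=\mathbf{T}^\top\mathbf{S}\mathbf{T}$) gives $\mathbf{x}^\top \mathbf{M}\mathbf{x} = \mathbf{u}^\top(\mathbf{A} - \mathbf{B}\mathbf{C}^{-1}\mathbf{B}^\top)\mathbf{u} + (\mathbf{w} + \mathbf{C}^{-1}\mathbf{B}^\top\mathbf{u})^\top \mathbf{C}(\mathbf{w} + \mathbf{C}^{-1}\mathbf{B}^\top\mathbf{u})$. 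Positive definiteness of both blocks then makes each summand nonnegative and the sum strictly positive for $\mathbf{x}\neq\mathbf{0}$; conversely, setting $\mathbf{w} = -\mathbf{C}^{-1}\mathbf{B}^\top\mathbf{u}$ isolates the Schur-complement term and setting $\mathbf{u}=\mathbf{0}$ isolates the $\mathbf{C}$ term, recovering both necessary conditions. The main obstacle throughout is purely bookkeeping, namely keeping invertibility of $\mathbf{C}$ logically ahead of every appearance of $\mathbf{C}^{-1}$, since the remainder is a one-line matrix identity.
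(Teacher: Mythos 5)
Your proof is correct, and both variants (the congruence $\mathbf{M} = \mathbf{T}^\top \mathbf{S}\, \mathbf{T}$ with Sylvester's law of inertia, and the completion-of-squares computation on the quadratic form) are sound; in particular you handle the one genuine subtlety properly by establishing $\mathbf{C} \succ 0$, hence invertibility of $\mathbf{C}$, from test vectors of the form $[\mathbf{0};\, \mathbf{z}]$ \emph{before} ever writing $\mathbf{C}^{-1}$ in the ``only if'' direction. There is, however, nothing in the paper to compare against: the paper states this result as an auxiliary lemma and imports it by citation from the Schur-complement literature \cite{zhang2006schur}, giving no proof of its own. Your argument is the standard textbook proof of that cited result, so it supplies a self-contained justification that the paper deliberately omits; the block factorization you use is exactly the mechanism implicitly invoked when the paper applies the lemma to pass from the linear matrix inequality \eqref{th1} to the condition \eqref{pth2} in the proof of Theorem \ref{thm1}.
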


\begin{lemma}
Corresponding to the set ${\mathbb{X}}$ defined in \cref{aset2}, we define:  
\begin{align}
{\widehat{\mathbb{X}}} &\triangleq \left\{ {\left. {\mathbf{s} \in {\mathbb{R}^n}} \right| -\mathbf{1}_{h} \le  {\mathbf{d}} \le \underline{\mathbf{D}} \cdot \mathbf{s}, ~~~\text{and}~~
\overline{\mathbf{D}} \cdot \mathbf{s} \le \mathbf{1}_{h}}\right\}. \label{set2} 
\end{align}
The sets ${\mathbb{X}} = \widehat{{\mathbb{X}}}$ if and only if~ $\overline{\mathbf{D}} = \frac{{\mathbf{D}}}{\overline{\Lambda}}$ and $\underline{\mathbf{D}} = \frac{{\mathbf{D}}}{\underline{\Lambda}}$, where $\mathbf{d}$, 
$\overline{\Lambda}$ and $\underline{\Lambda}$ are defined in Lemma \ref{safelemma}. 
\label{auxlem} 
\end{lemma}

\begin{proof}
The condition of set defined in \cref{aset2} is equivalent to 
\begin{align}
\left[\underline{\mathbf{v}} + \mathbf{v}\right]_{i} \le \left[{\mathbf{D}}\right]_{i,:} \cdot \mathbf{s} \le \left[\overline{\mathbf{v}} + \mathbf{v}\right]_{i}, ~~i \in \{1,2,\ldots, h\}, \label{aset2p1}
\end{align}
based on which, we consider three cases. 

\underline{Case One: If $\left[\underline{\mathbf{v}} + \mathbf{v}\right]_{i} > 0$}, we obtain from \cref{aset2p1} that $\left[\overline{\mathbf{v}} + \mathbf{v}\right]_{i} > 0$ as well, such that the \cref{aset2p1} can be rewritten equivalently as 
\begin{align}
\frac{\left[{\mathbf{D}}\right]_{i,:} \cdot \mathbf{s}}{\left[\overline{\mathbf{v}} + \mathbf{v}\right]_{i}} = \frac{\left[{\mathbf{D}}\right]_{i,:} \cdot \mathbf{s}}{[\overline{\Lambda}]_{i,i}} \le 1, ~\text{and}~\frac{\left[{\mathbf{D}}\right]_{i,:}\cdot\mathbf{s}}{\left[\underline{\mathbf{v}} + \mathbf{v}\right]_{i}} = \frac{\left[{\mathbf{D}}\right]_{i,:}\cdot\mathbf{s}}{[\underline{\Lambda}]_{i,i}} \ge 1 = [{\mathbf{d}}]_{i},~i \in \{1,2,\ldots, h\}, \label{aset2p2}
\end{align}
which is obtained via considering the second items of $[\overline{\Lambda}]_{i,j}$ and $ [\underline{\Lambda}]_{i,j}$ and the first item of $[{\mathbf{d}}]_{i}$, presented in Lemma \ref{safelemma}. 

\underline{Case Two: If $\left[\overline{\mathbf{v}} + \mathbf{v}\right]_{i} < 0$}, we obtain from \cref{aset2p1} that $\left[\underline{\mathbf{v}} + \mathbf{v}\right]_{i} < 0$ as well, such that the \cref{aset2p1} can be rewritten equivalently as 
\begin{align}
\frac{\left[{\mathbf{D}}\right]_{i,:} \cdot \mathbf{s}}{\left[\underline{\mathbf{v}} + \mathbf{v}\right]_{i}} = \frac{\left[{\mathbf{D}}\right]_{i,:} \cdot \mathbf{s}}{[\overline{\Lambda}]_{i,i}} \le 1, ~~\text{and}~~\frac{\left[{\mathbf{D}}\right]_{i,:} \cdot \mathbf{s}}{\left[\overline{\mathbf{v}} + \mathbf{v}\right]_{i}} = \frac{\left[{\mathbf{D}}\right]_{i,:}\mathbf{s}}{[\underline{\Lambda}]_{i,i}} \ge 1  = [{\mathbf{d}}]_{i},~~i \in \{1,2,\ldots, h\}, \label{aset2p2}
\end{align}
which is obtained via considering the third items of $[\overline{\Lambda}]_{i,j}$ and $ [\underline{\Lambda}]_{i,j}$ and the second item of $[{\mathbf{d}}]_{i}$, presented in Lemma \ref{safelemma}. 

\underline{Case Three: If $\left[\overline{\mathbf{v}} + \mathbf{v}\right]_{i} > 0$ and $\left[\underline{\mathbf{v}} + \mathbf{v}\right]_{i} < 0$}, the \cref{aset2p1} can be rewritten equivalently as 
\begin{align}
\!\!\frac{\left[{\mathbf{D}}\right]_{i,:} \cdot \mathbf{s}}{\left[\overline{\mathbf{v}} + \mathbf{v}\right]_{i}} = \frac{\left[{\mathbf{D}}\right]_{i,:}\mathbf{s}}{[\overline{\Lambda}]_{i,i}} \le 1, ~\text{and}~\frac{\left[{\mathbf{D}}\right]_{i,:} \cdot \mathbf{s}}{\left[-\underline{\mathbf{v}} - \mathbf{v}\right]_{i}} = \frac{\left[{\mathbf{D}}\right]_{i,:} \cdot \mathbf{s}}{[\underline{\Lambda}]_{i,i}} \ge -1 
 = [{\mathbf{d}}]_{i},~i \in \{1,2,\ldots, h\}, \label{aset2p3}
\end{align}
which is obtained via considering the fourth items of $[\overline{\Lambda}]_{i,j}$ and $ [\underline{\Lambda}]_{i,j}$ and the third item of $[{\mathbf{d}}]_{i}$, presented in Lemma \ref{safelemma}.

We note from the first items of $[\overline{\Lambda}]_{i,j}$ and $ [\underline{\Lambda}]_{i,j}$ in Lemma \ref{safelemma} that the defined $\overline{\Lambda}$ and $\underline{\Lambda}$ are diagonal matrices. The conjunctive results (\ref{aset2p1})--(\ref{aset2p3}) can thus be equivalent described by  
\begin{align}
\frac{{\mathbf{D}} \cdot \mathbf{s}}{\overline{\Lambda}} \le \mathbf{1}_{h}~~\text{and}~~\frac{{\mathbf{D}} \cdot \mathbf{s}}{\underline{\Lambda}} \ge {\mathbf{d}} \ge -\mathbf{1}_{h}, \nonumber
\end{align}
substituting $\overline{\mathbf{D}} = \frac{{\mathbf{D}}}{\overline{\Lambda}}$ and $\underline{\mathbf{D}} = \frac{{\mathbf{D}}}{\underline{\Lambda}}$ into which, we  obtain $-\mathbf{1}_{h} \le  {\mathbf{d}} \le \underline{\mathbf{D}} \cdot \mathbf{s}$, and $\overline{\mathbf{D}} \cdot \mathbf{s} \le \mathbf{1}_{h}$, which is the condition for defining the set ${\widehat{\mathbb{X}}}$ in \cref{set2}. We thus conclude the statement. 
\end{proof}

\newpage

\section{Proof of Lemma \ref{safelemma}} \label{appenvelope}

In light of \cref{auxlem} in \cref{Aux}, we have ${\mathbb{X}} = \widehat{{\mathbb{X}}}$. Therefore, to prove ${\Omega} \subseteq \mathbb{X}$, we consider the proof of ${\Omega} \subseteq \widehat{{\mathbb{X}}}$, which is carried out below. The $\widehat{{\mathbb{X}}}$ defined in \cref{set2} relies on two conjunctive conditions: $-\mathbf{1}_{h} \le  {\mathbf{d}} \le \underline{\mathbf{D}} \cdot \mathbf{s}$ and
$\overline{\mathbf{D}} \cdot \mathbf{s} \le \mathbf{1}_{h}$, based on which the proof is separated into two cases. 

\textbf{\underline{Case One: $\overline{\mathbf{D}} \cdot \mathbf{s} \le \mathbf{1}_{h}$}}, which can be rewritten as  $[\overline{\mathbf{D}} \cdot \mathbf{s}]_i \le 1$, $i \in \{1, 2, \ldots, h\}$. 

We next prove that $\mathop {\max }\limits_{\mathbf{s} \in {\Omega}} \left\{ {{{\left[ {\overline{\mathbf{D}} \cdot \mathbf{s}} \right]}_i}} \right\} = \sqrt {{{\left[ {{\overline{\mathbf{D}}} \cdot \mathbf{P}^{ - 1} \cdot \overline{\mathbf{D}}^\top} \right]}_{i,i}}}, ~~i \in \{1, 2, \ldots, h\}$. To achieve this, let us consider the constrained optimization problem: 
\begin{align}
\max \left\{ {{{\left[ {\overline{\mathbf{D}} \cdot \mathbf{s}} \right]}_i}} \right\}, ~~\text{subject to}~{\mathbf{s}^\top} \cdot \mathbf{P} \cdot \mathbf{s} \le 1, ~~~~i \in \{1, 2, \ldots, h\}. \nonumber
\end{align} 
Let $\mathbf{s}^*$  be the optimal solution. Then, according to the Kuhn-Tucker conditions, we have 
\begin{align}
\left[ \overline{\mathbf{D}} \right]_{i,:}^\top - 2\lambda \cdot  {\mathbf{P}} \cdot {\mathbf{s}^*} = 0, ~~~~~~\lambda \cdot ({1 - {{({{\mathbf{s}^*}})^\top}} \cdot \mathbf{P} \cdot {\mathbf{s}^*}}) = 0, \nonumber
\end{align} 
which, in conjunction with $\lambda  > 0$, lead to 
\begin{align}
&2\lambda \cdot {\mathbf{P}} \cdot {\mathbf{s}^*} = \left[ \overline{\mathbf{D}} \right]_{i,:}^\top, \label{set7}\\
&{{( {{\mathbf{s}^*}})^\top}} \cdot \mathbf{P} \cdot {\mathbf{s}^*} = 1. \label{set8}
\end{align}
Multiplying both left-hand sides of \cref{set7} by $({\mathbf{s}^*})^\top$ yields $2\lambda \cdot ({\mathbf{s}^*})^\top \cdot {\mathbf{P}} \cdot {\mathbf{s}^*} = ({\mathbf{s}^*})^\top \cdot \left[ \overline{\mathbf{D}} \right]_{i,:}^\top$, which, in conjunction with \cref{set8}, results in 
\begin{align}
2\lambda = ({\mathbf{s}^*})^\top \cdot \left[ \overline{\mathbf{D}} \right]_{i,:}^\top > 0. ~\label{set9}
\end{align}
Multiplying both left-hand sides of \cref{set7} by $\mathbf{P}^{-1}$ leads to $2\lambda \cdot {\mathbf{s}^*} =  {\mathbf{P}^{-1}} \cdot\left[ \overline{\mathbf{D}} \right]_{i,:}^\top$, multiplying both left-hand sides of which by $\left[ \overline{\mathbf{D}} \right]_{i,:}$,
we arrive in 
\begin{align}
2\lambda \cdot \left[ \overline{\mathbf{D}} \right]_{i,:} \cdot {\mathbf{s}^*} = \left[ \overline{\mathbf{D}} \right]_{i,:} \cdot {\mathbf{P}^{-1}} \cdot \left[ \overline{\mathbf{D}} \right]_{i,:}^\top. \label{set10}
\end{align}
Substituting \cref{set9} into \cref{set10}, we obtain $({\mathbf{s}^*})^\top \cdot \left[ \overline{\mathbf{D}} \right]_{i,:}^\top \cdot \left[ \overline{\mathbf{D}} \right]_{i,:} \cdot {\mathbf{s}^*} = \left[ \overline{\mathbf{D}} \right]_{i,:} \cdot {\mathbf{P}^{-1}} \cdot \left[ \overline{\mathbf{D}} \right]_{i,:}^\top$, from which we can have 
$({\mathbf{s}^*})^\top \cdot \left[ \overline{\mathbf{D}} \right]_{i,:}^\top = \sqrt {\left[ \overline{\mathbf{D}} \right]_{i,:} \cdot {\mathbf{P}^{-1}} \cdot \left[ \overline{\mathbf{D}} \right]_{i,:}^\top} > 0$, which with \cref{set9} indicate:  
\begin{align}
2\lambda = \sqrt {\left[ \overline{\mathbf{D}} \right]_{i,:} \cdot {\mathbf{P}^{-1}} \cdot \left[ \overline{\mathbf{D}} \right]_{i,:}^\top}. \label{set11}
\end{align}

We note that the \cref{set7} is equivalent to ${\mathbf{s}^*} = \frac{1}{{2\lambda }} \cdot {\mathbf{P}^{-1}} \cdot \left[ \overline{\mathbf{D}} \right]_{i,:}^\top$, substituting \cref{set11} into which results in  ${\mathbf{s}^*} = \frac{1}{\sqrt {\left[ \overline{\mathbf{D}} \right]_{i,:} \cdot {\mathbf{P}^{-1}} \cdot \left[ \overline{\mathbf{D}} \right]_{i,:}^\top}} \cdot {\mathbf{P}^{-1}} \cdot \left[ \overline{\mathbf{D}} \right]_{i,:}^\top$, multiplying both sides of which by $\left[ \overline{\mathbf{D}} \right]_{i,:}$ means 
\begin{align}
\mathop {\max }\limits_{\mathbf{s} \in {\Omega}} \left\{ {{{\left[ {\overline{\mathbf{D}} \cdot \mathbf{s}} \right]}_i}} \right\} = \mathop {\max }\limits_{\mathbf{s} \in {\Omega}} \left\{ \left[ \overline{\mathbf{D}}\right]_{i,:} \cdot {\mathbf{s}} \right\} = \left[ \overline{\mathbf{D}} \right]_{i,:} \cdot {\mathbf{s}^*} 
&= \frac{\left[ \overline{\mathbf{D}} \right]_{i,:} \cdot {\mathbf{P}^{-1} \cdot \left[ \overline{\mathbf{D}} \right]_{i,:}^\top}}{\sqrt {\left[ \overline{\mathbf{D}} \right]_{i,:} \cdot {\mathbf{P}^{-1}} \cdot \left[ \overline{\mathbf{D}} \right]_{i,:}^\top}}  \nonumber\\
&= \sqrt {\left[ \overline{\mathbf{D}} \right]_{i,:} \cdot {\mathbf{P}^{-1}} \cdot \left[ \overline{\mathbf{D}}^\top \right]_{:,i}}, ~~~i \in \{1, 2, \ldots, h\} \nonumber
\end{align}
which means 
\begin{align}
\mathop {\max }\limits_{\mathbf{s} \in {\Omega}} \left\{ {{{\left[ {\overline{\mathbf{D}} \cdot \mathbf{s}} \right]}_i}} \right\} \le 1 ~\text{if and only if}~\left[ \overline{\mathbf{D}} \right]_{i,:} \cdot {\mathbf{P}^{-1}} \cdot \left[ \overline{\mathbf{D}}^\top \right]_{:,i} \le 1, ~~~~i \in \{1, 2, \ldots, h\}, \label{set13}
\end{align}
which further implies that 
\begin{align}
\overline{\mathbf{D}} \cdot \mathbf{s} \le \mathbf{1}_{h}, ~~~~~\text{if}~~\left[ \overline{\mathbf{D}} \right]_{i,:} \cdot {\mathbf{P}^{-1}} \cdot \left[ \overline{\mathbf{D}}^\top \right]_{:,i} \le 1, ~~i \in \{1, 2, \ldots, h\}. \label{set14}
\end{align}

\textbf{\underline{Case Two: $-\mathbf{1}_{h} \le  {\mathbf{d}} \le \underline{\mathbf{D}} \cdot \mathbf{s}$}}, which includes two scenarios: $[{\mathbf{d}}]_i = 1$ and $[{\mathbf{d}}]_i = -1$, referring to ${\mathbf{d}}$ in Lemma \ref{safelemma}.  We first consider $[{\underline{\mathbf{D}}} \cdot \mathbf{s}]_i \ge - 1$, which can be rewritten as  $[\widehat{\underline{\mathbf{D}}} \cdot \mathbf{s}]_i \le - [{\mathbf{d}}]_i
=  1, i \in \{1, 2, \ldots, h\}$, with $\widehat{\underline{\mathbf{D}}} = -{\underline{\mathbf{D}}}$. Following the same steps to derive \cref{set13}, we obtain 
\begin{align}
\mathop {\max }\limits_{\mathbf{s} \in {\Omega}} \left\{ {{{\left[ {\widehat{\underline{\mathbf{D}}} \cdot \mathbf{s}} \right]}_i}} \right\} \le 1 ~\text{if and only if}~\left[ \widehat{\underline{\mathbf{D}}} \right]_{i,:} \cdot {\mathbf{P}^{-1}} \cdot \left[ \widehat{\underline{\mathbf{D}}}^\top \right]_{:,i} < 1, ~~i \in \{1, 2, \ldots, h\}, \nonumber
\end{align}
which with $\widehat{\underline{\mathbf{D}}} = -{\underline{\mathbf{D}}}$ indicate that 
\begin{align}
\mathop {\min }\limits_{\mathbf{s} \in {\Omega}} \left\{ {{{\left[ {{\underline{\mathbf{D}}} \cdot \mathbf{s}} \right]}_i}} \right\} \ge -1 ~\text{iff}~\left[{\underline{\mathbf{D}}} \right]_{i,:} \cdot {\mathbf{P}^{-1}} \cdot \left[ {\underline{\mathbf{D}}}^\top \right]_{:,i} < 1, ~~[{\mathbf{d}}]_i = -1,~~i \in \{1, 2, \ldots, h\}. \label{set14ff}
\end{align}

We next consider $\underline{[{\mathbf{d}}]_i = 1}$. We prove in this scenario, $\mathop {\min}\limits_{\mathbf{s} \in {\Omega}} \left\{ {{{\left[ {\underline{\mathbf{D}}} \cdot \mathbf{s} \right]}_i}} \right\} = \sqrt {{{\left[ {{\underline{\mathbf{D}}} \cdot \mathbf{P}^{ - 1} \cdot {\underline{\mathbf{D}}}^\top} \right]}_{i,i}}}$. To achieve this, let us consider the constrained optimization problem: 
\begin{align}
\min \left\{ {{\left[ {{\underline{\mathbf{D}}} \cdot \mathbf{s}} \right]}_i} \right\}, ~~\text{subject to}~{\mathbf{s}^\top} \cdot \mathbf{P} \cdot \mathbf{s} \le 1. \nonumber
\end{align} 
Let $\hat{\mathbf{s}}^*$  be the optimal solution. Then, according to the Kuhn-Tucker conditions, we have 
\begin{align}
\left[ \underline{\mathbf{D}} \right]_{i,:}^\top + 2\hat{\lambda} \cdot {\mathbf{P}} \cdot \hat{\mathbf{s}}^* = 0, ~~~~~\hat{\lambda} \cdot ({1 - {{({{\hat{\mathbf{s}}^*}})^\top}} \cdot \mathbf{P} \cdot {\hat{\mathbf{s}}^*}}) = 0, \nonumber
\end{align} 
which, in conjunction with $\hat{\lambda}  < 0$, leads to 
\begin{align}
&2\hat{\lambda} \cdot {\mathbf{P}} \cdot {\hat{\mathbf{s}}^*} = -\left[ \underline{\mathbf{D}} \right]_{i,:}^\top, \label{set7oo}\\
&{{(\hat{\mathbf{s}}^*)^\top}} \cdot \mathbf{P} \cdot {\hat{\mathbf{s}}^*} = 1. \label{set8oo}
\end{align}
Multiplying both left-hand sides of \cref{set7oo} by $(\hat{\mathbf{s}}^*)^\top$ yields $2\hat{\lambda} \cdot (\hat{\mathbf{s}}^*)^\top \cdot {\mathbf{P}} \cdot {\hat{\mathbf{s}}^*} = -(\hat{\mathbf{s}}^*)^\top \cdot \left[ \underline{\mathbf{D}} \right]_{i,:}^\top$, which, in conjunction with \cref{set8oo}, results in 
\begin{align}
2\hat{\lambda} = -(\hat{\mathbf{s}}^*)^\top \cdot \left[ \underline{\mathbf{D}} \right]_{i,:}^\top < 0. ~\label{set9oo}
\end{align}
Multiplying both left-hand sides of \cref{set7oo} by $\mathbf{P}^{-1}$ leads to $2\hat{\lambda} \cdot \hat{\mathbf{s}}^* =  -{\mathbf{P}^{-1}} \cdot \left[ \underline{\mathbf{D}} \right]_{i,:}^\top$, multiplying both left-hand sides of which by $\left[ \underline{\mathbf{D}} \right]_{i,:}$,
we arrive in 
\begin{align}
2\hat{\lambda} \cdot \left[ \underline{\mathbf{D}} \right]_{i,:} \cdot {\hat{\mathbf{s}}^*} = -\left[ \underline{\mathbf{D}} \right]_{i,:} \cdot {\mathbf{P}^{-1}} \cdot \left[ \underline{\mathbf{D}} \right]_{i,:}^\top. \label{set10oo}
\end{align}
Substituting \cref{set9oo} into \cref{set10oo}, we obtain $-(\hat{\mathbf{s}}^*)^\top \cdot \left[ \underline{\mathbf{D}} \right]_{i,:}^\top \cdot \left[ \underline{\mathbf{D}} \right]_{i,:} \hat{\mathbf{s}}^* = -\left[ \underline{\mathbf{D}} \right]_{i,:} \cdot {\mathbf{P}^{-1}} \cdot \left[ \underline{\mathbf{D}} \right]_{i,:}^\top$, which together with \cref{set9oo} indicate the solution: 
\begin{align}
2\hat{\lambda} = -(\hat{\mathbf{s}}^*)^\top \cdot \left[ \underline{\mathbf{D}} \right]_{i,:}^\top = -\sqrt {\left[ \underline{\mathbf{D}} \right]_{i,:} \cdot {\mathbf{P}^{-1}} \cdot \left[ \underline{\mathbf{D}} \right]_{i,:}^\top}. \label{set11oo}
\end{align}

We note that the \cref{set7oo} is equivalent to $\hat{\mathbf{s}}^* = -\frac{1}{{2\hat{\lambda}}} \cdot {\mathbf{P}^{-1}} \cdot \left[ \underline{\mathbf{D}} \right]_{i,:}^\top$, substituting \cref{set11oo} into which results in  $\hat{\mathbf{s}}^* = \frac{1}{\sqrt {\left[ \underline{\mathbf{D}} \right]_{i,:} \cdot {\mathbf{P}^{-1} } \cdot \left[ \underline{\mathbf{D}} \right]_{i,:}^\top}} {\mathbf{P}^{-1}} \cdot \left[ \underline{\mathbf{D}} \right]_{i,:}^\top$, multiplying both sides of which by $\left[ \underline{\mathbf{D}} \right]_{i,:}$ means 
\begin{align}
\mathop {\min }\limits_{\mathbf{s} \in {\Omega}} \left\{ {{{\left[ {\underline{\mathbf{D}} \cdot \mathbf{s}} \right]}_i}} \right\} = \mathop {\min }\limits_{\mathbf{s} \in {\Omega}} \left\{ \left[ \underline{\mathbf{D}}\right]_{i,:} \cdot {\mathbf{s}} \right\} = \left[ \underline{\mathbf{D}} \right]_{i,:} \cdot \hat{\mathbf{s}}^* 
&= \frac{\left[ \underline{\mathbf{D}}\right]_{i,:} \cdot {\mathbf{P}^{-1}}\left[ \underline{\mathbf{D}} \right]_{i,:}^\top}{\sqrt {\left[ \underline{\mathbf{D}} \right]_{i,:} \cdot {\mathbf{P}^{-1}} \cdot \left[ \underline{\mathbf{D}} \right]_{i,:}^\top}}  \nonumber\\
&= \sqrt {\left[ \underline{\mathbf{D}} \right]_{i,:} \cdot {\mathbf{P}^{-1}} \cdot \left[ \underline{\mathbf{D}}^\top \right]_{:,i}}, ~~~~i \in \{1, 2, \ldots, h\}, \nonumber 
\end{align}
which means 
\begin{align}
\mathop {\min }\limits_{\mathbf{s} \in {\Omega}} \left\{ {{{\left[ {\underline{\mathbf{D}} \cdot \mathbf{s}} \right]}_i}} \right\} \ge 1 ~\text{iff}~\left[ \underline{\mathbf{D}} \right]_{i,:} \cdot {\mathbf{P}^{-1}} \cdot \left[ \underline{\mathbf{D}}^\top \right]_{:,i} \ge 1,   ~~[{\mathbf{d}}]_i = 1, ~~~~i \in \{1, 2, \ldots, h\}. \label{set13oof}
\end{align}

Summarizing \cref{set14ff} and \cref{set13oof}, with the consideration of ${\mathbf{d}}$ presented in Lemma \ref{safelemma}, we conclude that 
\begin{align}
\mathop {\min }\limits_{\mathbf{s} \in {\Omega}} \left\{ {{{\left[ {\underline{\mathbf{D}} \cdot \mathbf{s}} \right]}_i}} \right\} \ge [{\mathbf{d}}]_i ~~~\text{iff}~~~\left[ \underline{\mathbf{D}} \right]_{i,:} \cdot {\mathbf{P}^{-1}} \cdot \left[ \underline{\mathbf{D}}^\top \right]_{:,i} = \begin{cases} 
		\ge 1, \!\!&\text{if}~[{\mathbf{d}}]_i = 1\\
		\le 1, \!\!&\text{if}~[{\mathbf{d}}]_i = -1
	\end{cases},~~i \in \{1, 2, \ldots, h\}, \nonumber
\end{align}
which, in conjunction with the fact that ${{{{\underline{\mathbf{D}} \cdot \mathbf{s}} }}}$ and  ${\mathbf{d}}$ are vectors, implies that 
\begin{align}
{{{{\underline{\mathbf{D}} \cdot \mathbf{s}} }}}  \ge {\mathbf{d}} \ge -\mathbf{1}_{h}, ~\text{if}~\left[ \underline{\mathbf{D}} \right]_{i,:} \cdot {\mathbf{P}^{-1}} \cdot \left[ \underline{\mathbf{D}}^\top \right]_{:,i} = \begin{cases} 
		\ge 1, \!\!&\text{if}~[{\mathbf{d}}]_i = 1\\
		\le 1, \!\!&\text{if}~[{\mathbf{d}}]_i = -1
	\end{cases},~i \!\in\! \{1, 2, \ldots, h\}. \label{set15final}
\end{align}

We now conclude from \cref{set14} and \cref{set15final} that 
$\overline{\mathbf{D}} \cdot \mathbf{s} \le \mathbf{1}_{h}$ and ${{{{\underline{\mathbf{D}} \cdot \mathbf{s}} }}} \ge {\mathbf{d}} \ge -\mathbf{1}_{h}$ hold, if for any $i \in \{1, 2, \ldots, h\}$,
\begin{align}
\left[ \overline{\mathbf{D}} \right]_{i,:} \cdot {\mathbf{P}^{-1}} \cdot \left[ \overline{\mathbf{D}}^\top \right]_{:,i} \le 1  ~~\text{and}~~\left[ \underline{\mathbf{D}} \right]_{i,:} \cdot {\mathbf{P}^{-1}} \cdot \left[ \underline{\mathbf{D}}^\top \right]_{:,i} = \begin{cases} 
		\ge 1, \!\!&\text{if}~[{\mathbf{d}}]_i = 1\\
		\le 1, \!\!&\text{if}~[{\mathbf{d}}]_i = -1.
	\end{cases} \nonumber
\end{align}
Meanwhile, noticing the $\overline{\mathbf{D}} \cdot \mathbf{s} \le \mathbf{1}_{h}$ and ${{{{\underline{\mathbf{D}} \cdot \mathbf{s}} }}} \ge {\mathbf{d}} \ge -\mathbf{1}_{h}$ is the condition of forming the set $\widehat{{\mathbb{X}}}$ in \cref{set2}, we can finally obtain \cref{set5}.

\newpage
%%%%%%%%%%%%%%%%%%%%%%%%%%%%%%%%%%%%%%%%%%%%%%%%%%%%%%%%%%%%%%%%%%%%%%%%%%%%%%%%%%%%%%%%%%%%%%%%%%%%%%%%%%%

\section{Theorem \ref{thm1}} \label{ppthm1org} 
\subsection{Proof of Theorem \ref{thm1}} \label{ppthm1}
We note that $\mathbf{Q} = \mathbf{Q}^\top$ and the  $\mathbf{F} = \mathbf{R} \cdot \mathbf{Q}^{-1}$ is equivalent to the $\mathbf{F} \cdot \mathbf{Q} = \mathbf{R}$,  substituting which into \cref{th1} yields 
\begin{align}
&\left[\! {\begin{array}{*{20}{c}}
\alpha \cdot \mathbf{Q} & \mathbf{Q} \cdot (\mathbf{A} + \mathbf{B} \cdot \mathbf{F})^\top\\
(\mathbf{A} + \mathbf{B}\cdot \mathbf{F})\cdot\mathbf{Q} & \mathbf{Q}
\end{array}}\!\right] \succ 0. \label{pth1}
\end{align}
We note \cref{pth1} implies $\alpha > 0$ and $\mathbf{Q} \succ 0$. Then, according to the auxiliary \cref{auxlem2} in \cref{Aux}, we have 
\begin{align}
\alpha \cdot \mathbf{Q} -  \mathbf{Q}\cdot(\mathbf{A} + \mathbf{B}\cdot\mathbf{F})^\top \cdot \mathbf{Q}^{-1} \cdot (\mathbf{A} + \mathbf{B} \cdot \mathbf{F}) \cdot \mathbf{Q} \succ 0. \label{pth2}
\end{align}
Since $\mathbf{P} = \mathbf{Q}^{-1}$, multiplying both left-hand and right-hand sides of \cref{pth2} by $\mathbf{P}$ we obtain 
\begin{align}
\alpha \cdot \mathbf{P} -  (\mathbf{A} + \mathbf{B} \cdot \mathbf{F})^\top \cdot \mathbf{P} \cdot (\mathbf{A} + \mathbf{B} \cdot \mathbf{F}) \succ 0,\nonumber
\end{align}
which, in conjunction with $\overline{\mathbf{A}}$ defined in \cref{asysma}, leads to 
\begin{align}
\alpha \mathbf{P} -  \overline{\mathbf{A}}^\top \cdot \mathbf{P} \cdot \overline{\mathbf{A}} \succ 0. \label{pth3}
\end{align}

We now define a function: 
\begin{align}
{V}\left(\mathbf{s}(k) \right) = {\mathbf{s}^\top(k)}\cdot {\mathbf{P}} \cdot \mathbf{s}(k).\label{pth0}
\end{align}
With the consideration of function in \cref{pth0}, along the real plant (\ref{realsys}) with \cref{residual} and \cref{modelbased}, we have 
\begin{align}
&V( {{\mathbf{s}(k+1)}})  \nonumber\\
& = {\left( {{{\bf{B}}} \cdot {{\bf{a}}_{\text{drl}}}(k) + {\bf{f}}({\bf{s}}(k),{\bf{a}}(k))} \right)^\top} \cdot {\bf{P}} \cdot \left( {{{\bf{B}}} \cdot {{\bf{a}}_{\text{drl}}}(k) + {\bf{f}}({\bf{s}}(k),{\bf{a}}(k))} \right) \nonumber\\
&\hspace{0.45cm} + 2{{\bf{s}}^\top}(k) \cdot {\bf{\overline A}}^\top \cdot {\bf{P}} \cdot \left( {{{\bf{B}}} \cdot {{\bf{a}}_{\text{drl}}}(k) + {\bf{f}}({\bf{s}}(k),{\bf{a}}(k))} \right) + {{\bf{s}}^\top}(k) \cdot ( {{\bf{\overline A}}^\top \cdot {\bf{P}} \!\cdot\! {{{\bf{\overline A}}}}}) \cdot {\bf{s}}(k) \label{pth0zz1}\\
& < {\left( {{{\bf{B}}} \cdot {{\bf{a}}_{\text{drl}}}(k) + {\bf{f}}({\bf{s}}(k),{\bf{a}}(k))} \right)^\top} \cdot {\bf{P}} \cdot \left( {{{\bf{B}}} \cdot {{\bf{a}}_{\text{drl}}}(k) + {\bf{f}}({\bf{s}}(k),{\bf{a}}(k))} \right) \nonumber\\
&\hspace{0.45cm} + 2{{\bf{s}}^\top}(k) \cdot {\bf{\overline A}}^\top \cdot {\bf{P}} \cdot \left( {{{\bf{B}}} \cdot {{\bf{a}}_{\text{drl}}}(k) + {\bf{f}}({\bf{s}}(k),{\bf{a}}(k))} \right) +  \alpha \cdot V( {{\mathbf{s}(k)}}), \label{pth0zz0}
\end{align}
where the \cref{pth0zz0} is obtained from its previous step via considering \cref{pth0} and \cref{pth3}. Observing \cref{pth0zz1}, we obtain 
\begin{align}
-r(\mathbf{s}(k), \mathbf{s}(k+1)) &= V( {{\mathbf{s}(k+1)}}) - {{\bf{s}}^\top}(k) \cdot ( {{\bf{\overline A}}^\top \cdot {\bf{P}} \cdot {{{\bf{\overline A}}}}}) \cdot {\bf{s}}(k)  \nonumber\\
& = {\left( {{{\bf{B}}} \cdot {{\bf{a}}_{\text{drl}}}(k) + {\bf{f}}({\bf{s}}(k),{\bf{a}}(k))} \right)^\top} \cdot {\bf{P}} \cdot \left( {{{\bf{B}}} \cdot {{\bf{a}}_{\text{drl}}}(k) + {\bf{f}}({\bf{s}}(k),{\bf{a}}(k))} \right) \nonumber\\
&\hspace{0.45cm} + 2{{\bf{s}}^\top}(k) \cdot {\bf{\overline A}}^\top \cdot {\bf{P}} \cdot \left( {{{\bf{B}}} \cdot {{\bf{a}}_{\text{drl}}}(k) + {\bf{f}}({\bf{s}}(k),{\bf{a}}(k)} \right) \label{pth0zz2}
\end{align}
where $r(\mathbf{s}(k), \mathbf{s}(k+1))$ is defined in \cref{reward}. Substituting \cref{pth0zz2} into \cref{pth0zz0} yields 
\begin{align}
V( {{\mathbf{s}(k+1)}}) < \alpha \cdot V( {{\mathbf{s}(k)}}) -r(\mathbf{s}(k), \mathbf{s}(k+1)), \nonumber
\end{align}
which further implies that 
\begin{align}
V( {{\mathbf{s}(k+1)}}) - V( {{\mathbf{s}(k)}}) < (\alpha - 1) \cdot V( {{\mathbf{s}(k)}}) - r(\mathbf{s}(k), \mathbf{s}(k+1)). \label{pth0zz3}
\end{align}

Since $0 < \alpha < 1$, we have $\alpha - 1 < 0$. So, the $(\alpha - 1) \cdot V\left( {{\mathbf{s}(k)}} \right) - r(\mathbf{s}(k), \mathbf{s}(k+1)) \ge 0$ means $V\left( {{\mathbf{s}(k)}} \right) \le \frac{r(\mathbf{s}(k), \mathbf{s}(k+1))}{\alpha - 1}$. Therefore, if $\frac{r(\mathbf{s}(k), \mathbf{s}(k+1))}{\alpha-1} \le 1$, we have 
\begin{align}
V\left( {{\mathbf{s}(k)}} \right) \le 1, ~~~~~~\text{and}~~~~~(\alpha - 1) \cdot V\left( {{\mathbf{s}(k)}} \right) - r(\mathbf{s}(k), \mathbf{s}(k+1)) \ge 0, \label{pth0zz4}
\end{align}
where the second inequality, in conjunction with \cref{pth0zz3}, implies that there exists a scalar $\theta$ such that 
\begin{align}
V( {{\mathbf{s}(k+1)}}) - V( {{\mathbf{s}(k)}}) < \theta, ~~~~~~\text{with}~~~~\theta > 0. \label{pth0zz5}
\end{align}

The result in \cref{pth0zz5} can guarantee the safety of real plants. To prove this, let's consider the worst-case scenario that the $V( {{\mathbf{s}(k)}})$ is strictly increasing with respect to time $k \in  \mathbb{N}$. So, starting from system state $\mathbf{s}(k)$ satisfying $V\left( {{\mathbf{s}(k)}} \right) \le \frac{r(\mathbf{s}(k), \mathbf{s}(k+1))}{\alpha - 1} \le 1$, $V\left( {{\mathbf{s}(k)}} \right)$ will increase to $V\left( {{\mathbf{s}(q)}} \right) = \frac{r(\mathbf{s}(q), \mathbf{s}(q+1))}{\alpha - 1} \le 1$, where $q > k \in \mathbb{N}$. 
Meanwhile, we note that the $(\alpha - 1) \cdot V\left( {{\mathbf{s}(k)}} \right) - r(\mathbf{s}(k), \mathbf{s}(k+1)) \le 0$ is equivalent to $V\left( {{\mathbf{s}(k)}} \right) \ge \frac{r(\mathbf{s}(k), \mathbf{s}(k+1))}{\alpha - 1}$, and also in conjunction with \cref{pth0zz3} implies that 
$V( {{\mathbf{s}(k+1)}}) - V( {{\mathbf{s}(k)}}) < 0$. These mean that if $V\left( {{\mathbf{s}(q)}} \right) = \frac{r(\mathbf{s}(q), \mathbf{s}(q+1))}{\alpha - 1} \le 1$ is achieved, the $V\left( {{\mathbf{s}(k)}} \right)$ will start decreasing \footnote{In practice, the control command at one control period should not drive the system to escape the safety envelop $\Omega$ from $V\left( {{\mathbf{s}(q)}} \right) = \frac{r(\mathbf{s}(q), \mathbf{s}(q+1))}{\alpha - 1}$.}. We thus conclude here that in the worst-case scenario, if starting from a point not larger than $\frac{r(\mathbf{s}(k), \mathbf{s}(k+1))}{\alpha - 1}$, i.e., $V\left( {{\mathbf{s}(k)}} \right) \le \frac{r(\mathbf{s}(k), \mathbf{s}(k+1))}{\alpha - 1} \le 1$, we have 
\begin{align}
V\left( {{\mathbf{s}(k)}} \right) \le 1,  ~~~\forall k \in \mathbb{N}. \label{pth0zz6}
\end{align}

We now consider the other case, i.e., $1 \ge V\left( {{\mathbf{s}(k)}} \right) > \frac{r(\mathbf{s}(k), \mathbf{s}(k+1))}{\alpha - 1}$. Recalling that in this case $V( {{\mathbf{s}(k+1)}}) - V( {{\mathbf{s}(k)}}) < 0$, which means $V\left( {{\mathbf{s}(k)}} \right)$ is strictly decreasing with respect to time $k$, until $V\left( {{\mathbf{s}(q)}} \right) \le \frac{r(\mathbf{s}(q), \mathbf{s}(q+1))}{\alpha - 1} < 1$, $q > k \in \mathbb{N}$. Then, following the same analysis path of the worst case, we can conclude \cref{pth0zz6} consequently. In other words, 
\begin{align}
{V}\left(\mathbf{s}(k) \right) \le 1, ~\forall k \in \mathbb{N}, ~~~\text{if}~{V}\left(\mathbf{s}(1) \right) \le 1, \nonumber
\end{align}
which, in conjunction with \cref{set3} and \cref{pth0}, lead to 
\begin{align}
\mathbf{s}(k) \in {\Omega}, ~\forall k \in \mathbb{N}, ~~~~\text{if}~~\mathbf{s}(1) \in {\Omega}.\label{pth0zz11}
\end{align}
This proof path is illustrated in \cref{illpath}.

Finally, we note from Lemma \ref{safelemma} that the condition in \cref{set5} is to guarantee that 
$\Omega \subseteq \mathbb{X}$, which with \cref{pth0zz11} result in $\mathbf{s}(k) \in {\Omega} \subseteq \mathbb{X}$, $\forall k \in \mathbb{N}$,  if $\mathbf{s}(1) \in {\Omega}$, which completes the proof.

\begin{figure}[http]
	\centering
        \hspace{0.2cm}
        \includegraphics[width=0.75\columnwidth]{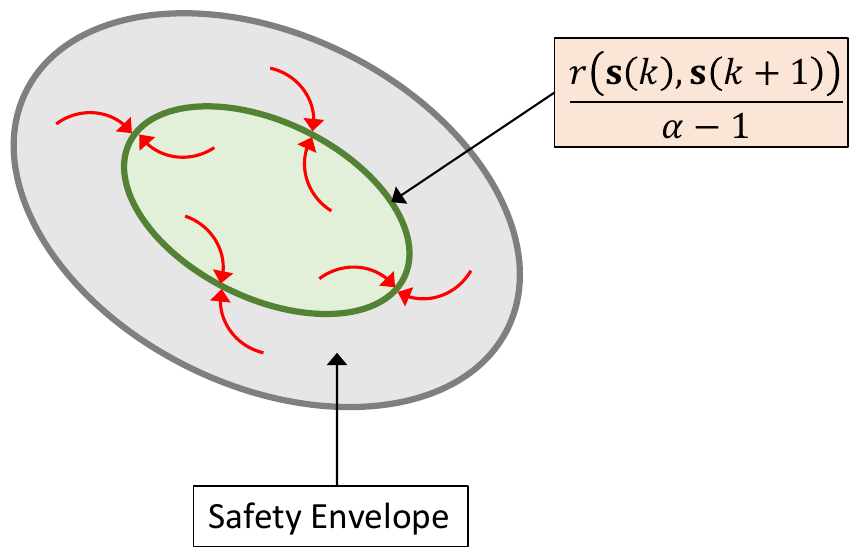}
	\caption{Illustration of the proof path of Theorem \ref{thm1}.}
        \label{illpath}
\end{figure}

\subsection{Extension: Explanation of Fast Training Toward Safety Guarantee} \label{ppthm100} 
The experimental results demonstrate that compared with purely data-driven DRL, our proposed Phy-DRL has much faster training toward safety guarantee, which can be explained by 
\cref{pth3}, \cref{pth0}, and \cref{pth0zz1}. Specifically, because of them and $0 < \alpha < 1$, we have 
\begin{align}
&V( {{\mathbf{s}(k+1)}}) - V( {{\mathbf{s}(k)}})   \nonumber\\
&\le V( {{\mathbf{s}(k+1)}}) - \alpha \cdot V( {{\mathbf{s}(k)}})   \nonumber\\
& = {\left( {{{\bf{B}}} \cdot {{\bf{a}}_{\text{drl}}}(k) + {\bf{f}}({\bf{s}}(k),{\bf{a}}(k))} \right)^\top} \cdot {\bf{P}} \cdot \left( {{{\bf{B}}} \cdot {{\bf{a}}_{\text{drl}}}(k) + {\bf{f}}({\bf{s}}(k),{\bf{a}}(k))} \right) \nonumber\\
&\hspace{0.45cm} + 2{{\bf{s}}^\top}(k) \cdot {\bf{\overline A}}^\top \cdot {\bf{P}} \cdot \left( {{{\bf{B}}} \cdot {{\bf{a}}_{\text{drl}}}(k) + {\bf{f}}({\bf{s}}(k),{\bf{a}}(k))} \right) + {{\bf{s}}^\top}(k) \cdot ( {{\bf{\overline A}}^\top \cdot {\bf{P}} \!\cdot\! {{{\bf{\overline A}}} - {\bf{P}}}}) \cdot {\bf{s}}(k). \label{pth0zz1exp}
\end{align}
The (off-line designed) model-based property in \cref{pth3} implies that \cref{pth0zz1exp} has a constant negative term, i.e., ${{\bf{s}}^\top}(k) \cdot ( {{\bf{\overline A}}^\top \cdot {\bf{P}} \!\cdot\! {{{\bf{\overline A}}} - \alpha \cdot {\bf{P}}}}) \cdot {\bf{s}}(k) < 0$. As depicted by \cref{illpathexp}, the ${{\bf{s}}^\top}(k) \cdot ( {{\bf{\overline A}}^\top \cdot {\bf{P}} \!\cdot\! {{{\bf{\overline A}}} - \alpha \cdot {\bf{P}}}}) \cdot {\bf{s}}(k) < 0$ can be understood that it puts a global attractor insides safety envelope, which generate attracting force toward safety envelope. Obviously, because of the always-existing attracting force, system states under the control of Phy-DRL are more likely and quickly to stay inside the safety envelope, compared with purely data-driven DRL frameworks.

In summary, the driving factor of Phy-DRL's fast training toward safety guarantee is the concurrent safety-embedded reward and residual action policy. 
\begin{figure}[http]
	\centering
        \hspace{0.2cm}
        \includegraphics[width=0.95\columnwidth]{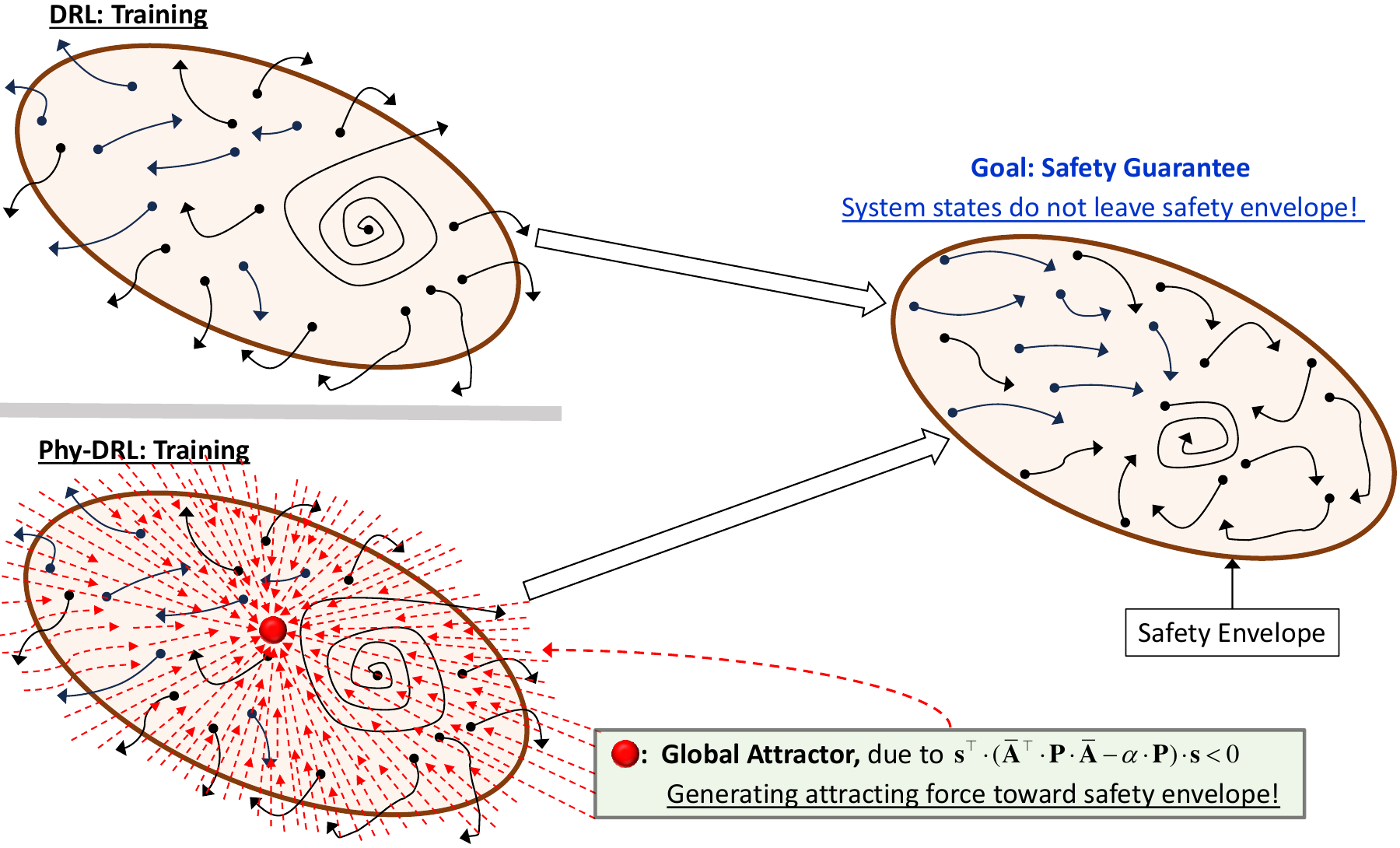}
	\caption{Explanation of Phy-DRL's fast training: the root reason is the (off-line designed) model-based property in \cref{pth3}, which generates always-existing attracting force toward the safety envelope.}
        \label{illpathexp}
\end{figure}

\newpage
\section{Extension: Mathematically-Provable Safety and Stability Guarantees} \label{extenoop} 

We first present the definition of a stability guarantee. 
\begin{definition}
The real plant (\ref{realsys}) is said to be stability guaranteed, if given any $\mathbf{s}(1) \in \mathbb{R}^{n}$, then $\mathop {\lim }\limits_{k \to \infty } \mathbf{s}(k) = \mathbf{0}_{n}$.
\label{defstability}
\end{definition}

The relation between safety guarantee and stability guarantee is presented in \cref{ssrelation}. 

\subsection{Safety versus Stability} \label{ssrelation} 
\begin{figure}[http]
	\centering
        \hspace{0.2cm}
	\includegraphics[width=0.53\columnwidth]{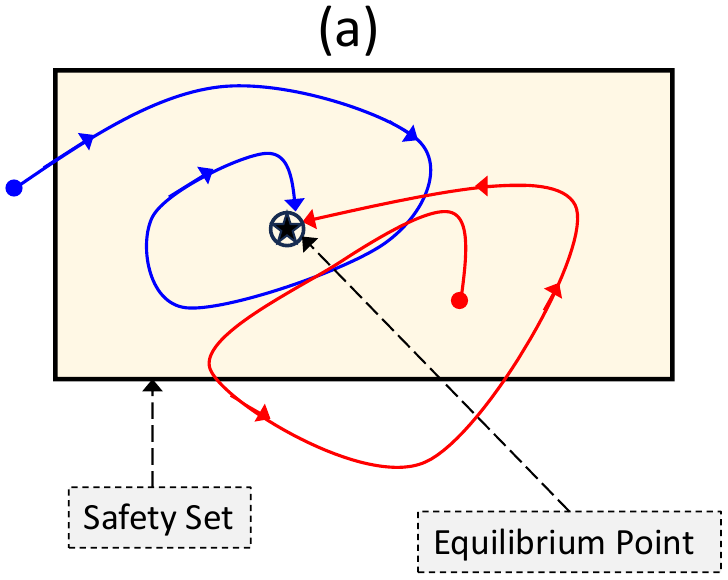}\\
        \includegraphics[width=0.53\columnwidth]{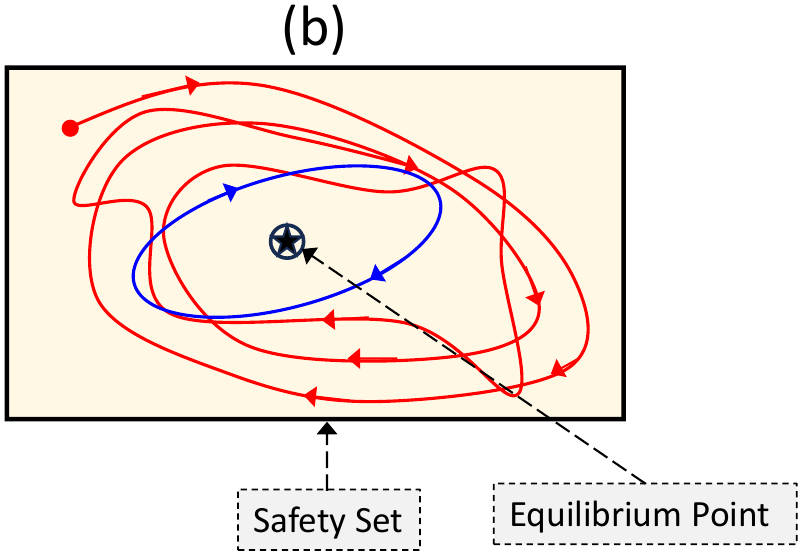}\\
        \includegraphics[width=0.53\columnwidth]{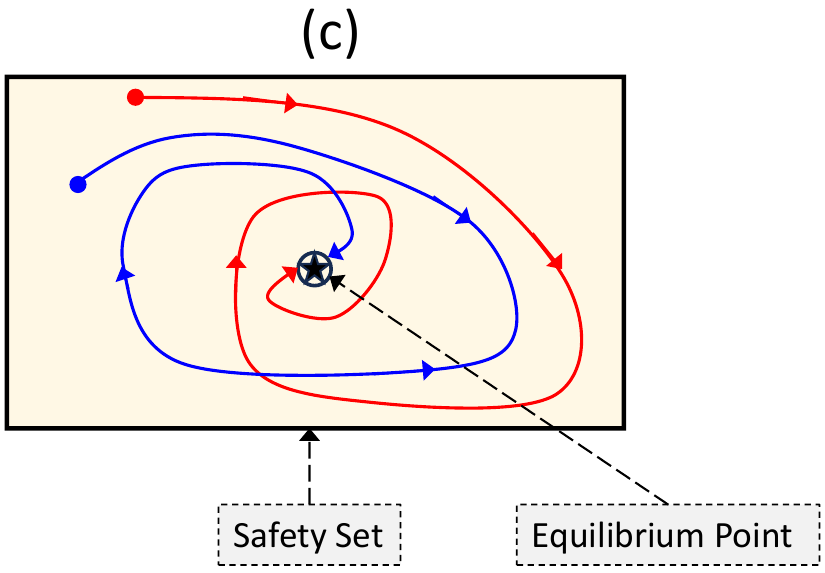}
	\caption{Explanations of safety and stability via phase plots: (a) only stability is guaranteed, (b) only safety is guaranteed, (c) both safety and stability are guaranteed.}
        \label{safetystability}
\end{figure}

According to \cref{defsafety} and \cref{defstability}, the phase plots in \cref{safetystability} well depict the relation between safety and stability: 
\begin{itemize}
    \item \cref{safetystability} (a): \textbf{Only Stability} is guaranteed. Operating from any initial condition (inside or outside the safety set), the system state will converge to the equilibrium (zero). But safety is not guaranteed, since operating from an initial condition inside the safety set, the system will leave the safety set later. 
    \item \cref{safetystability} (b): \textbf{Only Safety} is guaranteed. Operating from any initial condition inside the safety set, the system state never leaves the safety set but does not converge to the equilibrium point. 
     \item \cref{safetystability} (c): \textbf{Both Safety and Stability} are guaranteed. Operating from any initial condition inside the safety set, the system state never leaves the safety set and converges to the equilibrium.
\end{itemize}

\subsection{Provable Safety and Stability Guarantees} 
Thanks to the residual action policy and safety-embedded reward, the proposed Phy-DRL exhibits mathematically-provable safety and stability guarantees, which is formally stated in the following theorem. 
\begin{theorem} [\textbf{Mathematically-Provable Safety and Stability Guarantees}]
Consider the safety set ${\mathbb{X}}$ (\ref{aset2}), the safety envelope ${\Omega}$ (\ref{set3}), and the system (\ref{realsys}) under control of Phy-DRL. The matrices $\mathbf{F}$ and $\mathbf{P}$ involved in the model-based action policy (\ref{modelbased}) and the safety-embedded reward (\ref{reward}) are computed according to \cref{th1}, where $\mathbf{R}$ and $\mathbf{Q}^{-1}$ satisfies the inequalities (\ref{set5})  and (\ref{th1}). Both the safety and stability of the system (\ref{realsys}) are guaranteed, if the sub-reward $r(\mathbf{s}(k), \mathbf{s}(k+1))$ in \cref{reward} satisfies $r(\mathbf{s}(k), \mathbf{s}(k+1)) > (\alpha - 1) \cdot {\mathbf{s}^\top(k)}\cdot {\mathbf{P}} \cdot \mathbf{s}(k)$. 
\label{thm1aabbccbb}
\end{theorem}

\begin{proof}
This proof is straightforward and is based on the Proof of \cref{thm1} in \cref{ppthm1}. If $(\alpha - 1) \cdot V\left( {{\mathbf{s}(k)}} \right) - r(\mathbf{s}(k), \mathbf{s}(k+1)) < 0$, we obtain from \cref{pth0zz3} that 
\begin{align}
V\left( {{\mathbf{s}(k+1)}} \right) < V\left( {{\mathbf{s}(k)}} \right),  ~~~\forall k \in \mathbb{N}, \nonumber
\end{align}
which implies that $V\left(\mathbf{s}(k) \right)$ is strictly decreasing with respect to time $k \in \mathbb{N}$. The Phy-DRL in this condition thus stabilizes the real plant (\ref{realsys}). Additionally, because of $V\left(\mathbf{s}(k) \right)$'s strict decreasing, we obtain \cref{pth0zz11} via considering the  \cref{set3}. In light of Lemma \ref{safelemma}, the condition \cref{set5} is to guarantee that 
$\Omega \subseteq \mathbb{X}$, which with \cref{pth0zz11} result in $\mathbf{s}(k) \in {\Omega} \subseteq \mathbb{X}$, $\forall k \in \mathbb{N}$,  if $\mathbf{s}(1) \in {\Omega}$. We thus conclude that in this condition, both safety and stability are guaranteed, which completes the proof. 
\end{proof}

\newpage
\section{NN Input Augmentation: Explanations and Example} \label{nniaug} 
\cref{alg1-16} shows that Algorithm \ref{aug} finally stacks vector with one.  This operation means a PhyN node will be assigned to be one, and the bias will be thus treated as link weights associated with the nodes of ones.

As an example shown in Figure \ref{PhyBB}, the NN input augmentation empowers PhyN to capture well core nonlinearities of physical quantities such as kinetic energy ($\triangleq \frac{1}{2}m{v^2}$) and aerodynamic drag force ($\triangleq \frac{1}{2}\rho {v^2}{C_D}A$),  that drive the state dynamics of physical systems and then represent or approximate physical knowledge in form of the polynomial function. 

\cref{alg1-5}--\cref{alg1-12} of Algorithm \ref{aug} guarantee that the generated node-representation vectors embrace all the non-missing and non-redundant monomials of a polynomial function. One such example is shown in Figure \ref{PhyBB}. In this example, the $[\mathbf{x}]^2_{1} \cdot [\mathbf{x}]^2_{2}$ is generated only by $[\mathbf{x}]_{1} \cdot ([\mathbf{x}]_{1} \cdot [\mathbf{x}]^2_{2})$, not including others (see e.g., $[\mathbf{x}]_{2} \cdot ([\mathbf{x}]_{2} \cdot [\mathbf{x}]^2_{1})$). Meanwhile, it can be straightforward to verify
from the compact example in Figure \ref{PhyBB} that all the generated monomials are non-missing and non-redundant. 

\begin{figure}[http]
\centering
\includegraphics[scale=0.555]{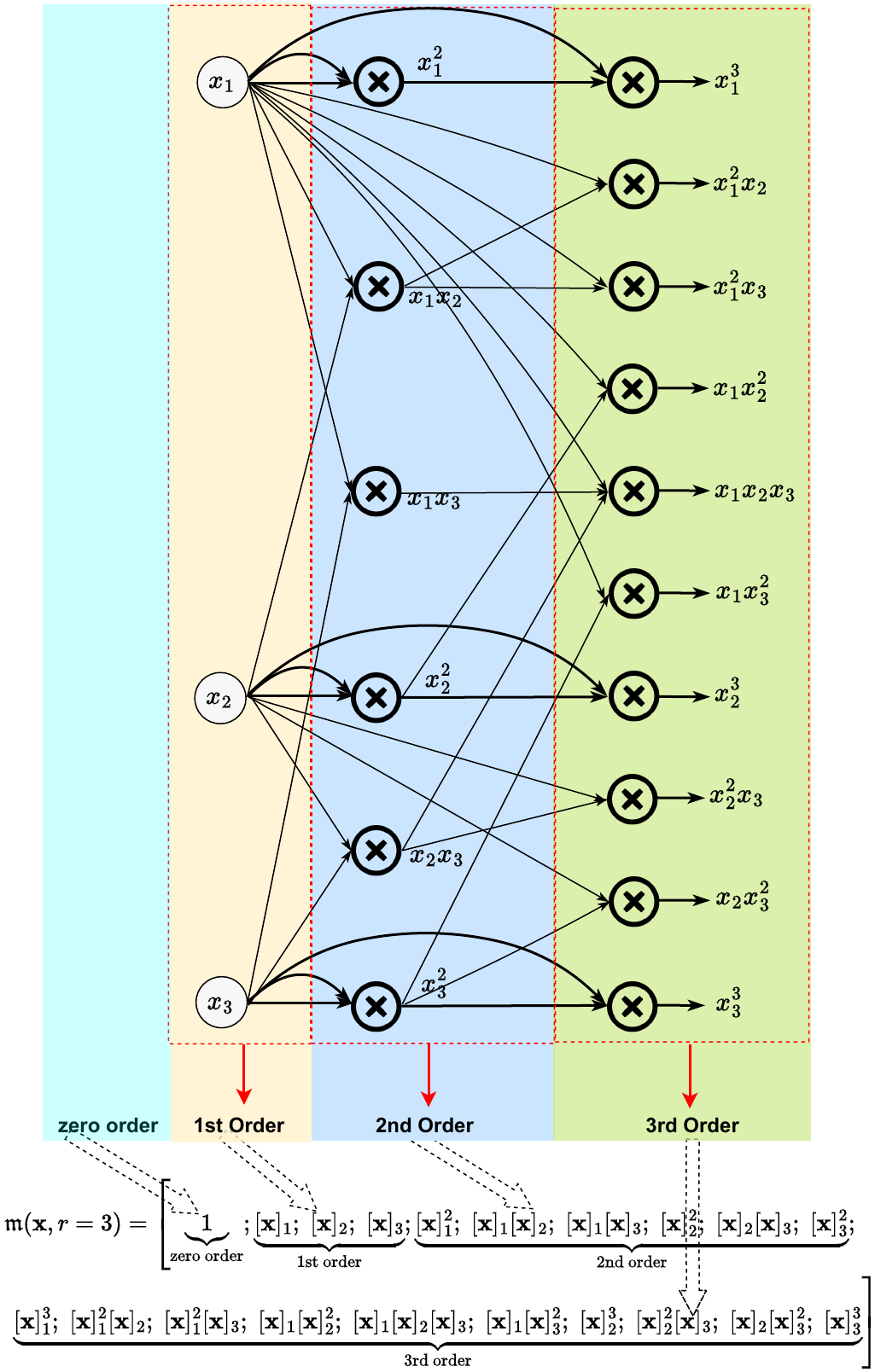}
\caption{An example of Algorithm~\ref{aug} in TensorFlow framework, where the input ${\mathbf{x}} \in \mathbb{R}^3$ and the augmentation order $r = 3$.}
\label{PhyBB}
\end{figure}

\newpage
\section{Controllable Model Accuracy} \label{cmar} 
The Taylor's Theorem offers a series expansion of arbitrary nonlinear functions, as shown below.
\definecolor{cccolor}{rgb}{.67,.7,.67}
\begin{mdframed}
  \textbf{Taylor's Theorem (Chapter 2.4 \cite{konigsberger2013analysis}):} Let $\mathbf{g}\!:~ \mathbb{R}^{n} \to \mathbb{R}$  be a $r$-times continuously differentiable function at the point $\mathbf{o} \in \mathbb{R}^{n}$. Then there exists $\mathbf{h}_{\alpha}\!:~ \mathbb{R}^{n} \to \mathbb{R}$, where $\left| \alpha  \right| = r$, such that
  \begin{align}
&\mathbf{g}( \mathbf{x} ) = \sum\limits_{\left| \alpha  \right| \le r} {\frac{{{\partial^\alpha }\mathbf{g}( \mathbf{o} )}}{{\alpha !}}} {\left( {\mathbf{x} - \mathbf{o}} \right)^\alpha } + \sum\limits_{\left| \alpha  \right| = r} {{\mathbf{h}_\alpha }( \mathbf{x} ){{( {\mathbf{x} - \mathbf{o}} )^\alpha} }}, \text{and}~\mathop {\lim }\limits_{\mathbf{x} \to \mathbf{o}} {\mathbf{h}_\alpha}\left( \mathbf{x} \right) = \mathbf{0}, \label{taylortheorem}
\end{align}
where $\alpha  = \left[ {{\alpha _1};~{\alpha _2};~ \ldots;~{\alpha _n}} \right]$, $\left| \alpha  \right| = \sum\limits_{i = 1}^n {{\alpha _i}}$, ~$\alpha ! = \prod\limits_{i = 1}^n {{\alpha _i}}!$, ~${\mathbf{x}^\alpha } = \prod\limits_{i = 1}^n {\mathbf{x}_i^{{\alpha _i}}}$, ~and ${\partial^\alpha }\mathbf{g} = \frac{{{\partial ^{\left| \alpha  \right|}} \mathbf{g} }}{{\partial \mathbf{x}_1^{{\alpha _1}} \cdot  \ldots  \cdot \partial \mathbf{x}_n^{{\alpha _n}}}}$.
\end{mdframed}

Given ${\mathbf{h}_\alpha}( \mathbf{x} )$ is finite and $\left\| {\mathbf{x} - \mathbf{o}} \right\| < 1$, the error $\sum\limits_{\left| \alpha  \right| = r} {{\mathbf{h}_\alpha }( \mathbf{x} ){{( {\mathbf{x} - \mathbf{o}} )^\alpha} }}$ for approximating the ground truth $\mathbf{g}(\mathbf{x})$ will drop significantly as the order $r = |\alpha|$ increases and $\mathop {\lim }\limits_{|\alpha| = r \to \infty } {\mathbf{h}_\alpha}(\mathbf{x}){\left( {\mathbf{x} - \mathbf{o}} \right)^\alpha} = \mathbf{0}$. This allows for controllable model accuracy via controlling the order $r$.

\newpage

\section{Example: Obtaining Knowledge Set $\mathbb{K}_{\mathcal{Q}}$} \label{expknsetq} 
We use a simple example to explain how $\mathbb{K}_{\mathcal{Q}}$ is derived from \cref{bellman}, according to Taylor's theorem \cite{konigsberger2013analysis}.  For simplification, we let $\mathbf{s}(k) \in \mathbb{R}, \mathbf{a}_{\text{drl}}(k) \in \mathbb{R}$, and ${r}_{\mathcal{Q}} = 2$. By \cref{aug} with $r_{\left\langle t \right\rangle } = {r}_{\mathcal{Q}}$ and ${\mathbf{y}_{\left\langle t \right\rangle }} = [\mathbf{s}(k), ~\mathbf{s}(k+1), ~\mathbf{a}_{\text{drl}}(k)]^\top$, we have 
\begin{align}
\mathfrak{m}(\mathbf{s}(k), \mathbf{a}_{\text{drl}}(k),{r}_{\mathcal{Q}}) &= [1, \mathbf{s}(k), ~\mathbf{s}(k\!+\!1), ~\mathbf{a}_{\text{drl}}(k), ~\mathbf{s}^{2}(k), ~\mathbf{s}(k) \cdot \mathbf{s}(k\!+\!1), ~\mathbf{s}(k) \cdot \mathbf{a}_{\text{drl}}(k), ~\mathbf{s}^{2}(k\!+\!1), \nonumber\\
&~\hspace{6.0cm} \mathbf{s}(k\!+\!1) \cdot \mathbf{a}_{\text{drl}}(k), ~\mathbf{a}^{2}_{\text{drl}}(k)]^{\top}.\label{exkset} 
\end{align}
Observing \cref{bellman}, we can also denote the action-value function as ${\mathcal{Q}^\pi}( \mathcal{R}\left(\mathbf{s}(k), \mathbf{a}_{\text{drl}}(k) \right)) \triangleq {\mathcal{Q}^\pi}( {\mathbf{s}(k), \mathbf{a}_{\text{drl}}(k)})$. For our reward, we let 
\begin{align}
\mathcal{R}( {\mathbf{s}(k),\mathbf{a}_{\text{drl}}}(k)) = \mathbf{s}^\top(k) \cdot {{\overline{\mathbf{A}}^\top} \cdot \mathbf{P} \cdot \overline{\mathbf{A}} } \cdot \mathbf{s}(k)  - {\mathbf{s}^\top(k+1)} \cdot \mathbf{P} \cdot \mathbf{s}(k+1).\label{exkset2} 
\end{align}
Right now, according to Taylor's theorem in \cref{cmar}, expanding the action-value function ${\mathcal{Q}^\pi}( \mathcal{R}\left(\mathbf{s}(k), \mathbf{a}_{\text{drl}}(k) \right))$ around the $\mathcal{R}\left(\mathbf{s}(k), \mathbf{a}_{\text{drl}}(k) \right)$,  we have  
\begin{align}
{\mathcal{Q}^\pi}( \mathcal{R}\left(\mathbf{s}(k), \mathbf{a}_{\text{drl}}(k) \right)) &= b + w_{1} \cdot \mathcal{R}\left(\mathbf{s}(k), \mathbf{a}_{\text{drl}}(k) \right) + \underbrace{w_{2} \cdot \mathcal{R}^{2}\left(\mathbf{s}(k), \mathbf{a}_{\text{drl}}(k) \right)  + \ldots}_{\triangleq \mathbf{p}(\mathbf{s}(k),\mathbf{a}_{\text{drl}}(k))} \nonumber\\
&= \mathbf{A}_{\mathcal{Q}} \cdot \mathfrak{m}(\mathbf{s}(k), \mathbf{a}_{\text{drl}}(k),{r}_{\mathcal{Q}}) + \mathbf{p}(\mathbf{s}(k),\mathbf{a}_{\text{drl}}(k)). \label{exkset3} 
\end{align}
Recalling \cref{exkset}, \cref{exkset2}, and Taylor's theorem in \cref{cmar}, we then conclude from \cref{exkset3} that 
\begin{itemize}
\item Weight matrix $\mathbf{A}_{\mathcal{Q}} = \left[ {\begin{array}{*{20}{c}}
b &0&0&0&{{w_1} \cdot {{\left[ P \right]}_{1,1}}}&{2{w_1} \cdot {{\left[ P \right]}_{1,2}}}&0&{{w_1} \cdot {{\left[ P \right]}_{2,2}}}&0&0
\end{array}} \right]$, where $b$ and $w_1$ are learning parameters.  
\item The unknown $\mathbf{p}(\mathbf{s}(k),\mathbf{a}_{\text{drl}}(k))$ does not include any monomial in \cref{exkset}. So, the elements of $\mathbb{K}_{\mathcal{Q}}$ in this example are all entries of $\mathbf{A}_{\mathcal{Q}}$, i.e., $\mathbb{K}_{\mathcal{Q}} = \left\{ {{\left[ \mathbf{A}_{\mathcal{Q}} \right]}_1, \cdots, {\left[ \mathbf{A}_{\mathcal{Q}} \right]}_{10}} \right\}$. 
\end{itemize}

\newpage

%%%%%%%%%%%%%%%%%%%%%%%%%%%%%%%%%%%%%%%%%%%%%%%%%%%%%%%%%%%%%%%%%%%%%%%%%%%%%%%%%%%%%%%%%%%%%%%%%%%%%%%%%%%

\section{Physics-Model-Guided Neural Network Editing: Explanations} \label{nned} 
\subsection{Activation Editing}
For the edited weight matrix $\mathbf{W}_{\left\langle t \right\rangle }$, if its entries in the same row are all in the knowledge set, the associated activation should be inactivated. Otherwise, the end-to-end input/output of DNN may not strictly preserve the available physics knowledge due to the additional nonlinear mappings induced by the activation functions. This thus motivates the physics-knowledge preserving computing, i.e., \cref{alg2-12} of \cref{ned}. \cref{flowchartonelayer} summarizes the flowchart of NN editing in a single PhyN layer: 
\begin{itemize}
\item Given the node-representation vector from Algorithm \ref{aug}, the original (fully-connected) weight matrix is edited via link editing to embed assigned physics knowledge, resulting in $\mathbf{W}_{\left\langle t \right\rangle }$. 
\item The edited weight matrix ${\mathbf{W}_{\left\langle t \right\rangle }}$ is separated into knowledge matrix $\mathbf{K}_{\left\langle t \right\rangle}$ and uncertainty matrix $\mathbf{U}_{\left\langle t \right\rangle }$, such that ${\mathbf{W}_{\left\langle t \right\rangle }} = \mathbf{K}_{\left\langle t \right\rangle} + \mathbf{U}_{\left\langle t \right\rangle }$. Specifically, the $\mathbf{K}_{\left\langle t \right\rangle}$,  generated in Line 8 and Line 14 of \cref{ned}, includes all the parameters in the knowledge set. While the $\mathbf{M}_{\left\langle t \right\rangle }$, generated in Line 9 and Line 15, is used to generate uncertainty matrix $\mathbf{U}_{\left\langle t \right\rangle }$ (see \cref{alg2-11}) to include all the parameters excluded from knowledge set. This is achieved by freezing the parameters of $\mathbf{W}_{\left\langle t \right\rangle}$ that are included in the knowledge set to zeros. 
\item The $\mathbf{K}_{\left\langle t \right\rangle}$, $\mathbf{M}_{\left\langle t  \right\rangle }$ and activation-masking vector $\mathbf{a}_{\left\langle t  \right\rangle }$ (generated in Line 10 and Line 16) are used by activation editing for the physical-knowledge-preserving computing of output in each PhyN layer. The function of $\mathbf{a}_{\left\langle t  \right\rangle }$ is to avoid the extra mapping (induced by activation) that prior physics knowledge does not include. 
\end{itemize}
\begin{figure}[http]
\centering
\includegraphics[scale=0.45]{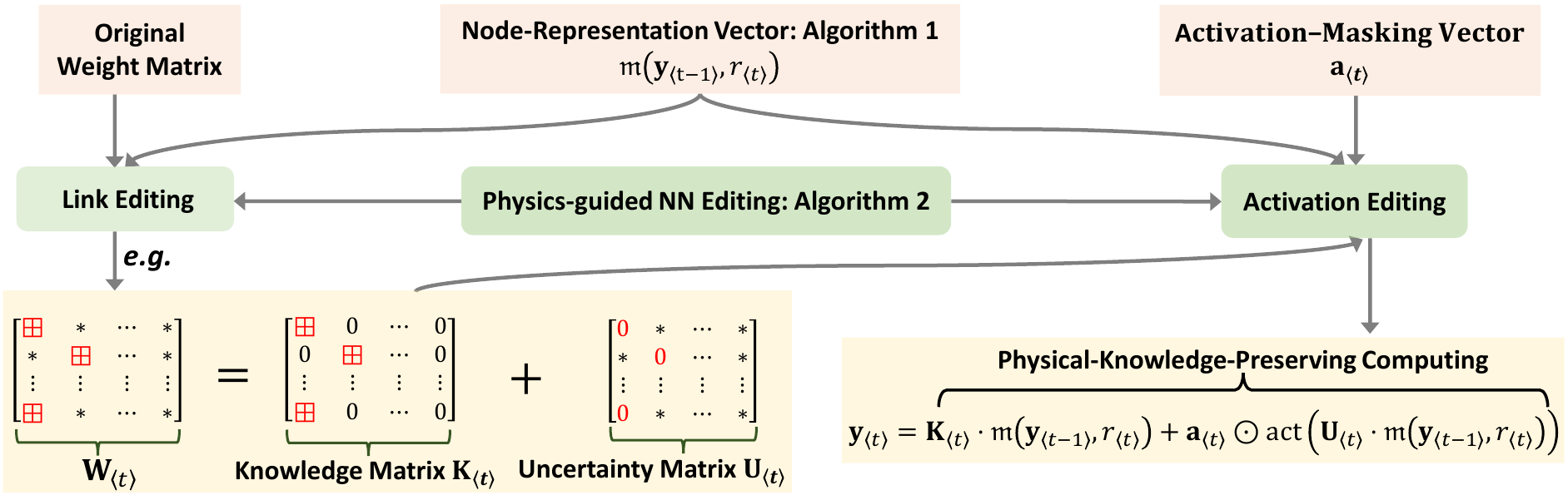}
\caption{Flowchart of NN editing in single PhyN layer, where $\boxplus$ and $*$ denote a parameter included in and excluded from knowledge set, respectively.}
\label{flowchartonelayer}
\end{figure}

\subsection{Knowledge Preserving and Passing}
The flowchart of NN editing operating in cascade PhyNs is depicted in \cref{exapplot}. Lines 5-9 of Algorithm~\ref{ned} means that  
$\mathbf{A}_{\varpi} = \mathbf{K}_{\left\langle 1 \right\rangle} + \mathbf{M}_{\left\langle 1   
                  \right\rangle } \odot \mathbf{A}_{\varpi}$, leveraging which and the setting $r_{\left\langle 1 \right\rangle} = r_{\varpi}$, the ground-truth model (\ref{qnetwork}) and (\ref{actornetwork}) can be rewritten as 
\begin{align}
\mathbf{y} 
&= (\mathbf{K}_{\left\langle 1 \right\rangle} + \mathbf{M}_{\left\langle 1   
                  \right\rangle } \odot \mathbf{A}_{\varpi}) \cdot \mathfrak{m}(\mathbf{x},r) + \mathbf{p}(\mathbf{x}) \nonumber\\
&= \mathbf{K}_{\left\langle 1 \right\rangle} \cdot \mathfrak{m}(\mathbf{x}, r_{\left\langle 1 \right\rangle}) + (\mathbf{M}_{\left\langle 1   
                  \right\rangle } \odot \mathbf{A}_{\varpi}) \cdot \mathfrak{m}(\mathbf{x},r_{\left\langle 1 \right\rangle}) + \mathbf{p}(\mathbf{x}). \label{exppf1}
\end{align}
where we define: 
\begin{align}
\mathbf{y} \triangleq \begin{cases}
		\mathcal{Q}\left(\mathbf{s}, \mathbf{a}_{\text{drl}} \right), &\varpi = `\mathcal{Q}'\\
		\pi\left(\mathbf{s}\right), &\varpi = `\pi'
	\end{cases}, ~~~~~\mathbf{x} \triangleq \begin{cases}
		\left[\mathbf{s}; \mathbf{a}_{\text{drl}} \right], &\varpi = `\mathcal{Q}'\\
		\mathbf{s}, &\varpi = `\pi'
	\end{cases}, ~~~~~r \triangleq \begin{cases}
		{r}_{\mathcal{Q}}, &\varpi = `\mathcal{Q}'\\
		{r}_{\pi}, &\varpi = `\pi'
	\end{cases}. \nonumber
\end{align}
We obtain from \cref{alg2-12} of \cref{ned} that the output of the first PhyN layer is 
\begin{align}
\mathbf{y}_{\left\langle 1 \right\rangle } = \mathbf{K}_{\left\langle 1 \right\rangle } \cdot \mathfrak{m}(\mathbf{x}, r_{\left\langle 1 \right\rangle}) +  \mathbf{a}_{\left\langle 1 \right\rangle } \odot \text{act}\left( {\mathbf{U}_{\left\langle 1   
                  \right\rangle } \cdot {\mathfrak{m}} \left( {\mathbf{x},  r_{\left\langle 1 \right\rangle } } \right)} \right). \label{exppf2}
\end{align}
Recalling that $\mathbf{K}_{\left\langle 1 \right\rangle}$ includes all the entries of $\mathbf{A}_\varpi$ while the $\mathbf{U}_{\left\langle 1 \right\rangle }$ includes remainders, we conclude from \cref{exppf1} and \cref{exppf2} that the available physics knowledge pertaining to the ground-truth model has been embedded to the first PhyN layer. As \cref{exapplot} shows the knowledge embedded in the first layer shall be passed down to the remaining cascade PhyNs and preserved therein, such that the end-to-end critic and actor network can strictly comply with the physics knowledge. This knowledge passing is achieved by the block matrix $\mathbf{K}_{\left\langle p \right\rangle}$ generated in Line 14, thanks to which, the output of $t$-th PhyN layer satisfies 
 \begin{align}
\!\![\mathbf{y}_{\left\langle t \right\rangle }]_{1:\text{len}(\mathbf{y})} \!=\! \underbrace{\mathbf{K}_{\left\langle 1 \right\rangle } \cdot \mathfrak{m}(\mathbf{x}, r_{\left\langle 1 \right\rangle})}_{\text{knowledge passing}} \!+\!  \underbrace{[\mathbf{a}_{\left\langle t \right\rangle } \odot \text{act}\!\left( {\mathbf{U}_{\left\langle t   
                  \right\rangle } \cdot {\mathfrak{m}}( {\mathbf{y}_{\left\langle t-1 \right\rangle },  r_{\left\langle t \right\rangle } })}  \right)]_{1:\text{len}(\mathbf{y})}}_{\text{knowledge preserving}}, ~\forall t \!\in\! \{2,\ldots,p\}. \label{exppf3}
 \end{align}
Meanwhile, the $\mathbf{U}_{\left\langle t \right\rangle } = \mathbf{M}_{\left\langle t \right\rangle } \odot {\mathbf{W}_{\left\langle t \right\rangle }}$ means the masking matrix $\mathbf{M}_{\left\langle t \right\rangle}$ generated in Line 15 is to remove the spurious correlations in the cascade PhyNs, which is depicted by the cutting link operation in \cref{exapplot}. 

\begin{figure}[http]
\centering
\includegraphics[scale=0.45]{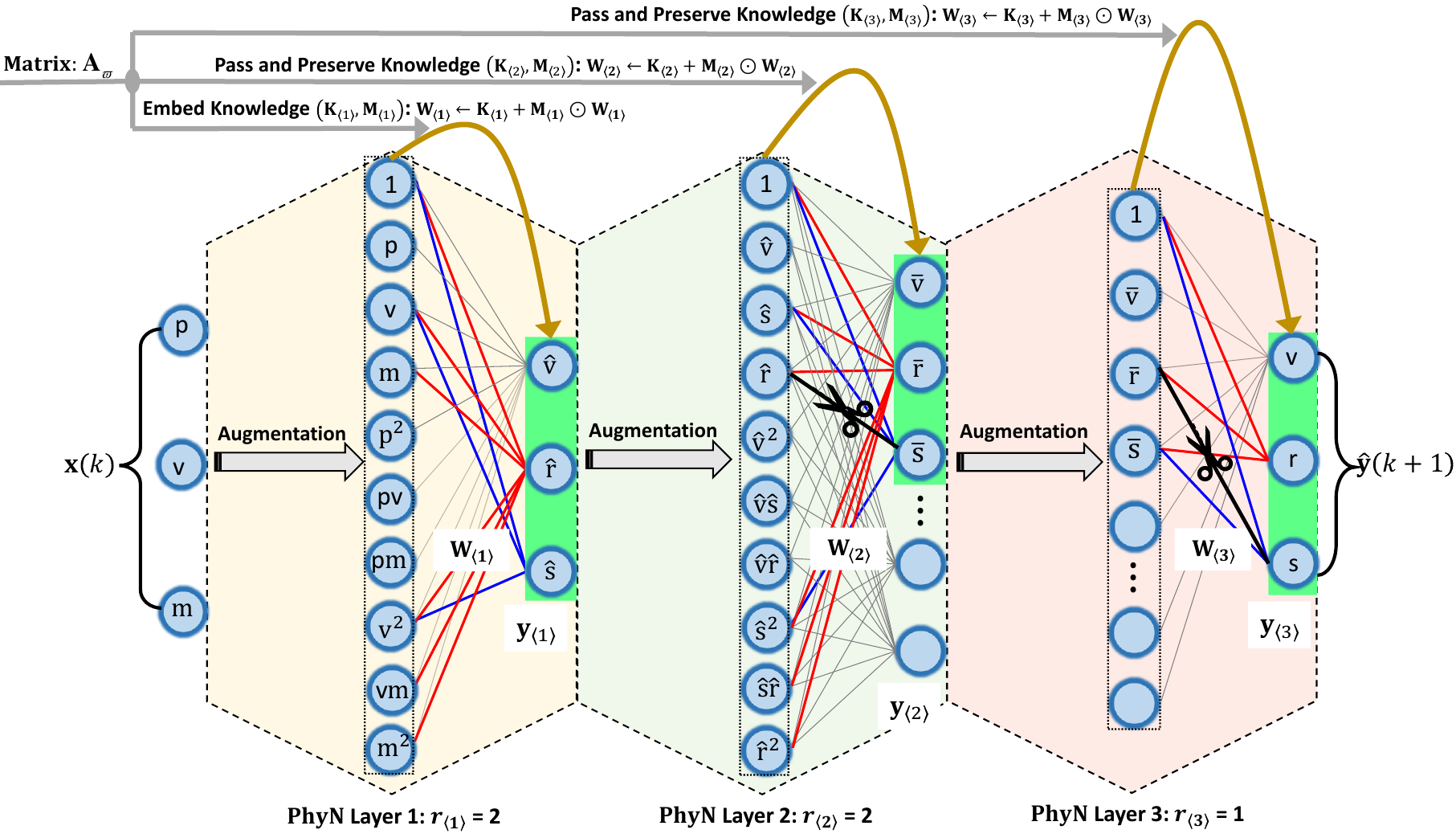}
\caption{Example of NN editing, i.e., \cref{ned}. (i) Parameters excluded from the knowledge set are formed by the grey links, while the parameters included in the knowledge set are formed by the red and blue links. (ii) Cutting black links is to avoid spurious correlations. Otherwise, the links can lead to violation of physics knowledge about the governing \cref{qnetwork} and \cref{actornetwork}.}
\label{exapplot}
\end{figure}

\newpage
%%%%%%%%%%%%%%%%%%%%%%%%%%%%%%%%%%%%%%%%%%%%%%%%%%%%%%%%%%%%%%%%%%%%%%%%%%%%%%%%%%%%%%%%%%%%%%%%%%%%%%%%%%%
\section{Proof of Theorem \ref{thmmm2}} \label{thmmm2ss}
Let us first consider the first PhyN layer, i.e., the case $t = 1$. Line 8 of \cref{ned} means that the knowledge matrix $\mathbf{K}_{\left\langle 1 \right\rangle }$ includes parameters included in knowledge sets, whose corresponding entries in the masking matrix $\mathbf{M}_{\left\langle 1 \right\rangle }$ (generated in Line 9 of \cref{ned}) are frozen to be zeros. Consequently, both $\mathbf{M}_{\left\langle 1 \right\rangle } \odot \mathbf{A}_{\varpi}$ and $\mathbf{U}_{\left\langle 1 \right\rangle } = \mathbf{M}_{\left\langle 1 \right\rangle } \odot \mathbf{W}_{\left\langle 1 \right\rangle }$ excludes all the parameters of knowledge matrix $\mathbf{K}_{\left\langle 1 \right\rangle }$. We thus conclude that $\mathbf{M}_{\left\langle 1 \right\rangle } \odot \mathbf{A}_{\varpi} \cdot \mathfrak{m}(\mathbf{x},r_{\left\langle 1 \right\rangle}) + \mathbf{p}(\mathbf{x})$ in the ground-truth model (\ref{exppf1}) and $\mathbf{a}_{\left\langle 1 \right\rangle } \odot \text{act}\left( \mathbf{U}_{\left\langle 1 \right\rangle} \cdot {\mathfrak{m}}\left(\mathbf{x},  r_{\left\langle 1 \right\rangle}  \right) \right)$ in the output computation in \cref{alg2-12fg} are independent of the term $\mathbf{K}_{\left\langle 1 \right\rangle } \cdot \mathfrak{m}(\mathbf{x}, r_{\left\langle 1 \right\rangle})$. 

Moreover, the activation-masking vector (generated in Line 10 of \cref{ned}) indicates that if all the entries in the $i$-th row of masking matrix are zeros (implying all the entries in the $i$-th row of weight matrix are included in the knowledge set $\mathbb{K}_{\varpi}$), the activation function corresponding to the output's $i$-th entry is inactive. Finally, we arrive in the conclusion that the input/output of the first PhyN layer strictly complies with the available physics knowledge pertaining to the ground truth (\ref{exppf1}), i.e., if the $[\mathbf{A}_{\varpi}]_{i,j} \in \mathbb{K}_{\varpi}$, the $[\mathbf{y}_{\left\langle 1 \right\rangle }]_{i}$ does not have monomials $[{\mathfrak{m}}\left( {\mathbf{x},r_{\varpi}} \right)]_j$.

We next consider the remaining PhyN layers. Considering \cref{alg2-12} of \cref{ned}, we have
 \begin{align}
&[\mathbf{y}_{\left\langle p \right\rangle }]_{1:\text{len}(\mathbf{y})} \nonumber \\
&= [\mathbf{K}_{\left\langle p \right\rangle } \cdot \mathfrak{m}(\mathbf{y}_{\left\langle p-1 \right\rangle }, r_{\left\langle p \right\rangle})]_{1:\text{len}(\mathbf{y})}  +  [\mathbf{a}_{\left\langle p \right\rangle } \odot \text{act}\!\left( {\mathbf{U}_{\left\langle p \right\rangle } \cdot {\mathfrak{m}}( {\mathbf{y}_{\left\langle p-1 \right\rangle },  r_{\left\langle p \right\rangle } })} \right)]_{1:\text{len}(\mathbf{y})} \nonumber\\ 
&= \mathbf{I}_{\text{len}(\mathbf{y})} \cdot [\mathfrak{m}(\mathbf{y}_{\left\langle p-1 \right\rangle }, r_{\left\langle p \right\rangle})]_{2:(\text{len}(\mathbf{y}) + 1)}  +  [\mathbf{a}_{\left\langle p \right\rangle } \odot \text{act}\!\left( {\mathbf{U}_{\left\langle p \right\rangle } \cdot {\mathfrak{m}}( {\mathbf{y}_{\left\langle p-1 \right\rangle },  r_{\left\langle p \right\rangle } })}  \right)]_{1:\text{len}(\mathbf{y})} \label{pf41}\\
& = \mathbf{I}_{\text{len}(\mathbf{y})} \cdot [\mathbf{y}_{\left\langle p-1 \right\rangle }]_{1:\text{len}(\mathbf{y})}  +  [\mathbf{a}_{\left\langle p \right\rangle } \odot \text{act}\!\left( {\mathbf{U}_{\left\langle p \right\rangle } \cdot {\mathfrak{m}}( {\mathbf{y}_{\left\langle p-1 \right\rangle },  r_{\left\langle p \right\rangle } })}  \right)]_{1:\text{len}(\mathbf{y})} \label{pf42}\\
& =  [\mathbf{y}_{\left\langle p-1 \right\rangle }]_{1:\text{len}(\mathbf{y})} +  [\mathbf{a}_{\left\langle p \right\rangle } \odot \text{act}\!\left( {\mathbf{U}_{\left\langle p \right\rangle } \cdot {\mathfrak{m}}( {\mathbf{y}_{\left\langle p-1 \right\rangle },  r_{\left\langle p \right\rangle } })}  \right)]_{1:\text{len}(\mathbf{y})} \nonumber\\
&= [\mathbf{K}_{\left\langle p-1 \right\rangle } \cdot \mathfrak{m}(\mathbf{y}_{\left\langle p-2 \right\rangle }, r_{\left\langle p-1 \right\rangle})]_{1:\text{len}(\mathbf{y})}  +  [\mathbf{a}_{\left\langle p \right\rangle } \odot \text{act}\!\left( {\mathbf{U}_{\left\langle p \right\rangle } \cdot {\mathfrak{m}}( {\mathbf{y}_{\left\langle p-1 \right\rangle },  r_{\left\langle p \right\rangle } })}  \right)]_{1:\text{len}(\mathbf{y})} \nonumber\\
&= \mathbf{I}_{\text{len}(\mathbf{y})} \cdot [\mathfrak{m}(\mathbf{y}_{\left\langle p-2 \right\rangle }, r_{\left\langle p-1 \right\rangle})]_{2:(\text{len}(\mathbf{y})+1)} +  [\mathbf{a}_{\left\langle p \right\rangle } \odot \text{act}\!\left( {\mathbf{U}_{\left\langle p \right\rangle } \cdot {\mathfrak{m}}( {\mathbf{y}_{\left\langle p-1 \right\rangle },  r_{\left\langle p \right\rangle } })}  \right)]_{1:\text{len}(\mathbf{y})} \nonumber\\
&= \mathbf{I}_{\text{len}(\mathbf{y})} \cdot [\mathbf{y}_{\left\langle p-2 \right\rangle }]_{1 :\text{len}(\mathbf{y})}  +  [\mathbf{a}_{\left\langle p \right\rangle } \odot \text{act}\!\left( {\mathbf{U}_{\left\langle p \right\rangle } \cdot {\mathfrak{m}}( {\mathbf{y}_{\left\langle p-1 \right\rangle },  r_{\left\langle p \right\rangle } })}  \right)]_{1:\text{len}(\mathbf{y})} \nonumber\\
&=  [\mathbf{y}_{\left\langle p-2 \right\rangle }]_{1 :\text{len}(\mathbf{y})}  +  [\mathbf{a}_{\left\langle p \right\rangle } \odot \text{act}\!\left( {\mathbf{U}_{\left\langle p \right\rangle } \cdot {\mathfrak{m}}( {\mathbf{y}_{\left\langle p-1 \right\rangle },  r_{\left\langle p \right\rangle } })}  \right)]_{1:\text{len}(\mathbf{y})} \nonumber\\
& = \ldots \nonumber\\
& = [\mathbf{y}_{\left\langle 1 \right\rangle }]_{1:\text{len}(\mathbf{y})} +  [\mathbf{a}_{\left\langle p \right\rangle } \odot \text{act}\!\left( {\mathbf{U}_{\left\langle p \right\rangle } \cdot {\mathfrak{m}}( {\mathbf{y}_{\left\langle p-1 \right\rangle },  r_{\left\langle p \right\rangle } })}  \right)]_{1:\text{len}(\mathbf{y})} \nonumber\\
& = [\mathbf{K}_{\left\langle 1 \right\rangle } \cdot \mathfrak{m}(\mathbf{x}, r_{\left\langle 1 \right\rangle})]_{1:\text{len}(\mathbf{y})} +  [\mathbf{a}_{\left\langle p \right\rangle } \odot \text{act}\!\left( {\mathbf{U}_{\left\langle p \right\rangle } \cdot {\mathfrak{m}}( {\mathbf{y}_{\left\langle p-1 \right\rangle },  r_{\left\langle p \right\rangle } })}  \right)]_{1:\text{len}(\mathbf{y})} \nonumber\\
&  = \mathbf{K}_{\left\langle 1 \right\rangle } \cdot \mathfrak{m}(\mathbf{x}, r_{\left\langle 1 \right\rangle})  +  [\mathbf{a}_{\left\langle p \right\rangle } \odot \text{act}\!\left( {\mathbf{U}_{\left\langle p \right\rangle } \cdot {\mathfrak{m}}( {\mathbf{y}_{\left\langle p-1 \right\rangle },  r_{\left\langle p \right\rangle } })}  \right)]_{1:\text{len}(\mathbf{y})}, \label{pf46}
 \end{align}
where \cref{pf41} and \cref{pf42} are obtained from their previous steps via considering the structure of block matrix $\mathbf{K}_{\left\langle t \right\rangle }$ (generated in Line 14 of \cref{ned}) and the formula of augmented monomials: $\mathfrak{m}(\mathbf{y},r) = \left[1;~\mathbf{y};~[\mathfrak{m}( {\mathbf{y},r})]_{(\text{len}(\mathbf{y})+2) : \text{len}(\mathfrak{m}(\mathbf{y},r))} \right]$ (generated via Algorithm \ref{ned}). The remaining iterative steps follow the same path.

The training loss function is to push the terminal output of Algorithm \ref{ned} to approximate the real output $\mathbf{y}$, which in light of \cref{pf46} yields 
 \begin{align}
\widehat{\mathbf{y}} &=  \mathbf{K}_{\left\langle 1 \right\rangle } \cdot \mathfrak{m}(\mathbf{x}, r_{\left\langle 1 \right\rangle}) +  [\mathbf{a}_{\left\langle p \right\rangle } \odot \text{act}\!\left( {\mathbf{U}_{\left\langle p \right\rangle } \cdot {\mathfrak{m}}( {\mathbf{y}_{\left\langle p-1 \right\rangle },  r_{\left\langle p \right\rangle } })}  \right)]_{1:\text{len}(\mathbf{y})} \nonumber\\
&=  \mathbf{K}_{\left\langle 1 \right\rangle } \cdot \mathfrak{m}(\mathbf{x}, r_{\left\langle 1 \right\rangle})  +  \mathbf{a}_{\left\langle p \right\rangle } \odot \text{act}\!\left( {\mathbf{U}_{\left\langle p \right\rangle } \cdot {\mathfrak{m}}( {\mathbf{y}_{\left\langle p-1 \right\rangle },  r_{\left\langle p \right\rangle } })}  \right), \label{pf5}
 \end{align}
where \cref{pf5} from its previous step is obtained via considering the fact $\text{len}(\widehat{\mathbf{y}}) = \text{len}(\mathbf{y}) = \text{len}(\mathbf{y}_{\left\langle p \right\rangle })$. Meanwhile, the condition of generating a weight-masking matrix in Line 15 of \cref{ned} removes all the node-representations' connections with the parameters of knowledge set included in $\mathbf{K}_{\left\langle 1 \right\rangle }$. Therefore, we can conclude that in the terminal output computation in \cref{pf5}, the term  $\mathbf{a}_{\left\langle p \right\rangle } \odot \text{act}\!\left( {\mathbf{U}_{\left\langle p \right\rangle } \cdot {\mathfrak{m}}( {\mathbf{y}_{\left\langle p-1 \right\rangle },  r_{\left\langle p \right\rangle } })}  \right)$ does not have influence on the computing of knowledge term $\mathbf{K}_{\left\langle 1 \right\rangle } \cdot \mathfrak{m}(\mathbf{x}, r_{\left\langle 1 \right\rangle})$. Thus,  the Algorithm \ref{ned} strictly embeds and preserves the available knowledge pertaining to the physics model of ground truth in \cref{exppf1}. 

\newpage
%%%%%%%%%%%%%%%%%%%%%%%%%%%%%%%%%%%%%%%%%%%%%%%%%%%%%%%%%%%%%%%%%%%%%%%%%%%%%%%%%%%%%%%%%%%%%%%%%%%%%%%%%%%
\section{Experiment: Cart-Pole System} \label{expsup1}
\begin{figure}[htbp]
	\centering
        \hspace{0.2cm}
	\includegraphics[width=0.75\columnwidth]{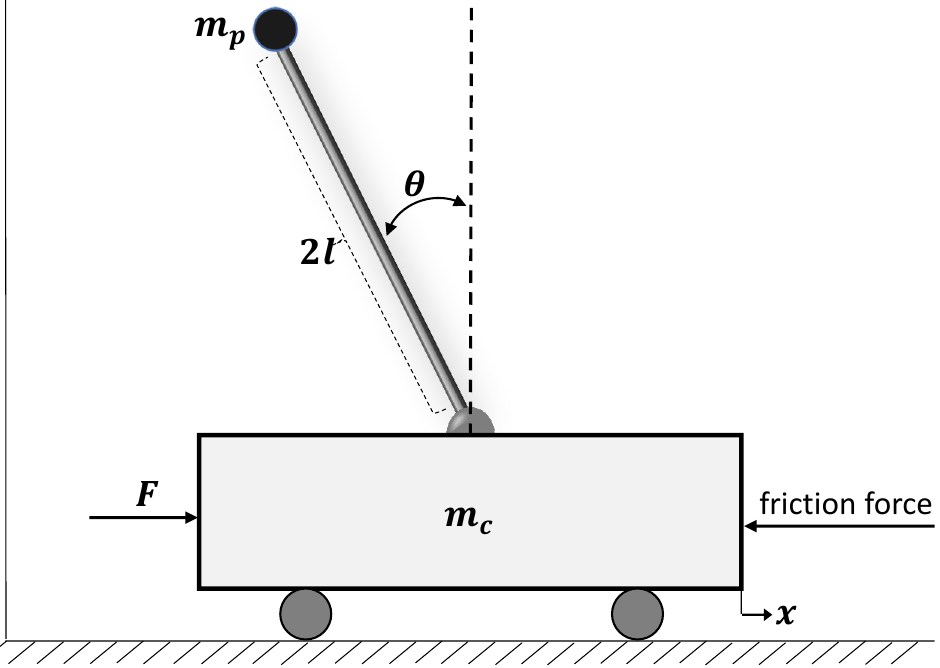}\hspace{-0.35cm}
	\caption{Mechanical analog of inverted pendulums.}
        \label{id}
\end{figure}

\subsection{Physics-Model Knowledge} \label{expsup1modelbng}
To have the model-based action policy, the first step is to obtain system matrix $\mathbf{A}$ and control structure matrix $\mathbf{B}$ in a real plant (\ref{realsys}). In other words, the available model knowledge about the dynamics of the cart-pole system is a linear model: 
\begin{align}
\bar{\mathbf{s}}(k+1) = {\mathbf{A}} \cdot \bar{\mathbf{s}}(k) + {\mathbf{B}} \cdot \bar{\mathbf{a}}(k), ~~~~k \in \mathbb{N} .\label{referem}
\end{align}

We refer to the dynamics model of cart-pole system described in~\cite{florian2007correct} and consider the approximations $\cos \theta  \approx 1$, $\sin \theta  \approx \theta$ and ${\omega ^2}\sin \theta  \approx 0$ for obtaining $(\mathbf{A}, \textbf{B})$ as 
\begin{align} 
\mathbf{A} = \left[{\begin{array}{*{20}{c}}
1&{0.0333}&0&0\\
0&1&{ - 0.0565}&0\\
0&0&1&{0.0333}\\
0&0&{0.8980}&1
\end{array}}\right], ~~~~~\mathbf{B} = \left[
0~~0.0334~~0~~- 0.0783
\right]^\top.  \label{PM2} 
\end{align}

\subsection{Safety Knowledge} \label{safeknowledge}
Considering the safety conditions in \cref{safetycond} and the formula of safety set in \cref{aset2}, we have 
\begin{align}
&\mathbf{s} = \left[\! \begin{array}{l}
x\\
v \\
\theta\\
\omega 
\end{array} \!\right]\!,~~\mathbf{D} = \left[\! {\begin{array}{*{20}{c}}
1\!&\!0 \!&\! 0 \!&\! 0\\
0\!&\!0 \!&\! 1 \!&\! 0\\
\end{array}} \!\right]\!,~~\mathbf{v} = \left[\! \begin{array}{l}
0\\
0 
\end{array} \!\right]\!,~~\overline{\mathbf{v}} = \left[\! \begin{array}{l}
0.9\\
0.8
\end{array} \!\right]\!,~~ \underline{\mathbf{v}} =\left[\! \begin{array}{l}
-0.9\\
-0.8
\end{array} \!\right],\label{exp1102} 
\end{align}
based on which, then according to the $\overline{\Lambda}$, $\underline{\Lambda}$ and ${\mathbf{d}}$ defined in Lemma \ref{safelemma}, we have  
\begin{align}
\mathbf{d} = \left[\! \begin{array}{l}
-1\\
-1 
\end{array} \!\right]\!,~~~\overline{\Lambda} = \underline{\Lambda} = \left[\! {\begin{array}{*{20}{c}}
0.9 & 0 \\
0 & 0.8 
\end{array}} \!\right] \label{exp1101} 
\end{align}
from which and $\mathbf{D}$ given in \cref{exp1102}, we then have 
\begin{align}
\overline{\mathbf{D}} = \frac{{\mathbf{D}}}{\overline{\Lambda}} = \underline{\mathbf{D}} = \frac{{\mathbf{D}}}{\underline{\Lambda}} = \left[\! {\begin{array}{*{20}{c}}
\frac{10}{9}\!&\!0 \!&\! 0 \!&\! 0\\
0\!&\!0 \!&\! \frac{5}{4} \!&\! 0\\
\end{array}} \!\right]. \label{exp1103} 
\end{align}

\subsection{Model-Based Action Policy and DRL Reward} \label{expsup1solve}
With the knowledge given in \cref{exp1101} and \cref{exp1103}, the matrices $\mathbf{F}$ and $\mathbf{P}$ are ready to be obtained through solving the centering problem in \cref{qrsolver}. We let $\alpha = 0.98$. By the  CVXPY toolbox \cite{diamond2016cvxpy} in Python, we obtain 
\begin{align} 
\mathbf{Q} &= \left[{\begin{array}{*{20}{c}}
0.66951866 & -0.69181711 & -0.27609583  & 0.55776279\\
-0.69181711 &  9.86247186 &  0.1240829  & -12.4011146\\
-0.27609583 &  0.1240829  &  0.66034399 & -2.76789607\\
0.55776279 & -12.4011146  & -2.76789607 & 32.32280039
\end{array}} \right], \nonumber\\
\mathbf{R} &= \left[{\begin{array}{*{20}{c}}
-6.40770185 & -18.97723676 &  6.10235911 & 31.03838284
\end{array}} \right],  \nonumber
\end{align}
based on this, we then have 
\begin{align} 
\mathbf{P} &= \mathbf{Q}^{-1} = \left[{\begin{array}{*{20}{c}}
4.6074554 &  1.49740096 &  5.80266046 & 0.99189224\\
1.49740096 & 0.81703147 & 2.61779592 & 0.51179642\\
5.80266046 & 2.61779592 & 11.29182733 & 1.87117709\\
0.99189224 & 0.51179642 & 1.87117709 & 0.37041435
\end{array}} \right], \label{md1}\\
\mathbf{F} &= \mathbf{R} \cdot \mathbf{P} = \left[{\begin{array}{*{20}{c}}
8.25691599 & 6.76016534 & 40.12484514 & 6.84742553
\end{array}} \right],  \label{md2}\\
\overline{\mathbf{A}} &= \mathbf{A} + {\mathbf{B} } \cdot {\mathbf{F}}
= \left[{\begin{array}{*{20}{c}}
1    &      0.03333333  & 0    &      0              \\
0.27592037 & 1.22590363  & 1.2843559 &  0.2288196\\
0    &      0     &     1     &     0.03333333\\
-0.64668827 & -0.52946156 & -2.24458365 &  0.46370415
\end{array}} \right]. \label{md3}
\end{align}
With these solutions and letting $w( \mathbf{s}(k),\mathbf{a}(k)) = -\mathbf{a}_{\text{drl}}^2(k)$,  the model-based action policy (\ref{modelbased}) and the safety-embedded reward (\ref{reward}) are then ready for the Phy-DRL. 

\subsection{Sole Model-Based Action Policy: Failure Due to Large Model Mismatch} \label{failure}
The trajectories of the cart-pole system under the control of sole model-based action policy, i.e., $\mathbf{a}(k) = \mathbf{a}_{\text{phy}}(k) = \mathbf{F} \cdot \mathbf{s}(k)$, are shown in Figure \ref{untra}. The system's initial condition lies in the safety envelope, i.e., $\mathbf{s}(1) \in {\Omega}$. Figure \ref{untra} shows the sole model-based action policy cannot stabilize the system and cannot guarantee its safety. This failure is due to a large model mismatch between the simplified linear model (\ref{referem}) and the real system having nonlinear dynamics. While Figure \ref{safesample} shows the Phy-DRL successfully overcomes the large model mismatch and renders the system safe and stable, \underline{tested with many initial conditions $\mathbf{s}(1) \in \mathbb{X}$}. 
\begin{figure}[htbp]
\centering
\includegraphics[scale=0.735]{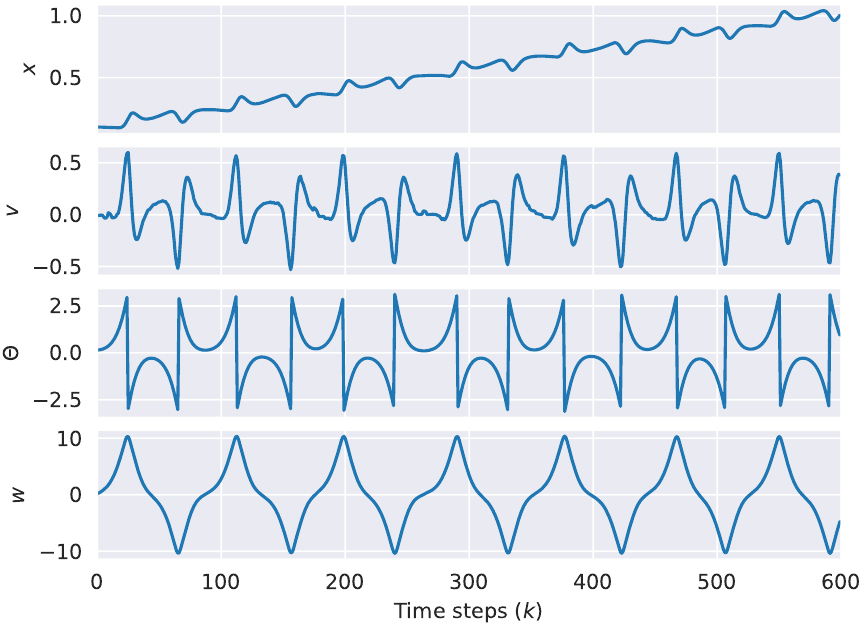}
\caption{System trajectories of the cart-pole system: unstable and unsafe.}
\label{untra}
\end{figure}

\subsection{Configurations: Networks and Training Conditions} \label{CCFFGG}
In this case study, the goal of action policy is to stabilize the pendulum at the equilibrium $\mathbf{s}^* = [x^{*},v^{*},\theta^{*},\omega^{*}]^\top = [0,0,0,0]^\top$, while constraining system state to the safety set in \cref{safetycond}. We convert the measured angle $\theta$ into $\sin(\theta)$ and $\cos(\theta)$ to simplify the learning process. Therefore, the observation space can be expressed as
\begin{equation}\label{observation_space}
\mathbf{s}= [x, ~v, ~\sin(\theta), ~\sin(\theta), ~\omega]^\top. \nonumber
\end{equation}

We also added a terminal condition to the training episode that stops the running of the cart-pole system when a violation of safety occurs for both DRL and Phy-DRL in training, depicted as follows:
\begin{equation}\label{eq:termination}
    \beta(\mathbf{s}(k)) =
    \begin{cases}
    1, &\text{if $|x(k)| \ge 0.9 $  or  $|\theta(k)| \ge 0.8$} \\
    0, &\text{otherwise.}
    \end{cases} \nonumber
\end{equation}
Specifically, the running of the cart-pole system (starting from an initial condition) during training is terminated if either its cart position or pendulum angle exceeds the safety bounds or the pendulum falls. During training, we reset episodes of the system running from random initial conditions inside the safety set if the maximum step of system running is reached, or $\beta(\mathbf{s}(k))=1$.

The development of Phy-DRL is based on the DDPG algorithm. The actor and critic networks in the DDPG algorithm are implemented as a Multi-Layer Perceptron (MLP) with four fully connected layers. The output dimensions of critic and actor networks are 256, 128, 64, and 1, respectively. The activation functions of the first three neural layers are ReLU, while the output of the last layer is the Tanh function for the actor-network and Linear for the critic network. The input of the critic network is $[\mathbf{s}; \mathbf{a}]$, while the input of the actor-network is $\mathbf{s}$.

\subsection{Training} \label{traincartpole}
For the code, we use the Python API for the TensorFlow framework \cite{kingma2014adam} and the Adam optimizer \cite{abadi2016tensorflow} for training. This project is using the settings: 1) Ubuntu 20.04, 2) Python 3.7, 3) TensorFlow 2.5.0, 4) Numpy 1.19.5, and 5) Gym 0.20.  

For Phy-DRL, we let discount factor $\gamma = 0.4$, and the learning rates of critic and actor networks are the same as 0.0003. We set the batch size to 200. The total training steps are $10^{6}$, and the maximum step number of one episode is 1000. Each weight matrix is initialized randomly from a (truncated) normal distribution with zero mean and standard deviation, discarding and re-drawing any samples more than two standard deviations from the mean. We initialize each bias according to the normal distribution with zero mean and standard deviation. 

\subsection{Testing Comparisons: Model for State Prediction?} \label{commodedrprebu}
\begin{figure}
     \centering
    \subfloat[mf-Phy-DRL ($5 \cdot 10^4$)]{\includegraphics[width=0.25\textwidth]{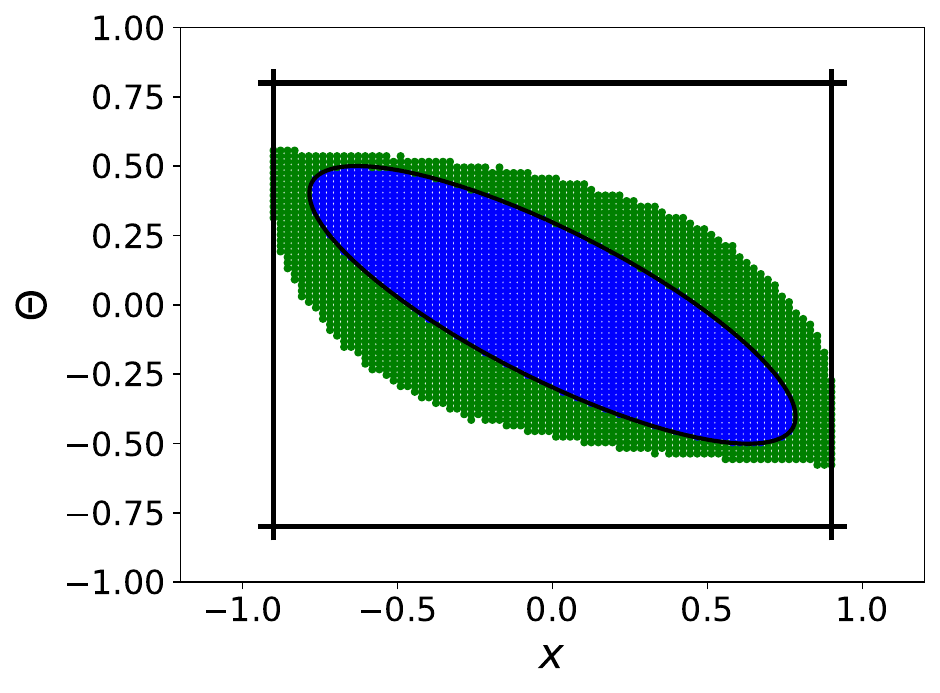}} 
    \centering
    \subfloat[mb-Phy-DRL ($5 \cdot 10^4$)]{\includegraphics[width=0.25\textwidth]{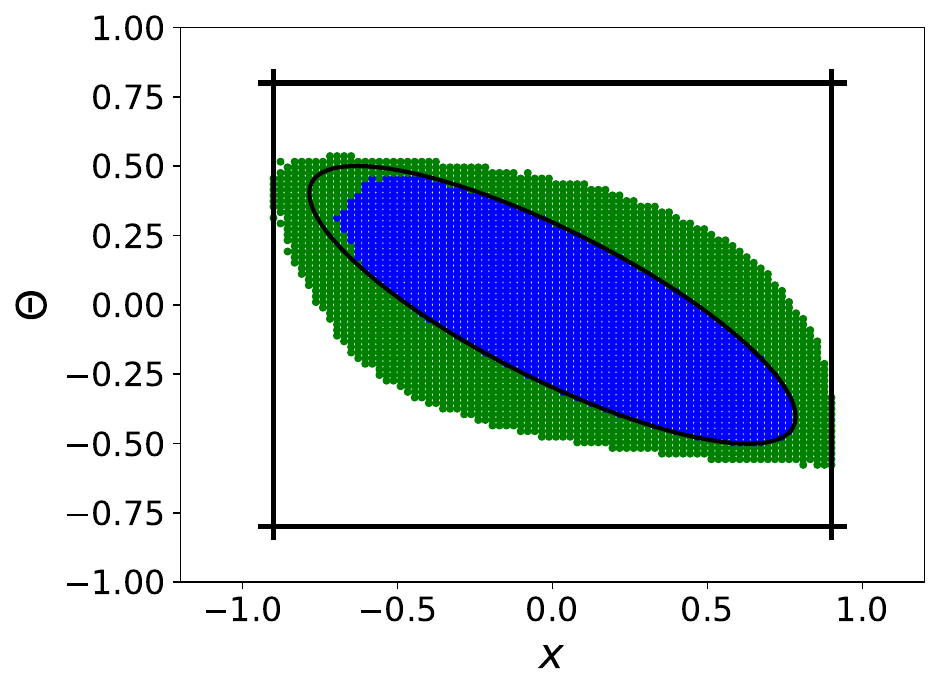}} 
    \centering
    \subfloat[mf-DRL ($5 \cdot 10^4$)]{\includegraphics[width=0.25\textwidth]{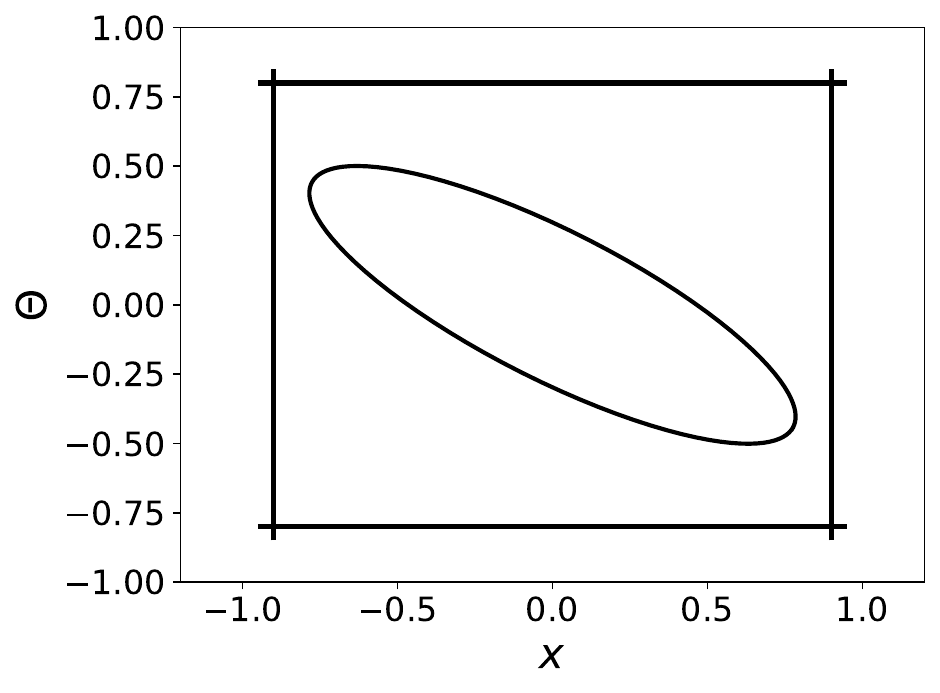}} 
    \centering
    \subfloat[mb-DRL ($5 \cdot 10^4$)]{\includegraphics[width=0.25\textwidth]{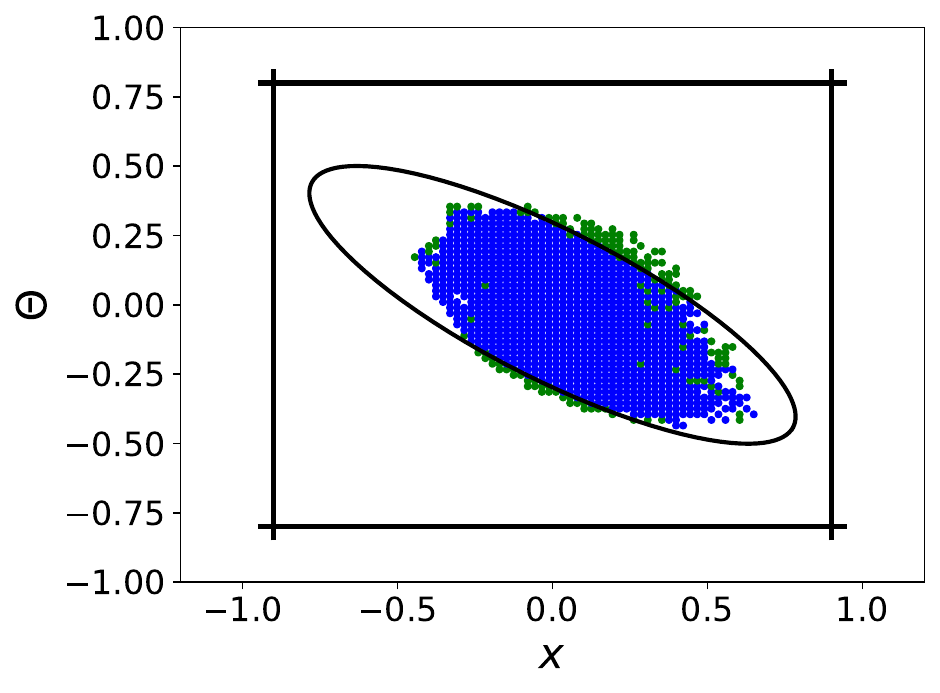}} \\
    \vspace{0.5cm}
    \centering
    \subfloat[mf-Phy-DRL ($7.5 \cdot 10^4$)]{\includegraphics[width=0.25\textwidth]{ourmf75.pdf}} 
    \centering
    \subfloat[mb-Phy-DRL ($7.5 \cdot 10^4$)]{\includegraphics[width=0.25\textwidth]{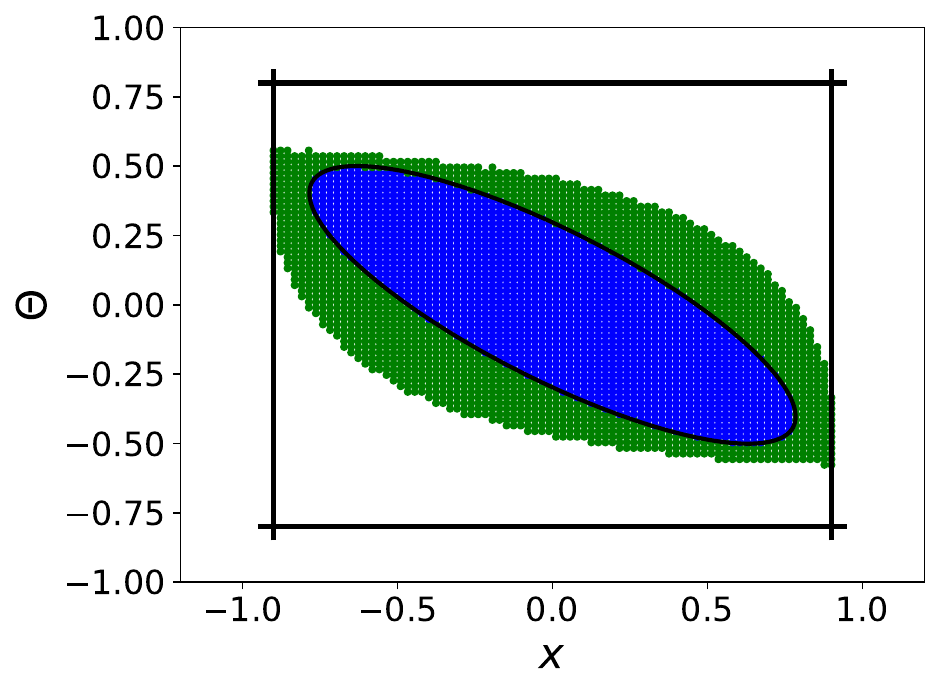}} 
    \centering
    \subfloat[mf-DRL ($7.5 \cdot 10^4$)]{\includegraphics[width=0.25\textwidth]{ubcmf75.pdf}} 
    \centering
    \subfloat[mb-DRL ($7.5 \cdot 10^4$)]{\includegraphics[width=0.25\textwidth]{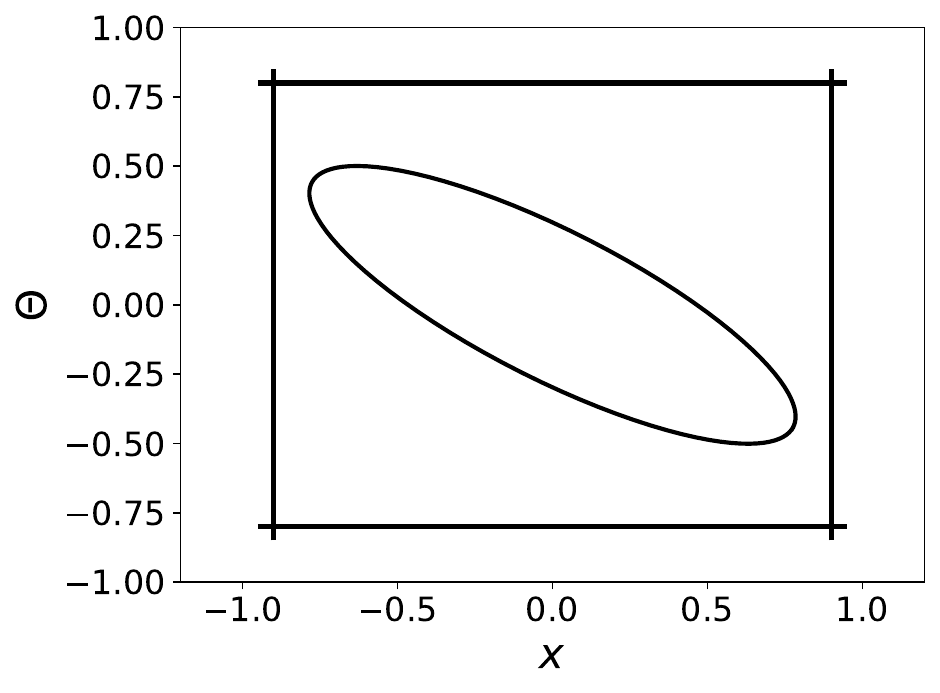}} \\
    \vspace{0.5cm}
    \centering
    \subfloat[mf-Phy-DRL ($10^5$)]{\includegraphics[width=0.25\textwidth]{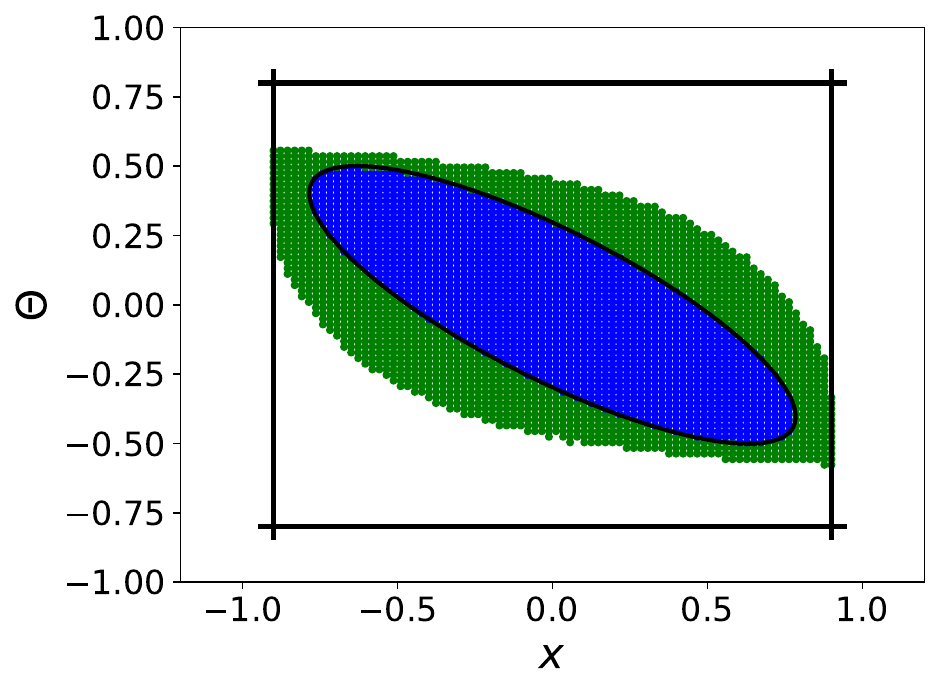}} 
    \centering
    \subfloat[mb-Phy-DRL ($10^5$)]{\includegraphics[width=0.25\textwidth]{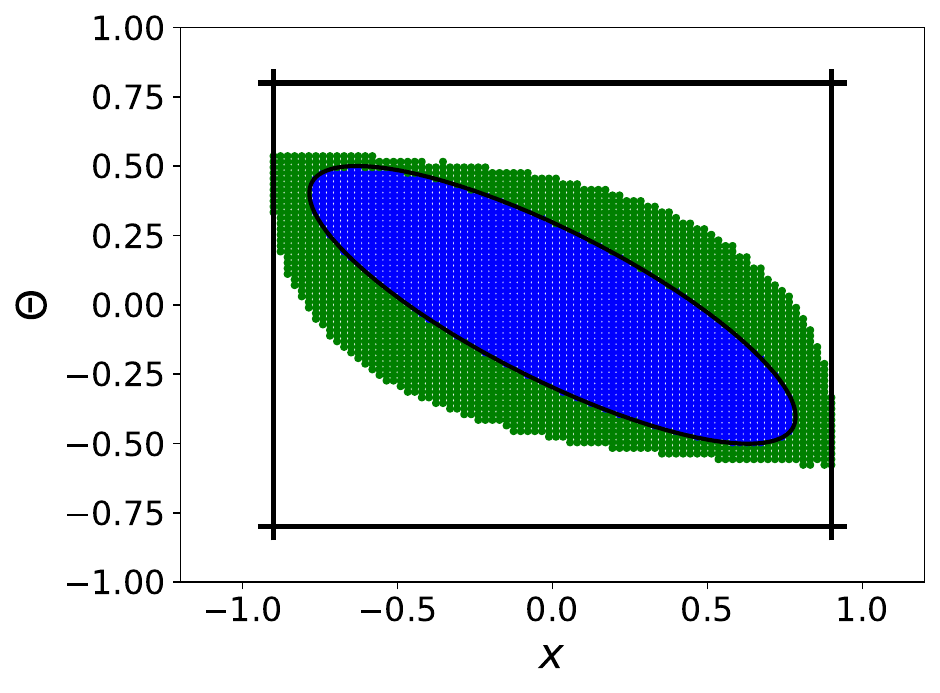}} 
    \centering
    \subfloat[mf-DRL ($10^5$)]{\includegraphics[width=0.25\textwidth]{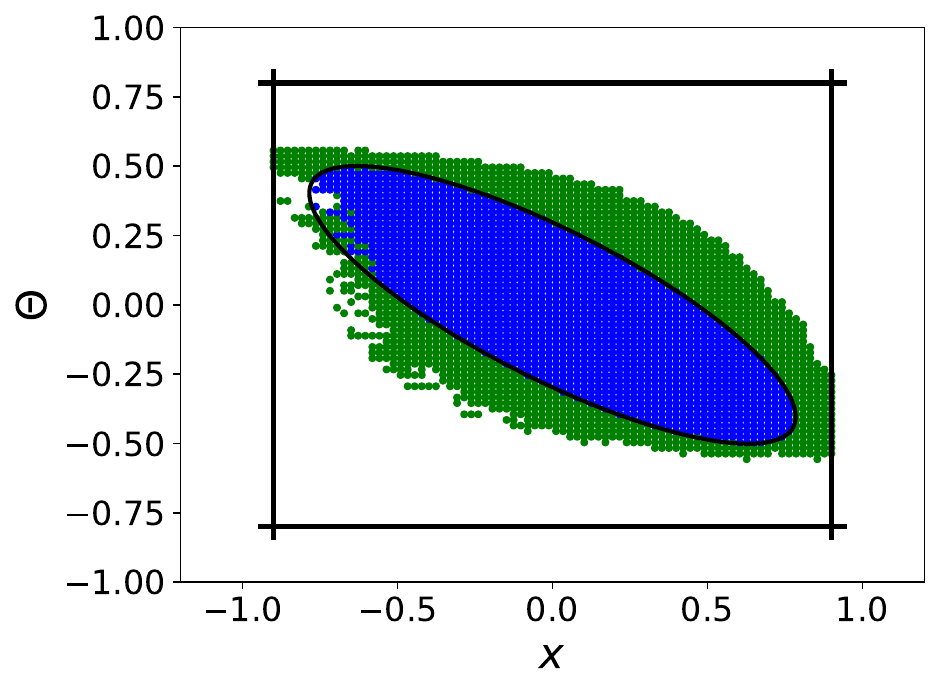}} 
    \centering
    \subfloat[mb-DRL ($10^5$)]{\includegraphics[width=0.25\textwidth]{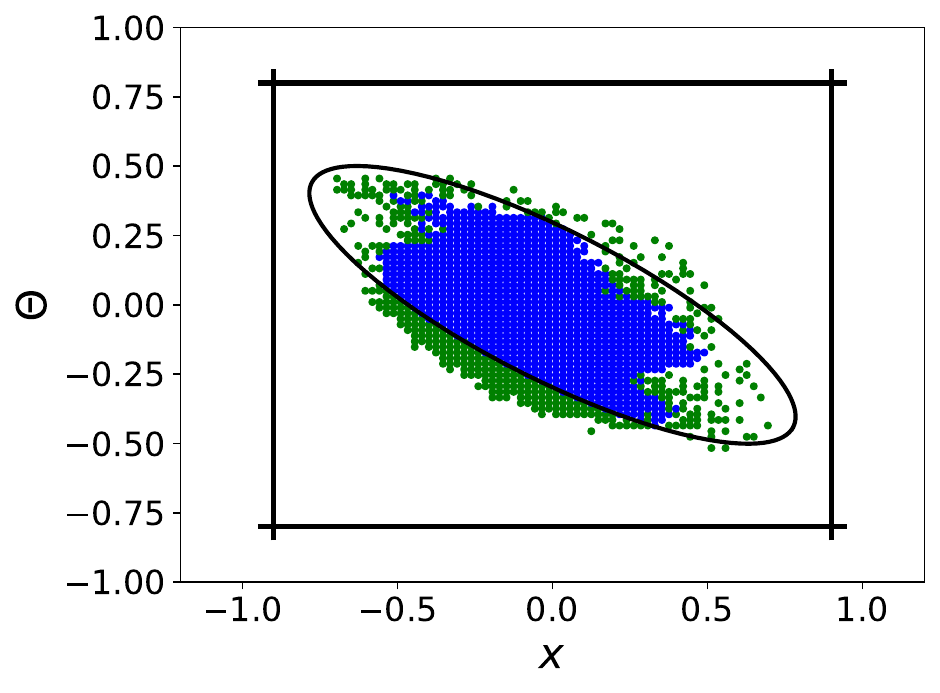}} \\
    \vspace{0.5cm}
    \centering
    \subfloat[mf-Phy-DRL ($2 \cdot 10^5$)]{\includegraphics[width=0.25\textwidth]{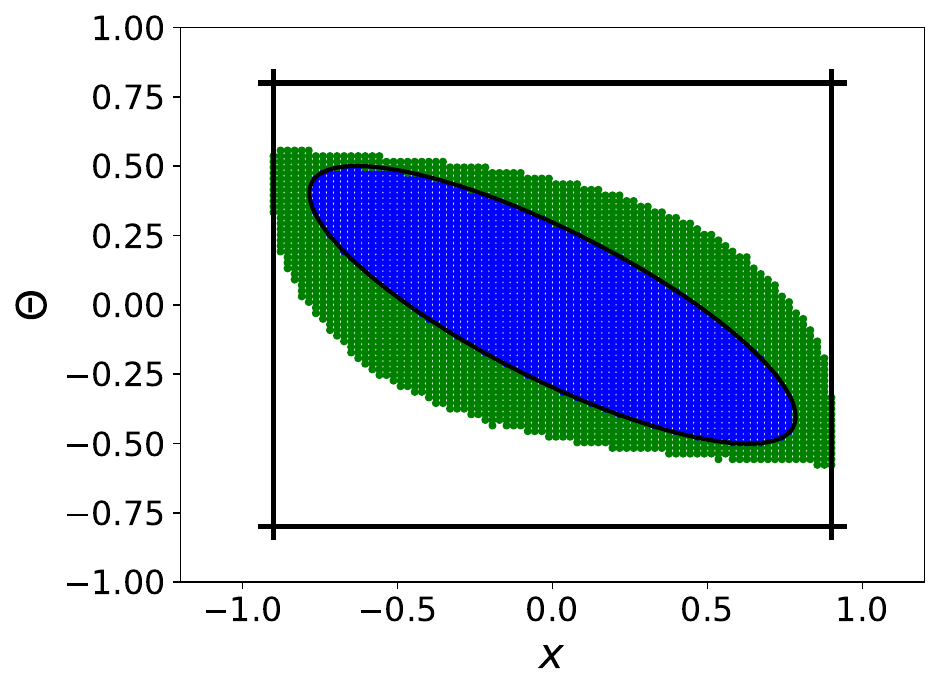}} 
    \centering
    \subfloat[mb-Phy-DRL ($2 \cdot 10^5$)]{\includegraphics[width=0.25\textwidth]{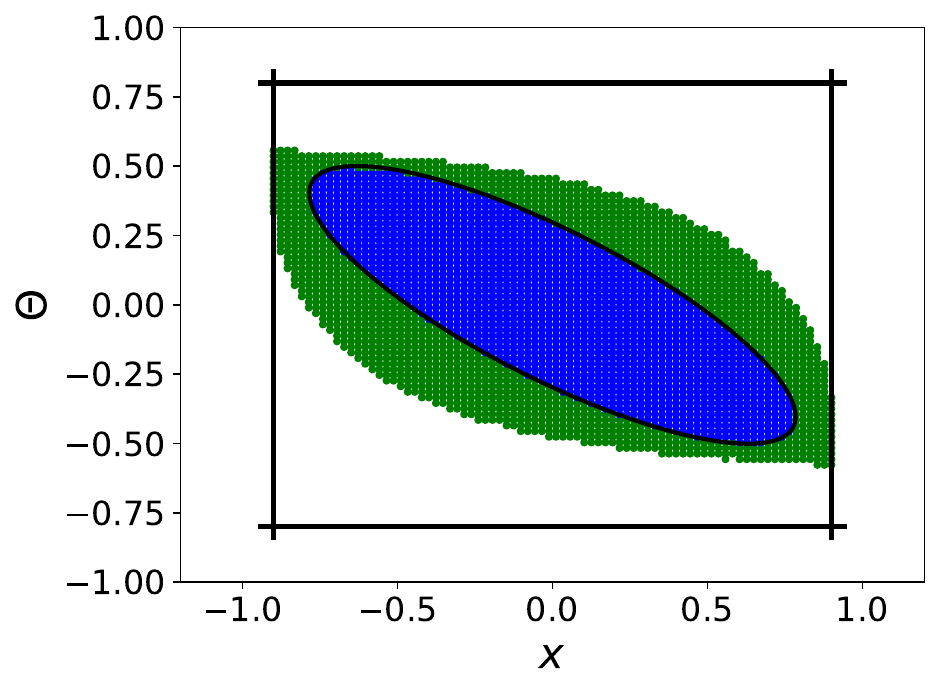}} 
    \centering
    \subfloat[mf-DRL ($2 \cdot 10^5$)]{\includegraphics[width=0.25\textwidth]{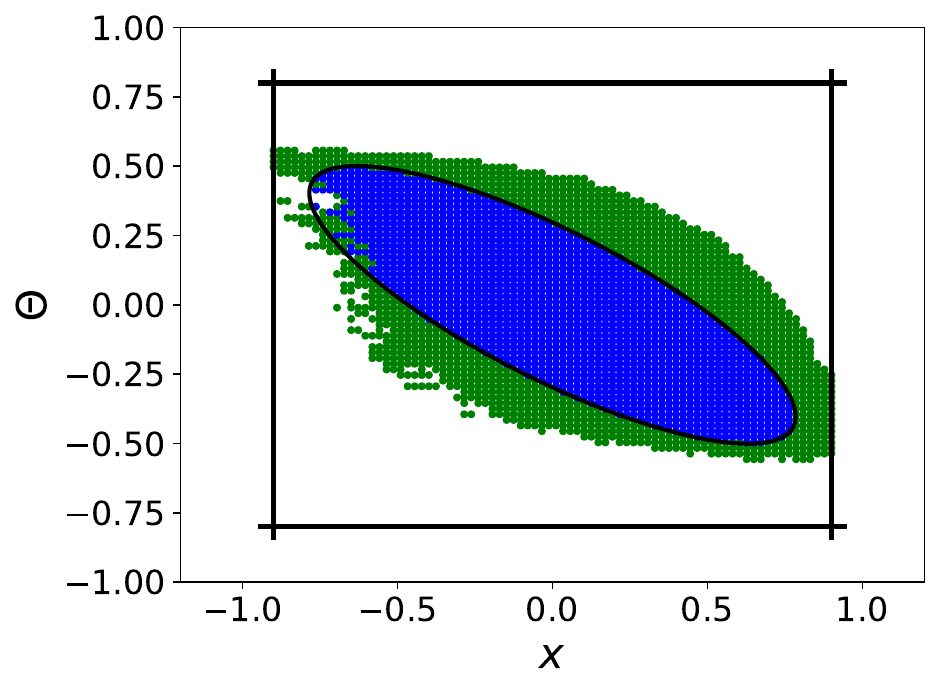}} 
    \centering
    \subfloat[mb-DRL ($2 \cdot 10^5$)]{\includegraphics[width=0.25\textwidth]{ubcmb200.pdf}} \\
    \vspace{-0.1cm}
\caption{Blue: area of IE samples defined in \cref{iess}. Green: area of EE samples defined in \cref{eess}. Rectangular area: safety set. Ellipse area: safety envelope. (a)-(d): all the policies are obtained via training for $5 \cdot 10^4$ steps. (e)-(h): all the policies are obtained via training for $7.5\cdot 10^4$ steps. (i)-(l): all the policies are obtained via training for $10^5$ steps. (m)-(p): all the policies are obtained via training for $2 \cdot 10^5$ steps.}
\label{safesampleaddrebutal}
\end{figure}

We perform testing comparisons of two Phy-DRLs (with and without a model inside for state prediction) and two DRLs (with and without a model inside for state prediction). The two DRLs use the same CLF reward as $\mathcal{R}(\cdot) = \mathbf{s}^\top(k) \cdot \mathbf{P} \cdot \mathbf{s}(k)  - {\mathbf{s}^\top(k+1)} \cdot \mathbf{P} \cdot \mathbf{s}(k+1)  + w( \mathbf{s}(k),\mathbf{a}(k))$ (proposed in \cite{westenbroek2022lyapunov}), where the $\mathbf{P}$ is the same as the one in the Phy-DRL's safety-embedded reward. The model used for state prediction is the one in \cref{referem}. The testing results are presented in \cref{safesampleaddrebutal}. We note in \cref{safesampleaddrebutal} that 
\begin{itemize}
    \item \textbf{mf-Phy-DRL}, denotes a policy trained via our Phy-DRL that does not adopt the model in \cref{referem} for state prediction. 
    \item \textbf{mb-Phy-DRL}, denotes a policy trained via our Phy-DRL that adopts the model in \cref{referem} for state prediction. 
    \item \textbf{mf-DRL}, denotes a policy trained via DRL that does not adopt the model in \cref{referem} for state prediction. 
    \item \textbf{mb-DRL}, denotes a policy trained via DRL that adopts the model in \cref{referem} for state prediction. 
\end{itemize}
Besides, all the training models of mf-Phy-DRL,  mb-Phy-DRL, mf-DRL and mb-DRL have the same configurations of critic and actor networks, presented in \cref{CCFFGG}. The performance metrics are the areas of IE and EE samples, defined in \cref{iess} and \cref{eess}, respectively. 

Observing \cref{safesampleaddrebutal}, we conclude that  
\begin{itemize}
\item Our Phy-DRL that does not adopt a model for state prediction can quickly complete training (only $5 \cdot 10^4$ steps) for rendering safety envelope invariant. While our Phy-DRL that adopts a model for state prediction is slightly slow, $7.5 \cdot 10^4$ training steps can complete the safety task.  
\item Safety areas of DRL policies are much smaller than our Phy-DRL policies. Even increasing training to $2 \cdot 10^5$ steps, mf-DRL and mb-DRL policies cannot render the safety envelope invariant, i.e., provide a safety guarantee.  
\item Incorporating our linear model into DRL for state prediction does not improve system performance, a root reason was revealed in \cite{janner2019trust} that the performance of model-based RL is constrained by modeling errors or model mismatch. 
Specifically, if a model has a large model mismatch with the nonlinear model of real system dynamics, relying on the model for state prediction (in model-based RL) may lead to potential sub-optimal performance and safety violations.
\end{itemize}

%%%%%%%%%%%%%%%%%%%%%%%%%%%%%%%%%%%%%%%%%%%%%%%%%
\newpage
\section{Experiment: Quadruped Robot} \label{expsup2}

\begin{figure}[htbp]
	\centering
        \hspace{0.2cm}
	\includegraphics[width=0.50\columnwidth]{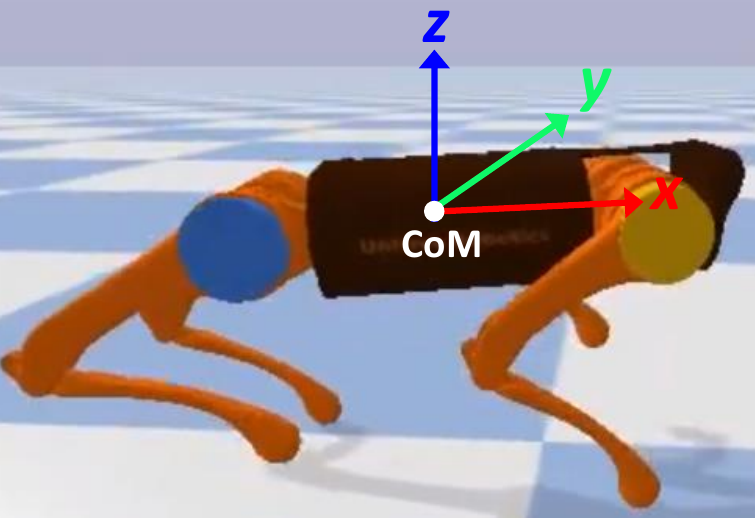}\hspace{-0.35cm}
	\caption{Quadruped robot: 3D single rigid-body model.}
        \label{dog}
\end{figure}

The developed package for training the robot via DRL is built on a Python-based framework for the A1 robot from Unitree, released in  \href{https://github.com/yxyang/locomotion_simulation}{GitHub}. The original framework includes a simulation based on Pybullet, an interface for direct sim-to-real transfer, and an implementation of the Convex MPC Controller for basic motion control. For the quadruped robot, the outputs of the designed action policies are the desired positional acceleration and rotational accelerations. The computed accelerations are then converted to the low-level motors' torque control. 

\subsection{Overview: Best-Trade-Off Between Model-Based Design and Data-Driven Design}
In the following sections, we demonstrate that the residual action policy of Phy-DRL is a best-trade-off between the model-based policy and the data-driven DRL policy for safety-critical control. Specifically, \cref{dogmodelmodelccc} explicitly shows the dynamics of quadruped robot is highly nonlinear, directly leveraging which for designing an analyzable and verifiable model-based action policy is extremely hard. While \cref{expsup1modeldesign} shows the residual action policy of Phy-DRL allows the model-based design to be simplified to be an analyzable and verifiable linear one, while offering fast and stable training (see \cref{knoocacoo} and \cref{ccooddff}).

\subsection{Real System Dynamics: Highly Nonlinear!} \label{dogmodelmodelccc}
The dynamics model of the robot is based on a single rigid body subject to forces at the 
contact patches \cite{di2018dynamic}. Referring to Figure \ref{dog}, the considered robot dynamics is characterized by the position of the body's center of mass (CoM) $\mathbf{p} = [p_{x}; p_{y}; p_{z}] \in \mathbb{R}^{3}$, the CoM velocity $\mathbf{v}  \triangleq \dot{\mathbf{p} } \in \mathbb{R}^{3}$, the Euler angles $\widetilde{\mathbf{e}} = [\phi; \theta; \psi] \in \mathbb{R}^{3}$ with $\phi$, $\theta$ and $\psi$ being roll, pitch and yaw angles, respectively, and the angular velocity in world coordinates $\mathbf{w} \in \mathbb{R}^{3}$. The robot's state vector is 
\begin{align}
&\widehat{\mathbf{s}} =  [\text{CoM x-position}; ~\text{CoM y-position}; ~\text{CoM z-height}; ~\text{roll}; \text{pitch}; ~\text{yaw}; ~\text{CoM x-velocity}; \nonumber\\
&\hspace{4.4cm}\text{CoM y-velocity}; ~\text{CoM z-velocity};~ \text{angular velocity} ~\mathbf{w} \in \mathbb{R}^{3}].\label{systemdogs} 
\end{align}

Before presenting the body dynamics model, we introduce the set of foot states $\{\text{Left Front (LF)}, \text{Right Front (RF)}, \text{Left Rear (LR)}, \text{Right Rear (RR)}\}$, based on which we define the following two footsteps for describing the trotting behavior of the quadruped robot: 
\begin{align}
\text{Step}_{1} \triangleq \{\text{LF}\!=\!1, \text{RF}\!=\!0, \text{LR}\!=\!0, \text{RR}\!=\!1\},   ~~~\text{Step}_{2} \triangleq \{\text{LF}\!=\!0, \text{RF}\!=\!1, \text{LR}\!=\!1, \text{RR}\!=\!0\}, \label{footstep}
\end{align}
where `0' indicates that the corresponding foot is a stance, and `1' denotes otherwise. The considered walking controller lifts two feet at a time by switching between the two stepping primitives in the following order: 
\begin{align}
\text{Step}_{1}   \to  \text{Step}_{2}  \to  \text{Step}_{1}   \to  \text{Step}_{2}  \to  \text{Step}_{1} \to  \text{Step}_{2}  \to  \text{Step}_{1}   \to  \text{Step}_{2}  \to  \ldots \to \text{repeating} \to \dots  \nonumber
\end{align}

According to the literature \cite{di2018dynamic}, the body dynamics of quadruped robots can be described by 
\begin{align}
&\frac{\mathbf{d}}{{\mathbf{d}t}}\underbrace{\left[ {\begin{array}{*{20}{c}}
\mathbf{p}\\
\widetilde{\mathbf{e}}\\
\mathbf{v}\\
\mathbf{w}
\end{array}} \right]}_{\triangleq ~\widehat{\mathbf{s}}}  = \left[ {\begin{array}{*{20}{c}}
{{\mathbf{O}_3}}&{{\mathbf{O}_3}}&{{\mathbf{I}_3}}&{{\mathbf{O}_3}}\\
{{\mathbf{O}_3}}&{{\mathbf{O}_3}}&{{\mathbf{O}_3}}&{\mathbf{R}(\phi ,\theta ,\psi)}\\
{{\mathbf{O}_3}}&{{\mathbf{O}_3}}&{{\mathbf{O}_3}}&{{\mathbf{O}_3}}\\
{{\mathbf{O}_3}}&{{\mathbf{O}_3}}&{{\mathbf{O}_3}}&{{\mathbf{O}_3}}
\end{array}} \right] \cdot \left[ {\begin{array}{*{20}{c}}
\mathbf{p}\\
\widetilde{\mathbf{e}}\\
\mathbf{v}\\
\mathbf{w}
\end{array}} \right] + {\widehat{\mathbf{B}}_{\sigma \left( t \right)}} \cdot \mathbf{a}_{\sigma \left( t \right)} + \underbrace{\left[ {\begin{array}{*{20}{c}}
{{\mathbf{0}_3}}\\
{{\mathbf{0}_3}}\\
{{\mathbf{0}_3}}\\
{\widetilde{\mathbf{g}}}
\end{array}} \right]}_{\triangleq ~\mathbf{c}} + \mathbf{f}(\widehat{\mathbf{s}}), \label{dogdynamics}
\end{align}
where ${\widetilde{\mathbf{g}}} = [0;0; -g] \in \mathbb{R}^{3}$ with $g$ being the gravitational acceleration, $\mathbf{f}(\widehat{\mathbf{s}})$ denotes model mismatch, the $\mathbf{R}(\phi,\theta,\psi) = \mathbf{R}_{z}(\psi) \cdot \mathbf{R}_{y}(\theta) \cdot \mathbf{R}_{x}(\phi) \in \mathbb{R}^{3 \times 3}$ with $\mathbf{R}_{i}(\alpha) \in \mathbb{R}^{3 \times 3}$ being the rotation of angle $\alpha$ about axis $i$. The $\mathbf{a}_{\sigma \left( t \right)} \in \mathbb{R}^{9}$, ~$\sigma \left( t \right) \in \mathbb{S} \triangleq \{\text{Step}_{1}, \text{Step}_{2}\}$, in the dynamics \cref{dogdynamics} are the switching action commands, i.e.,   
\begin{align}
&\mathbf{a}_{\text{Step}_{1}} = \left[
{{\mathbf{f}_{\text{RF}}}}; ~{{\mathbf{f}_{\text{LR}}}}
 \right] \in \mathbb{R}^{6}, ~~~~~
\mathbf{a}_{\text{Step}_{2}} = \left[ 
{{\mathbf{f}_{\text{LF}}}};~
{{\mathbf{f}_{\text{RR}}}}
\right] \in \mathbb{R}^{6},\label{sdcmd}
\end{align}
where the $\mathbf{f}_{\text{LF}}, \mathbf{f}_{\text{RF}}, \mathbf{f}_{\text{RR}}, \mathbf{f}_{\text{LR}} \in \mathbb{R}^{3}$ are the ground reaction forces. While the ${\widehat{\mathbf{B}}_{\sigma \left( t \right)}} \in \mathbb{R}^{12 \times 6}$ denote the corresponding switching control structure matrices: 
\begin{subequations}
\begin{align}
&\widehat{\mathbf{B}}_{\text{Step}_{1}} = \left[ {\begin{array}{*{20}{c}}
{{\mathbf{O}_3}}&{{\mathbf{O}_3}}\\
{{\mathbf{O}_3}}&{{\mathbf{O}_3}}\\
{\frac{1}{m} \cdot {\mathbf{I}_3}}&{\frac{1}{m} \cdot {\mathbf{I}_3}}\\
{{\mathbf{I}^{ - 1}}\left( {\phi ,\theta ,\psi } \right) \cdot \left[ {{\mathbf{r}_{\text{RF}}}} \right]_{\times}}&{{\mathbf{I}^{ - 1}}\left( {\phi ,\theta ,\psi } \right) \cdot \left[ {{\mathbf{r}_{\text{LR}}}} \right]_{\times}}
\end{array}} \right], \\
&
\widehat{\mathbf{B}}_{\text{Step}_{2}} = \left[ {\begin{array}{*{20}{c}}
{{\mathbf{O}_3}}&{{\mathbf{O}_3}}\\
{{\mathbf{O}_3}}&{{\mathbf{O}_3}}\\
{\frac{1}{m} \cdot {\mathbf{I}_3}}&{\frac{1}{m} \cdot {\mathbf{I}_3}}\\
{{\mathbf{I}^{ - 1}}\left( {\phi ,\theta ,\psi } \right) \cdot \left[ {{\mathbf{r}_{\text{LF}}}} \right]_{\times}}&{{\mathbf{I}^{ - 1}}\left( {\phi ,\theta ,\psi } \right) \cdot \left[ {{\mathbf{r}_{\text{RR}}}} \right]_{\times}}
\end{array}} \right], 
\end{align}\label{dogsdcb}
\end{subequations}
\!\!where $\mathbf{I}\left(\phi ,\theta ,\psi \right) \in \mathbb{R}^3$ is the robot's inertia tensor, $\mathbf{r}_{\text{LF}}, \mathbf{r}_{\text{RF}}, \mathbf{r}_{\text{LR}}, \mathbf{r}_{\text{RR}} \in \mathbb{R}^{3}$ denote the four foots' positions relative to CoM position, and ${\left[ {{\mathbf{r}_\sigma }} \right]_ \times }$ is defined as the skew-symmetric matrix: 
\begin{align}
{\left[ {{\mathbf{r}_o }} \right]_ \times } = \left[ {\begin{array}{*{20}{c}}
0&{ - {{\left[ {{\mathbf{r}_o }} \right]}_z}}&{{{\left[ {{\mathbf{r}_o }} \right]}_y}}\\
{{{\left[ {{\mathbf{r}_o }} \right]}_z}}&0&{ - {{\left[ {{\mathbf{r}_o }} \right]}_x}}\\
{ - {{\left[ {{\mathbf{r}_o }} \right]}_y}}&{{{\left[ {{\mathbf{r}_o }} \right]}_x}}&0
\end{array}} \right], ~~o \in \{\text{LF}, \text{RF}, \text{LR}, \text{RR}\}. \nonumber
\end{align}

\subsection{Simplifying Model-Based Designs} \label{expsup1modeldesign}
To have the model knowledge represented by $\left( {\mathbf{A},~ \mathbf{B}} \right)$ pertaining to robot dynamics (\ref{dogdynamics}), we make the following simplifications. 
\begin{align}
\mathbf{R}(\phi,\theta,\psi) =  \mathbf{I}_{3}, ~~~~~~~\widetilde{\mathbf{B}}  \triangleq \left[ {\begin{array}{*{20}{c}}
\mathbf{O}_3&{{\mathbf{O}_3}}\\
{{\mathbf{O}_3}}&{{\mathbf{O}_3}}\\
\mathbf{I}_3&\mathbf{O}_3\\
\mathbf{O}_3&\mathbf{I}_3
\end{array}} \right],\label{simplificationsa}
\end{align}
where the $\mathbf{R}(\phi,\theta,\psi) =  \mathbf{I}_{3}$ is obtained through setting the zero angels of roll, pitch and yaw, i.e., $\phi = \theta = \psi = 0$.

Referring to the matrices in \cref{dogsdcb}, with the simplifications in \cref{simplificationsa} at hand and the ignoring of unknown model mismatch of the ground-truth model, we can obtain a simplified linear model pertaining to robot dynamics (\ref{dogdynamics}): 
\begin{align}
\frac{\mathbf{d}}{{\mathbf{d}t}}\underbrace{\left[ {\begin{array}{*{20}{c}}
\widetilde{\mathbf{p}} \\
\widetilde{\widetilde{\mathbf{e}}}\\
\widetilde{\mathbf{v}} \\
\widetilde{\mathbf{w}}
\end{array}} \right]}_{\triangleq ~ \widetilde{\mathbf{s}}}  = \underbrace{\left[ {\begin{array}{*{20}{c}}
{{\mathbf{O}_3}}&{{\mathbf{O}_3}}&{{\mathbf{I}_3}}&{{\mathbf{O}_3}}\\
{{\mathbf{O}_3}}&{{\mathbf{O}_3}}&{{\mathbf{O}_3}}&{{\mathbf{I}_3}}\\
{{\mathbf{O}_3}}&{{\mathbf{O}_3}}&{{\mathbf{O}_3}}&{{\mathbf{O}_3}}\\
{{\mathbf{O}_3}}&{{\mathbf{O}_3}}&{{\mathbf{O}_3}}&{{\mathbf{O}_3}}
\end{array}} \right]}_{\triangleq ~ \widetilde{\mathbf{A}}} \cdot \left[ {\begin{array}{*{20}{c}}
\widetilde{\mathbf{p}} \\
\widetilde{\widetilde{\mathbf{e}}}\\
\widetilde{\mathbf{v}} \\
\widetilde{\mathbf{w}}
\end{array}} \right] + {\widetilde{\mathbf{B}}} \cdot \widetilde{\mathbf{u}}_{\sigma \left( t \right)}, \label{simdogdynamics}
\end{align}
where $\widetilde{\mathbf{u}}_{\text{Step}_{1}} \triangleq \left[
{{\underline{\mathbf{f}}_{\text{RF}}}}; ~{{\underline{\mathbf{f}}_{\text{LR}}}}
 \right] \in \mathbb{R}^{6}$ and $~\widetilde{\mathbf{u}}_{\text{Step}_{2}} \triangleq \left[
{{\underline{\mathbf{f}}_{\text{LF}}}}; ~{{\underline{\mathbf{f}}_{\text{RR}}}}
 \right] \in \mathbb{R}^{6}$.

In light of the equilibrium point $\mathbf{s}^*$ in \cref{equli} and $\mathbf{\widetilde{\mathbf{s}}}$ given in \cref{simdogdynamics}, we define  ${\mathbf{s}} \triangleq \mathbf{\widetilde{\mathbf{s}}} - \mathbf{s}^*$. It is then straightforward to obtain a dynamics from \cref{simdogdynamics} as $\dot{{\mathbf{s}}} = \widetilde{\mathbf{A}} \cdot {\mathbf{s}} + {\widetilde{\mathbf{B}}} \cdot {\mathbf{u}}_{\sigma \left( t \right)}$, which transforms to a discrete-time model via sampling technique: 
\begin{align}
\mathbf{s}(k+1) =  \mathbf{A} \cdot {\mathbf{s}}(k) + {{\mathbf{B}}} \cdot \underbrace{\widetilde{\mathbf{u}}_{\sigma \left( k \right)}(k)}_{\triangleq ~ \mathbf{a}_{\text{phy}}(k)}, ~\text{with}~\mathbf{A} = \mathbf{I}_{12} + T \cdot \widetilde{\mathbf{A}}~\text{and}~\mathbf{B} = T \cdot \widetilde{\mathbf{B}}, \label{discremodeless}
\end{align}
where $T = 0.001$ sec is the sampling period.

Considering the safety conditions in \cref{ssset}, we obtain the safety set defined in \cref{aset2}, where 
\begin{align}
\mathbf{D} \!=\! \left[\! {\begin{array}{*{20}{c}}
0 & 0 & 0 & 0 & 0 & 1 &0 & 0 & 0 & \mathbf{0}^{\top}_{3} \\
0 & 0 & 1 & 0 & 0 & 0 &0 & 0 & 0 & \mathbf{0}^{\top}_{3} \\
0 & 0 & 0 & 0 & 0 & 0 &1 & 0 & 0 & \mathbf{0}^{\top}_{3} \\
\end{array}} \!\right]\!,\!~\mathbf{v} \!=\! \left[\! \begin{array}{l}
0\\
0.24 \\
 r_{\text{x}}
\end{array} \!\right]\!\!,\!~\overline{\mathbf{v}} \!=\! \left[\! \begin{array}{l}
0.17\\
0.13\\
|r_{\text{x}}|
\end{array} \!\right]\!\!,\!~ \underline{\mathbf{v}} \!=\!\left[\! \begin{array}{l}
-0.17\\
-0.13\\
-| r_{\text{x}}|
\end{array} \!\right]\!,\label{dogexp1102} 
\end{align}

We now obtain the following model-based solutions, which satisfy the LMIs in \cref{set5} and \cref{th1}. 
\begin{align}
&\mathbf{P} = \left[ {\begin{array}{*{20}{c}}
0.016 & 0 & 0.023 & 0 & 0 & 0.102 & 0.003 & 0  \\
0 & 0.015 & 0.001 & 0 & 0 & 0.002 & 0 & 0.001  \\
0.023 & 0.001 & 226.355 & 0.078 & 0.082 & 55.206 & 0.017 & 0.003 \\
0 & 0 & 0.078 & 0.641 & -0.001 & 0.118 &  0  & 0 \\
0 & 0 & 0.082 & -0.001 &  0.638 & 0.125 & 0 & 0 \\
0.102 & 0.002 & 55.206 & 0.118 & 0.125 & 247.803 & 0.045 & 0.003 \\
0.003 & 0 &  0.017 & 0 & 0 & 0.045 & 1.065 & 0  \\
0 & 0.001 & 0.003 & 0 & 0 & 0.003 & 0 & 0.03 \\
-0.211 & -0.008 & -314.163 & -0.73 & -0.789 & -483.681 & -0.168 & -0.021  \\
 0 & 0 & 0.006 & 0.042 & 0 & 0.004 & 0 & 0  \\
0 & 0 & 0.003 & 0 & 0.042 & 0.003 & 0 & 0\\
0.002 & 0 & 1.249 & 0.003 & 0.003 & 5.547 &  0 & 0
\end{array}} \right. \nonumber\\
& \hspace{7.1cm} \left.   {\begin{array}{*{20}{c}}
-0.211 & 0 & 0 & 0.002 \\
-0.008 & 0 & 0 & 0 \\
-314.163 & 0.006 & 0.003 & 1.249\\
-0.73 & 0.042 &  0 & 0.003\\
-0.789 & 0 & 0.042 & 0.003 \\
-483.681 &  0.004 & 0.003 & 5.547\\
0 & 0 & 0 & 0 \\
-0.168 & 0 & 0 & 0\\
-0.021 & 0 & 0 & 0\\
3229.212 & -0.049 & -0.018 & -10.901 \\
-0.049 & 0.017 & 0 & 0 \\
-0.018 & 0 & 0.017 & 0 \\
-10.901 &  0 & 0 & 0.169
\end{array}}     \right], \nonumber\\
&\mathbf{F} = \left[\! {\begin{array}{*{20}{c}}
-0.1 & 0 & 0 & 0 & 0 & 0 & -40 & 0 & 0 & 0 & 0 & 0\\
0 & -0.1 & 0 & 0 & 0 & 0 & 0 & -30 & 0 & 0 & 0 & 0\\
0 & 0 & -100 & 0 & 0 & 0 & 0 & 0 & -10 & 0 & 0 & 0\\
0 & 0 & 0 & -100 & 0 & 0 & 0 & 0 & 0 & -10 & 0 & 0\\
0 & 0 & 0 & 0 & -100 & 0 & 0 & 0 & 0 & 0 & -10 & 0\\
0 & 0 & 0 & 0 & 0 & -100 & 0 & 0 & 0 & 0 & 0 & -30\\
\end{array}} \!\right], \nonumber 
\end{align}
with which and matrices $\mathbf{A}$ and $\mathbf{B}$ in \cref{discremodeless}, we are able to deliver the model-based policy (\ref{modelbased}) and safety-embedded reward (\ref{reward}).

\subsection{Testing Experiment: Velocity Tracking Performance} \label{velocitytrack}
\begin{figure}[httb]
\centering
  \begin{tabular}{@{}cccc@{}}
      \includegraphics[width=.50\textwidth]{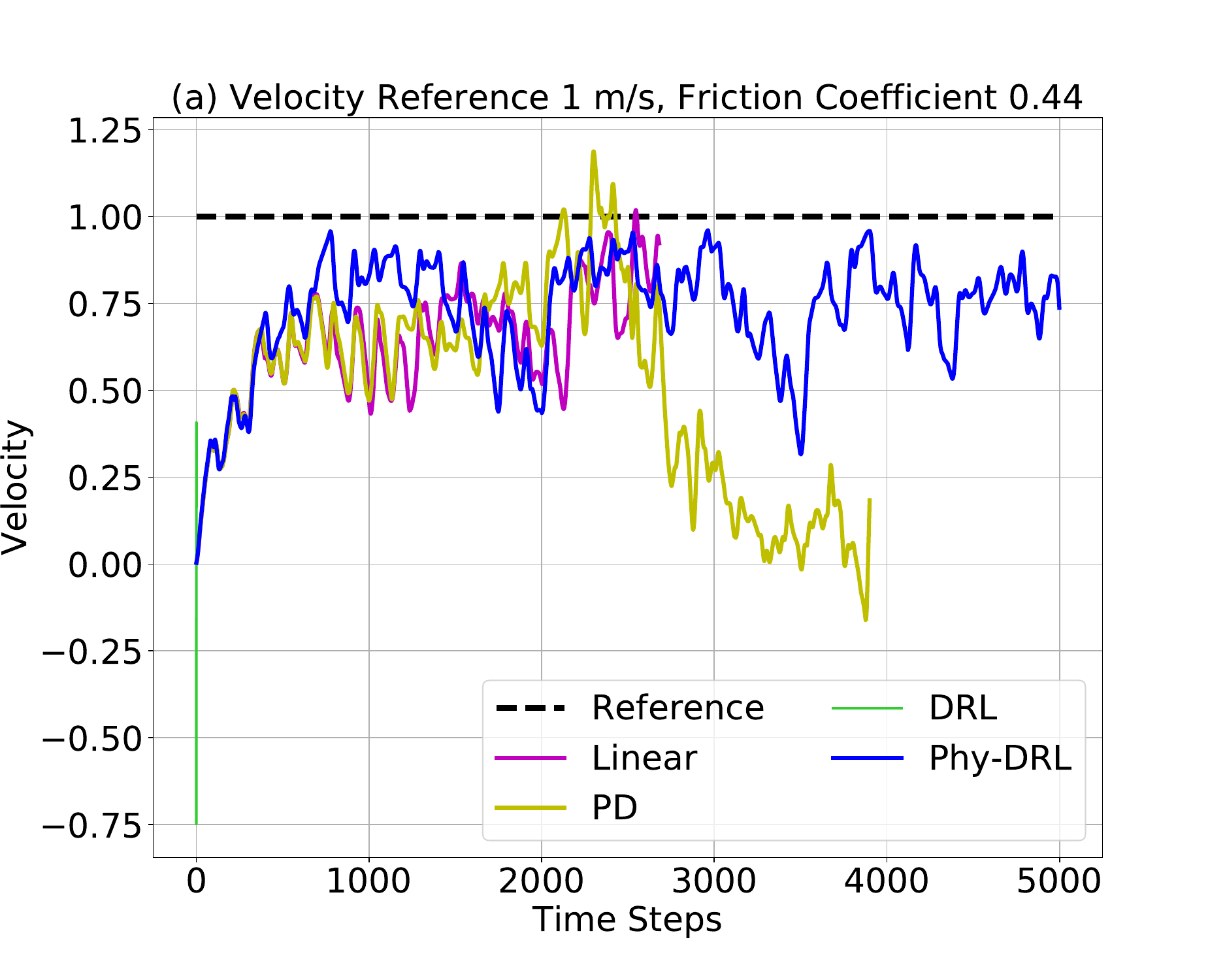} &
    \includegraphics[width=.50\textwidth]{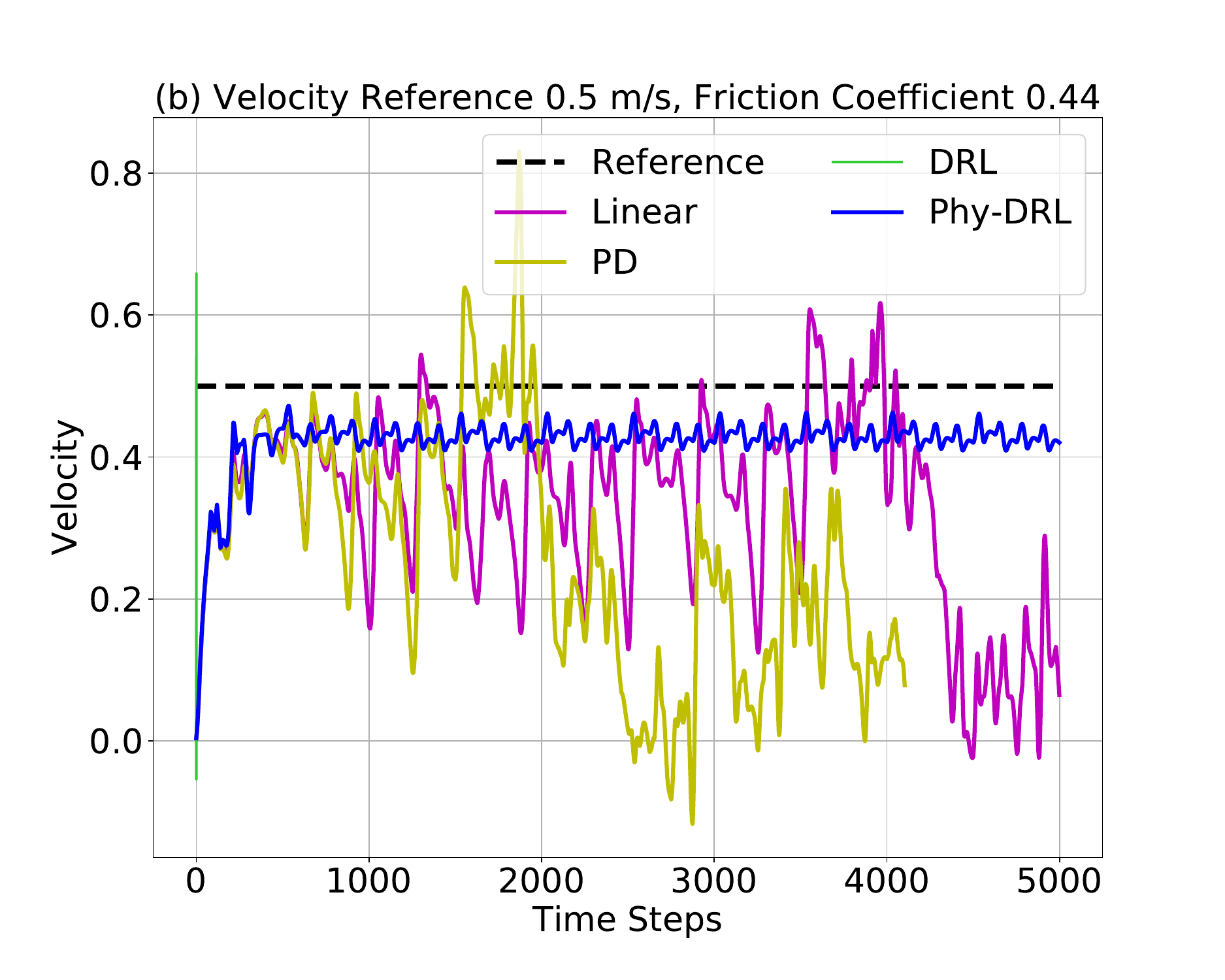} \\
    \includegraphics[width=.50\textwidth]{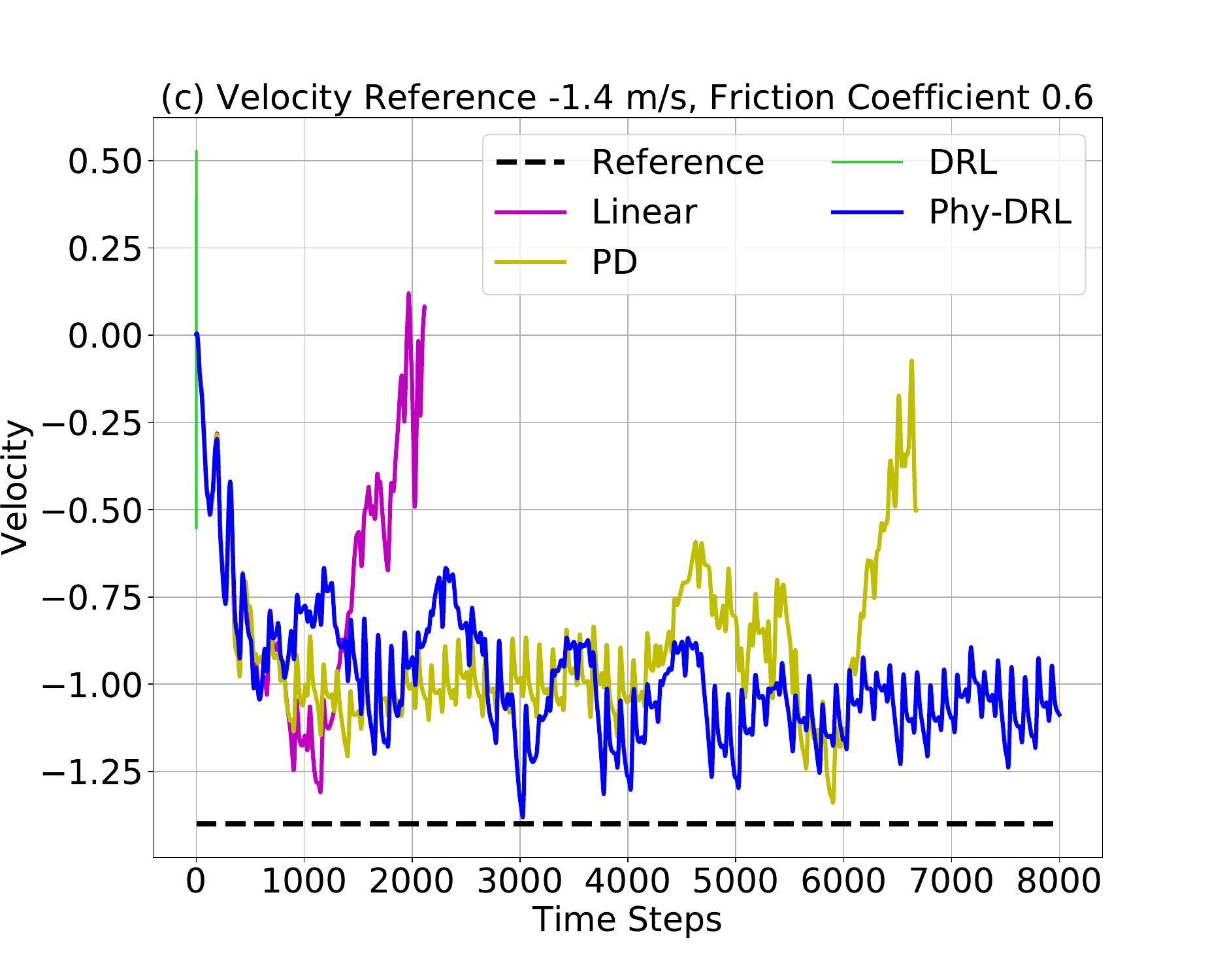} &
    \includegraphics[width=.50\textwidth]{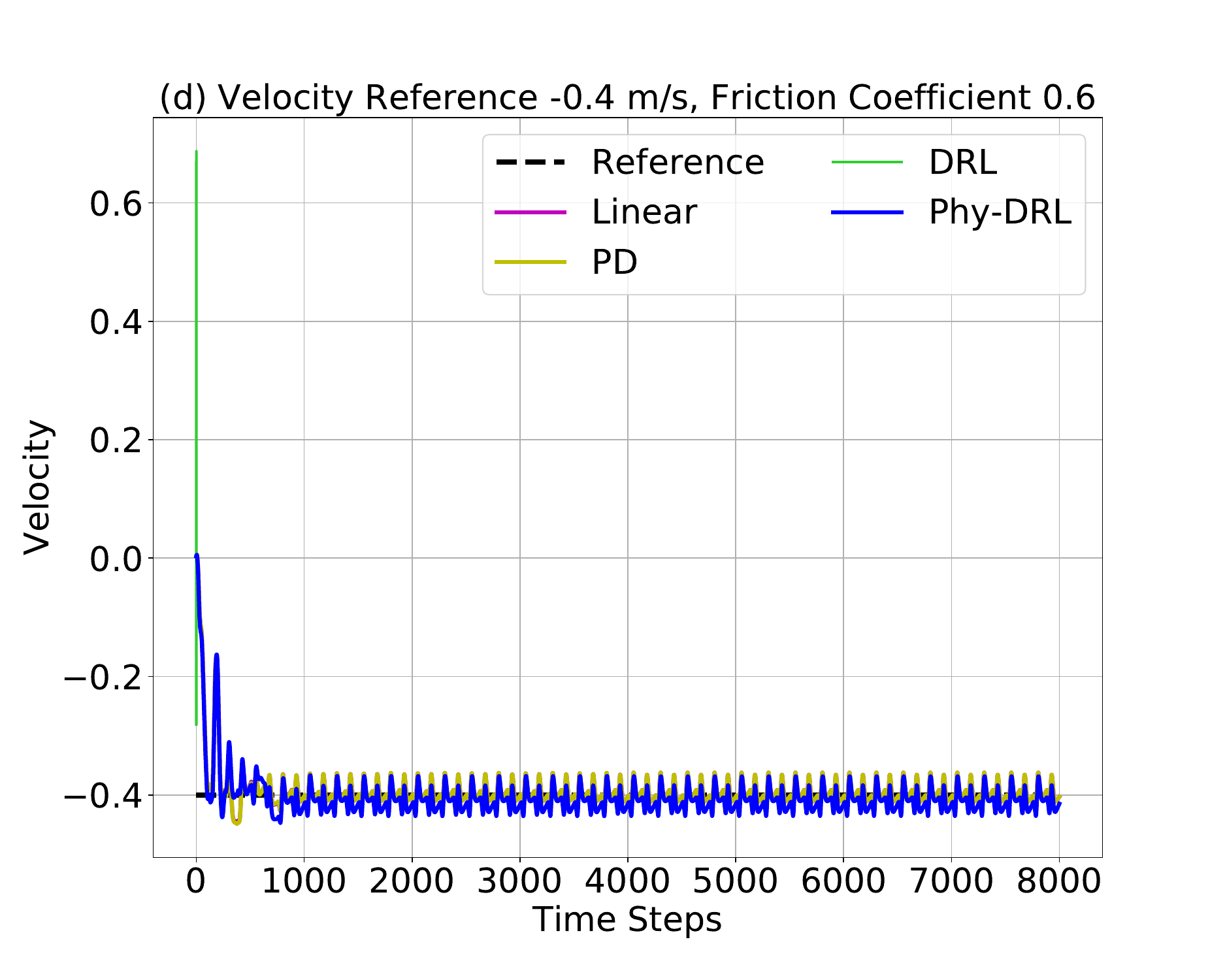} 
  \end{tabular}
  \vspace{-0.25cm}
  \caption{Velocity trajectories of quadruped robot in four different environments defined in \cref{gghhjjjk}.}
  \label{ccdgg6}
\end{figure}

The velocity trajectories of the four models (defined in \cref{gghhjjjk}) running in Environments 1-4  are shown in \cref{ccdgg6}. Observing the \cref{ccdgg6}, we straightforwardly discover that the trained Phy-DRL can lead to much better performance of velocity regulation or tracking, compared with solely model-based action policies and purely data-driven DRL's action policy. 

\subsection{Safety-Embedded Rewards} \label{knoocac}
For the aim of a mathematically-provable safety guarantee, the reward that is most similar to ours is the CLF reward proposed in \cite{westenbroek2022lyapunov}. We also perform the comparisons. To simplify the comparisons, we do not consider the high-performance sub-reward, which means both rewards degrade to 
\begin{align}
&\text{Ours}: \mathcal{R}( {\mathbf{s}(k),\mathbf{a}_{\text{drl}}}(k)) = \mathbf{s}^\top(k) \cdot ({{\overline{\mathbf{A}}^\top} \cdot \mathbf{P} \cdot \overline{\mathbf{A}} }) \cdot \mathbf{s}(k)  - {\mathbf{s}^\top(k+1)} \cdot \mathbf{P} \cdot \mathbf{s}(k+1), \label{ourreward} \\
&\text{CLF Reward}: \mathcal{R}( {\mathbf{s}(k),\mathbf{a}_{\text{drl}}}(k)) = \mathbf{s}^\top(k) \cdot \mathbf{P} \cdot \mathbf{s}(k)  - {\mathbf{s}^\top(k+1)} \cdot \mathbf{P} \cdot \mathbf{s}(k+1). \label{ubcreward} 
\end{align}

\subsection{Physics-Knowledge-Enhanced Critic Network} \label{knoocacoo}
To apply the NN editing, we first obtain the available knowledge about the actor-value function and action policy. Referring to 
\cref{bellman}, the action-value function can be re-denoted by 
\begin{align}
{\mathcal{Q}^\pi}( {\mathbf{s}(k), \mathbf{a}_{\text{drl}}(k)}) = {\mathcal{Q}^\pi}(\mathcal{R}\left(\mathbf{s}(k), \mathbf{a}_{\text{drl}}(k) \right)).  \label{bellmanccc}
\end{align}

According to Taylor's theorem in \cref{cmar}, expanding the action-value function in \cref{bellmanccc} around the (one-dimensional real value) $\mathcal{R}\left(\mathbf{s}(k), \mathbf{a}_{\text{drl}}(k) \right)$, and recalling \cref{ourreward}, we conclude the action-value function does not include any odd-order monomials of $[\mathbf{s}(k)]_{i}, i = 1, \ldots, 12$, and is independent of $\mathbf{a}_{\text{drl}}(k)$. The critic network shall strictly comply with the knowledge. This knowledge compliance can be achieved via our proposed NN editing. We do not obtain the invariant knowledge about action policy in this example. In other words, according to our analysis, the action policy depends on all the elements of the system state $\mathbf{s}(k)$. So, in this example, we only need to design a physics-knowledge-enhanced critic network. 

The architecture of the considered physics-knowledge-enhanced critic network in this example is shown in \cref{ccoo}, where the PhyN architecture is given in \cref{bnn} (b). We compare the performance of physics-knowledge-enhanced critic networks with fully-connected multi-layer perceptron (FC MLP). \cref{ccoo} shows that different critic networks can be obtained by only changing the output dimensions $n$. We here consider three models: physics-knowledge-enhanced critic network with $n = 10$ (PKN-10), physics-knowledge-enhanced critic network with $n = 15$ (PKN-15), and physics-knowledge-enhanced critic network with $n = 20$ (PKN-20). The parameter numbers of all network models are summarized in Table \ref{taboo}. 
\begin{figure}[http]
	\centering
        \hspace{0.2cm}
	\includegraphics[width=0.864\columnwidth]{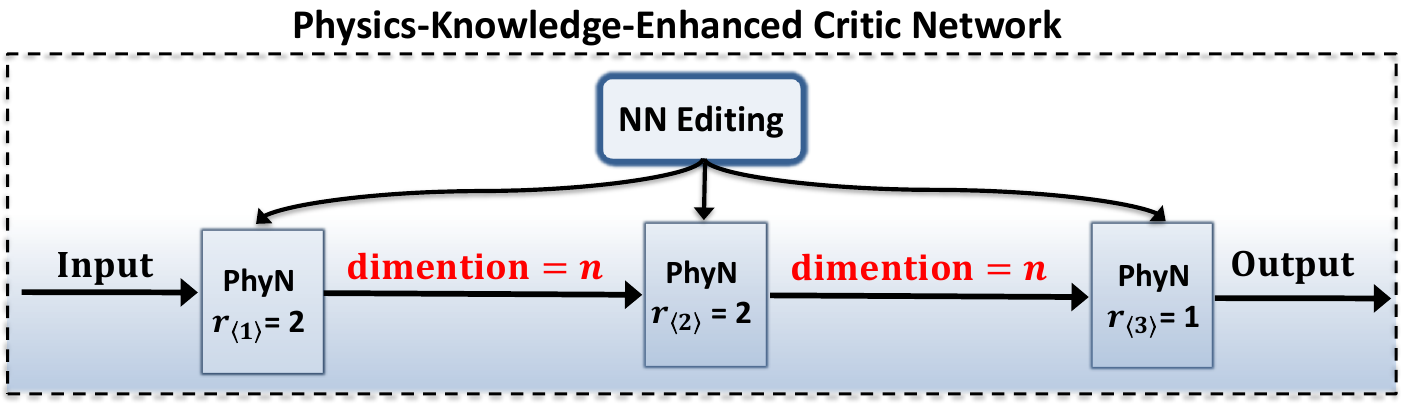}\hspace{-0.35cm}
	\caption{Considered physics-knowledge-enhanced critic network.}
        \label{ccoo}
\end{figure}

The trajectories of episode reward of the four models in \cref{taboo} are shown in Figure \ref{ubopdogerf}, which, in conjunction with Table \ref{taboo}, show that except for the smallest knowledge-enhanced critic network (i.e., model PKN-10), other networks (i.e., models PKN-15 and PKN-20) outperform the very large FC MLP model, viewed from perspectives of parameter numbers,  episode reward and stability of training. This, on the other hand, implies that the physics-knowledge-enhanced critic network can avoid significant spurious correlations via NN editing. 

\begin{table*}[http]\footnotesize{
\caption{Model Parameters}
\centering
\begin{tabular}{l cc cc cc cc c c}
\toprule
 & \multicolumn{2}{c}{Layer 1}  & \multicolumn{2}{c}{Layer 2} & \multicolumn{2}{c}{Layer 3} & \multicolumn{2}{c}{Layer 4}\\
\cmidrule(lr){2-3} \cmidrule(lr){4-5} \cmidrule(lr){6-7} \cmidrule(lr){8-9}
Model ID     & \#weights    & \#bias  & \#Weights & \#bias  & \#weights  & \#bias & \#weights  & \#bias & \#sum & \\
\midrule
PKN-10   &  $1710$    &    $10$   &   $650$     &     $10$    &   $10$     &   $1$   &   $-$   &   $-$    & 2391\\
PKN-15 &  $2565$     &    $15$    &   $2025$   &     $15$    &   $15$   &   $1$    &    $-$     &   $-$    &  4636\\
PKN-20 &  $3420$     &    $20$    &   $4600$   &     $20$    &   $20$   &   $1$    &    $-$     &   $-$    &  8081\\
FC MLP    &  $2304$   &    $128$   &   $16384$      &  128      &   $16384$    &   $128$   &   $128$   &   $1$    & 35585 \\
\bottomrule
\end{tabular}
\label{taboo}}
\end{table*}

\begin{figure}
	\centering
        \hspace{0.2cm}
	\includegraphics[width=0.75\columnwidth]{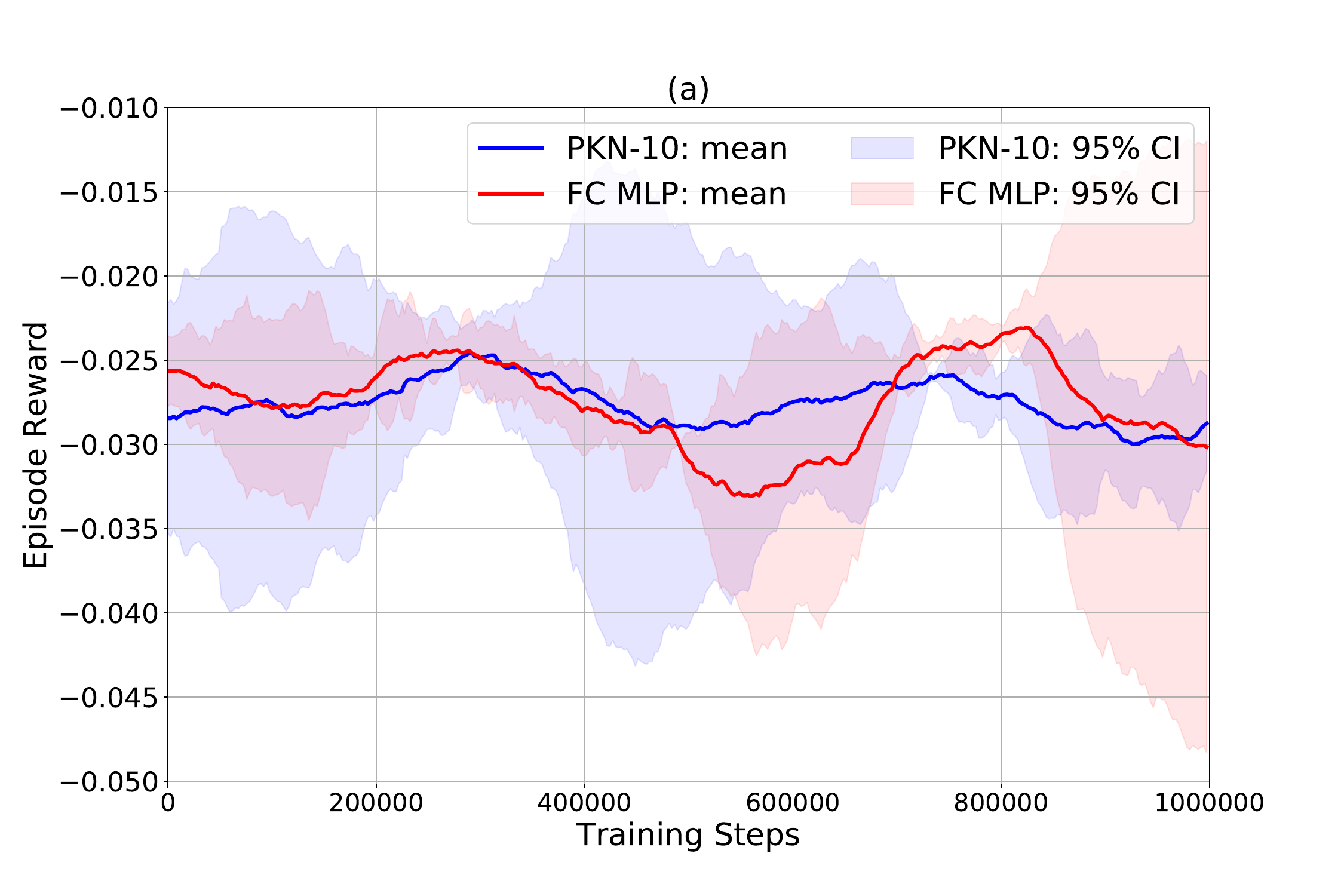}\\
        \includegraphics[width=0.75\columnwidth]{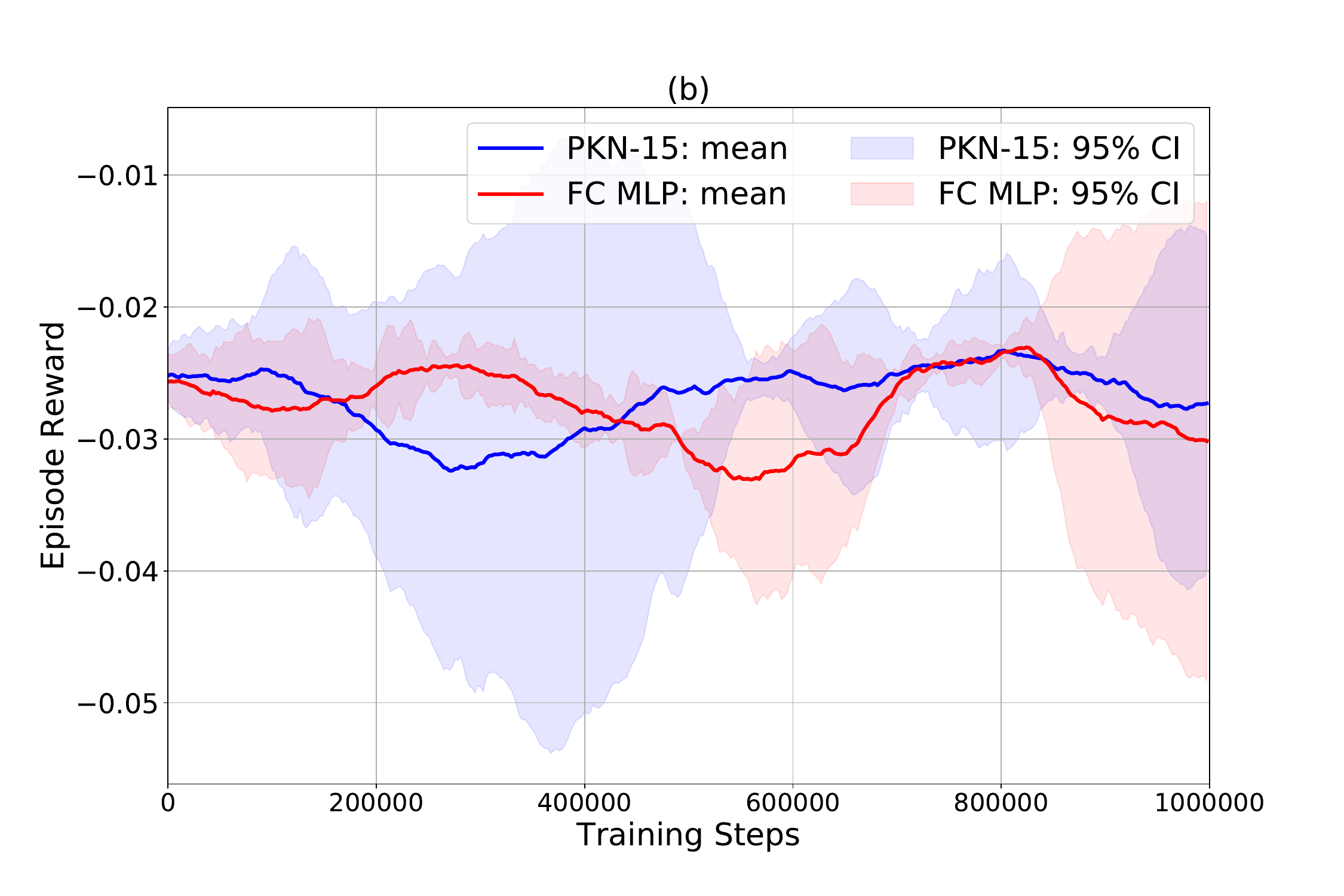}\\
        \includegraphics[width=0.75\columnwidth]{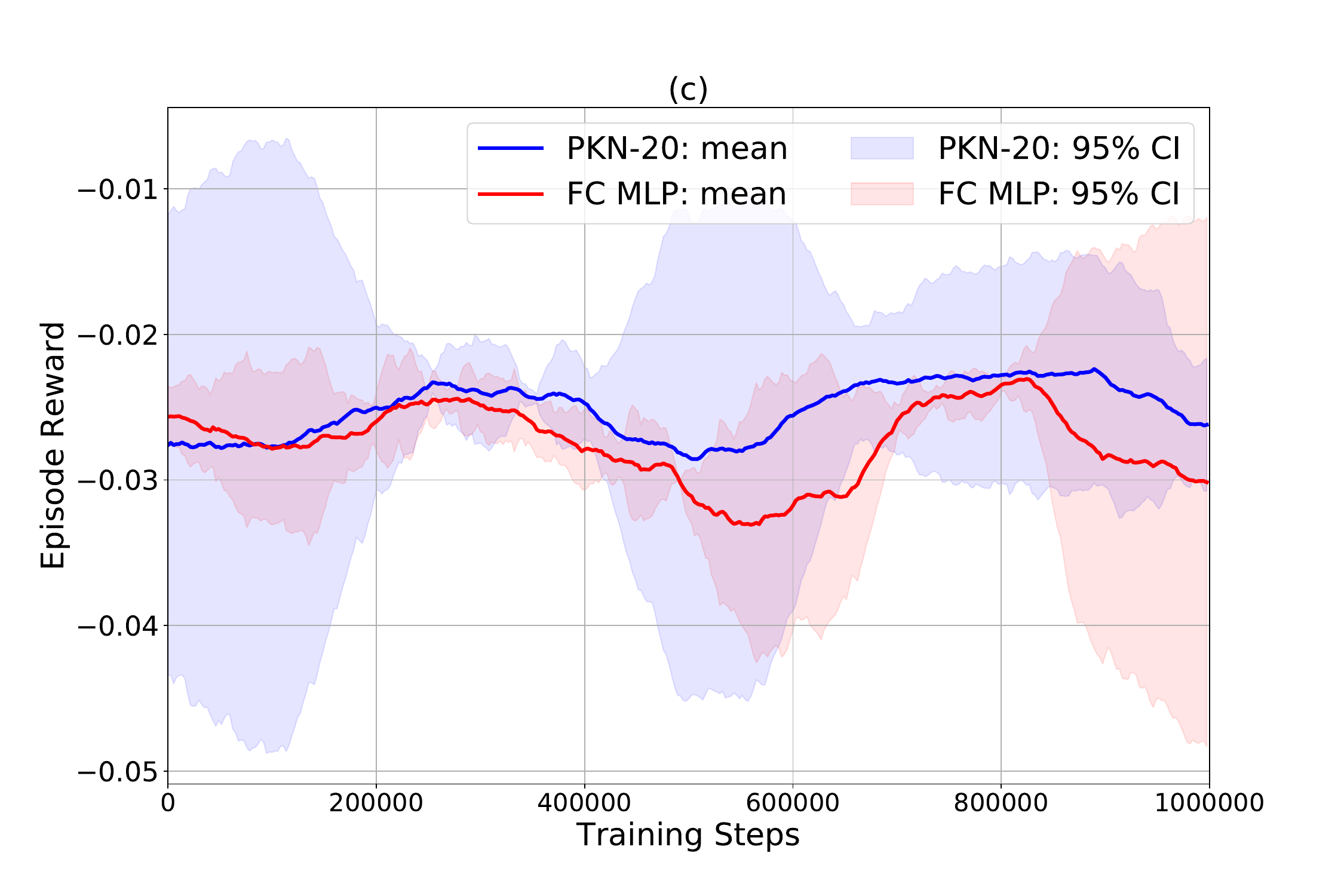}
	\caption{Trajectories of episode reward in training: smoothing rate 0.15 and 3 random seeds.}
        \label{ubopdogerf}
\end{figure}

\subsection{Reward Comparisons} \label{ccooddff}
We next compare the two rewards in \cref{ourreward} and \cref{ubcreward} from the perspectives of design differences and experiments. To have a fair experimental comparison, we compare the two rewards in the same Phy-DRL package. In other words, we use two Phy-DRL models to train the robot; the only difference is their reward: one is our proposed reward in \cref{ourreward} while the other one is the CLF reward in \cref{ubcreward}. 

\subsubsection{Comparison: Design Differences}
Along the ground-truth model of real plant (\ref{realsys}) with the consideration of \cref{residual}, \cref{modelbased} and \cref{asysma}, we have  
\begin{align}
&{\mathbf{s}^\top(k+1)}\cdot {\mathbf{P}} \cdot \mathbf{s}(k+1)  \nonumber\\
& = {\left( {{{\bf{B}}} \cdot {{\bf{a}}_{\text{drl}}}(k) + {\bf{f}}({\bf{s}}(k),{\bf{a}}(k))} \right)^\top} \cdot {\bf{P}} \cdot \left( {{{\bf{B}}} \cdot {{\bf{a}}_{\text{drl}}}(k) + {\bf{f}}({\bf{s}}(k),{\bf{a}}(k))} \right) \nonumber\\
&\hspace{0.45cm} + 2{{\bf{s}}^\top}(k) \cdot {\bf{\overline A}}^\top \cdot {\bf{P}} \cdot \left( {{{\bf{B}}} \cdot {{\bf{a}}_{\text{drl}}}(k) + {\bf{f}}({\bf{s}}(k),{\bf{a}}(k))} \right) + {{\bf{s}}^\top}(k) \cdot ( {{\bf{\overline A}}^\top \cdot {\bf{P}} \!\cdot\! {{{\bf{\overline A}}}}}) \cdot {\bf{s}}(k). \label{rewrite} 
\end{align}

We next define two invariants and one unknown: 
\begin{align}
&\text{invariant-1} \triangleq \mathbf{s}^\top(k) \cdot {{\overline{\mathbf{A}}^\top} \cdot \mathbf{P} \cdot \overline{\mathbf{A}} } \cdot \mathbf{s}(k), \label{kno1} \\
&\text{invariant-2} \triangleq {{\bf{s}}^\top}(k) \cdot \left( {\bf{P}} - {{\bf{\overline A}}^\top \cdot {\bf{P}} \cdot {{{\bf{\overline A}}}}}\right) \cdot {\bf{s}}(k). \label{kno2} \\
&\text{unknown} \triangleq {\left( {{{\bf{B}}} \cdot {{\bf{a}}_{\text{drl}}}(k) + {\bf{f}}({\bf{s}}(k),{\bf{a}}(k)} \right)^\top} \cdot {\bf{P}} \cdot \left( {{{\bf{B}}} \cdot {{\bf{a}}_{\text{drl}}}(k) + {\bf{f}}({\bf{s}}(k),{\bf{a}}(k))} \right) \nonumber\\
&\hspace{1.75cm} + 2{{\bf{s}}^\top}(k) \cdot {\bf{\overline A}}^\top \cdot {\bf{P}} \cdot \left( {{{\bf{B}}} \cdot {{\bf{a}}_{\text{drl}}}(k) + {\bf{f}}({\bf{s}}(k),{\bf{a}}(k))} \right). \label{unk} 
\end{align}

We note the formulas in \cref{kno1} and \cref{kno2} are named as `\textit{invariants}', because all the terms in their right-hand sides (i.e., designed matrices $\overline{\mathbf{A}}$ and $\bf{P}$) are known to us and their properties are not influenced by training. While the formula in \cref{unk} is defined as `\textit{unknown}' since the terms in its right-hand side are unknown to us due to the unknown model mismatch  $\bf{f}({\bf{s}}(k),{\bf{a}}(k))$ and unknown data-driven action policy ${{\bf{a}}_{\text{drl}}}(k)$ during training. 

Using the definitions in \cref{kno1}-\cref{unk}, the formula in \cref{rewrite} is rewritten as 
\begin{align}
{\mathbf{s}^\top(k+1)}\cdot {\mathbf{P}} \cdot \mathbf{s}(k+1)  = \text{unknown} ~+~ \text{invariant-1}, \nonumber 
\end{align}
by which and recalling \cref{kno1}-\cref{unk}, the two rewards in \cref{ourreward} and \cref{ubcreward} are equivalently rewritten as 
\begin{align}
&\text{Ours:} ~\mathcal{R}( {\mathbf{s}(k),\mathbf{a}_{\text{drl}}}(k)) = \text{invariant-1}  - {\mathbf{s}^\top(k+1)} \cdot \mathbf{P} \cdot \mathbf{s}(k+1) = - \text{unknown}, \label{decp} \\
&\text{CLF Reward:} ~\mathcal{R}( {\mathbf{s}(k),\mathbf{a}_{\text{drl}}}(k)) = \mathbf{s}^\top(k) \cdot \mathbf{P} \cdot \mathbf{s}(k)  - {\mathbf{s}^\top(k+1)} \cdot \mathbf{P} \cdot \mathbf{s}(k+1) \nonumber\\
&\hspace{4.4cm}= \text{invariant-2} - \text{unknown}. \label{mixed} 
\end{align}

Observing the formulas in \cref{decp} and \cref{mixed}, we discover a critical difference between our proposed reward in \cref{ourreward} and the CLF reward in \cref{ubcreward}: our reward decouples invariant and unknown for learning (i.e., data-driven DRL only learn the unknown), while the CLF reward mixes invariant and unknown (i.e., data-driven DRL learn both unknown and invariant).  

\subsubsection{Comparison: Training}
\begin{table*}
\caption{Episode Numbers}
\centering
\begin{tabular}{l c cc cc}
\toprule
 & \multicolumn{2}{c}{Ours}  & \multicolumn{2}{c}{CLF Reward} \\
\cmidrule(lr){2-3} \cmidrule(lr){4-5}
Model ID    & PKN-15  & PKN-20  & PKN-15 & PKN-20  & \\
\midrule
Episode Number: average    &  $351$   &    $348$   &   $376$      &  409     \\
\bottomrule
\end{tabular}
\label{epo}
\end{table*}

We next present the training behavior. We note our reward in \cref{decp} and the CLF reward in \cref{mixed} have different scales. To present fair comparisons, we process the raw episode reward via unionization. Specifically, the proceeded episode reward called the united episode reward, is defined as 
\begin{align}
\overline{\mathcal{R}}(m) \triangleq  \frac{\underline{\mathcal{R}}(m)}{\min_{r=1,2,\ldots}\left\{  \left| \underline{\mathcal{R}}(r) \right| \right\}},
\end{align}
where $\underline{\mathcal{R}}(m)$ denotes the raw episode reward at the episode index $m$. 

We consider two models, PKN-15 and PKN-20, whose network configurations are summarized in \cref{taboo}. Each model has three seeds. For all the training of DRL and Phy-DRL, we set the maximum step number of an episode as 10200, while an episode will terminate if the $\left| \text{CoM height}  \right| \le 0.12 ~\text{m}$ (robot falls). The two models' averages of episode number over the 10$^6$ training steps and the three random seeds are presented in Table \ref{epo}. The smaller average value therein means the more successful running time of the robot or the fewer times the robot falls. Meanwhile, the trajectories of the processed episode rewards are shown in Figure \ref{decmix}, observing which and Table \ref{epo}, we can discover that our proposed reward leads to much more stable and safe training. The root reason can be that our reward decouples invariant and unknown and only lets data-driven DRL learn the unknown defined in \cref{unk}. 

\begin{figure}[http]
	\centering	\includegraphics[width=1.05\columnwidth]{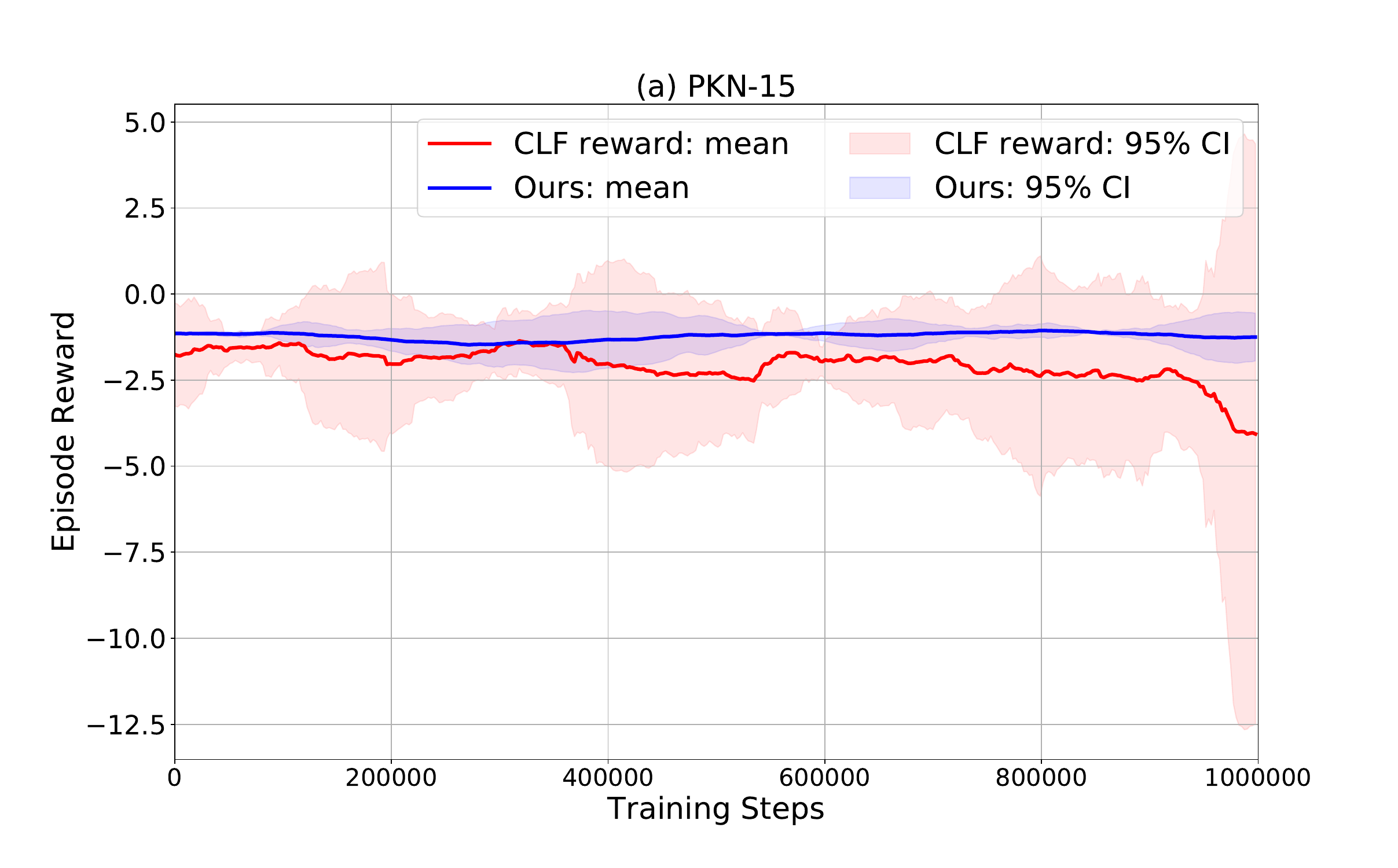}\\
        \includegraphics[width=1.05\columnwidth]{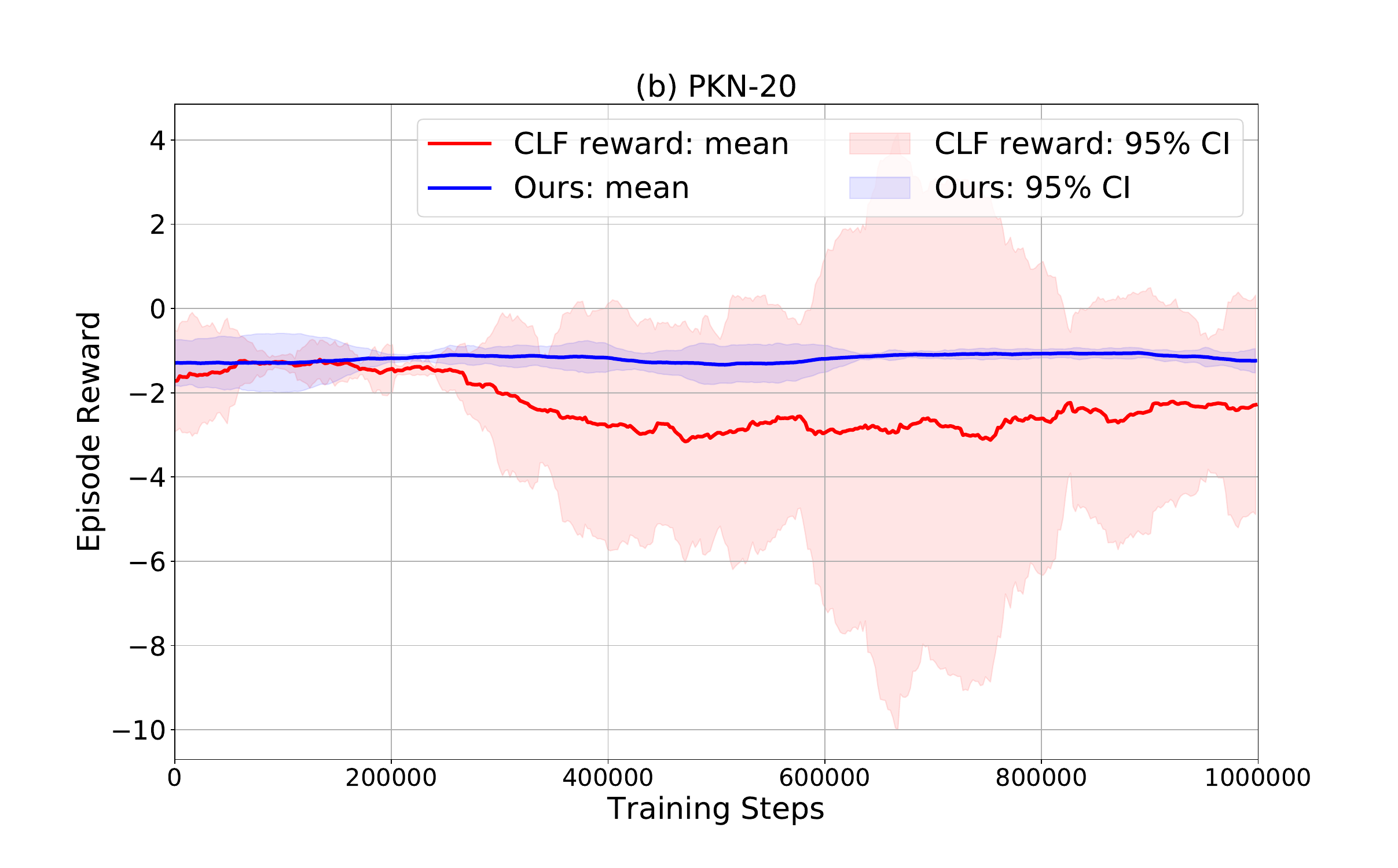}
	\caption{Trajectories of united episode reward: smoothing rate 0.15 and 3 random seeds.}
        \label{decmix}
\end{figure}

\subsection{Training} \label{traincartpoledog}
For the code, we use the Python API for the TensorFlow framework \cite{kingma2014adam} and the Adam optimizer \cite{abadi2016tensorflow} for training. This project is using the settings: 1) Ubuntu 22.04, 2) Python 3.7, 3) TensorFlow 2.5.0, 4) Numpy 1.19.5, and 5) Pybullet.  

For Phy-DRL, the observation of the policy is a 12-dimensional tracking error vector between the robot's state vector and the mission vector. The agent's actions offset the desired positional and lateral accelerations generated from the model-based policy. The computed accelerations are then converted to the low-level motors' torque control. 

The policy is trained using DDPG algorithm \cite{lillicrap2015continuous}. The actor and critic networks are implemented as a Multi-Layer Perceptron (MLP) with four fully connected layers. The output dimensions of the critic network are 256, 128, 64, and 1. The output dimensions of actor networks are 256, 128, 64, and 6. The input of the critic network is the tracking error vector and the action vector. The input of the actor network is the tracking error vector. The activation functions of the first three neural layers are ReLU, while the output of the last layer is the Tanh function for the actor network and Linear for the critic network. 

We let discount factor $\gamma = 0.2$, and the learning rates of critic and actor networks are the same as 0.0003. We set the batch size to 300. The maximum step number of one episode is 10200. Each weight matrix is initialized randomly from a (truncated) normal distribution with zero mean and standard deviation, discarding and re-drawing any samples more than two standard deviations from the mean. We initialize each bias according to the normal distribution with zero mean and standard deviation. 

\subsection{Links: Demonstration Videos} \label{demolinks}

\textbf{Environment 1)}: velocity command: $r_{\text{x}} = 1$ m/s and snow road. A demonstration video is available at 
\url{https://www.youtube.com/watch?v=tspPMbZwfig&t=1s}.

\textbf{Environment ii)}: velocity command: $r_{\text{x}} = 0.5$ m/s and snow road. A demonstration video is available at \url{https://www.youtube.com/watch?v=BK8k92jahfI&t=21s}. 

\textbf{Environment iii)}: velocity command: $r_{\text{x}} = -1.4$ m/s and wet road.  A demonstration video is available at \url{https://www.youtube.com/shorts/gbC-CwqGj78}. 

\textbf{Environment iv)}:velocity command: $r_{\text{x}} = -0.4$ m/s and wet road.  A demonstration video is available at \url{https://www.youtube.com/shorts/UwQYRveLJUs}.

\end{document}